\def\BibTeX{{\rm B\kern-.05em{\sc i\kern-.025em b}\kern-.08em
    T\kern-.1667em\lower.7ex\hbox{E}\kern-.125emX}}
\definecolor{my_gray}{HTML}{DBDBDB}
\renewcommand{\arraystretch}{1.7}
\newtheorem{assumption}{Assumption}
\newtheorem{theorem}{Theorem}
\newtheorem{example}{Example}
\newtheorem{remark}{Remark}
\newtheorem{corollary}{Corollary}
\newtheorem{lemma}{Lemma}
\DeclareMathOperator*{\argmin}{arg\,min}
\newcommand{\x}{\mathbf{x}}
\newcommand{\var}{\mathrm{var}}
\newcommand{\tr}{\text{tr}}
\newcommand{\W}{\mathbf{W}}
\newcommand{\y}{\mathbf{y}}
\newcommand{\J}{\mathbf{J}}
\newcommand{\V}{\mathbf{V}}
\newcommand{\boldtheta}{\boldsymbol{\theta}}
\newcommand{\boldSigma}{\boldsymbol{\Sigma}}
\newcommand{\numIneq}[1]{\stackrel{\footnotesize \mathclap{\mbox{$#1$}}}{\leq}}
\newcommand{\numStrictIneq}[1]{\stackrel{\footnotesize \mathclap{\mbox{$#1$}}}{<}}
\newcommand{\numEq}[1]{\stackrel{\footnotesize \mathclap{\mbox{$#1$}}}{=}}
\begin{document}
\title{Bayesian Cram\'{e}r-Rao Bound Estimation with Score-Based Models}
\author{Evan Scope Crafts, Xianyang Zhang, and Bo Zhao
\thanks{This work was supported in part by the National Institutes of Health under grants NIH R00EB027181. and NIH R01GM144351 and by the National Science Foundation under grant NSF DMS2113359. An earlier version of this paper was presented in part at the 2023 IEEE International Conference on Acoustics, Speech, and Signal Processing
(ICASSP-2023) [DOI: 10.1109/ICASSP49357.2023.10095110]. ({\it Corresponding Author: } Bo Zhao.)

Evan Scope Crafts is with the Oden Institute for Computational Engineering and Sciences, The University of Texas at Austin,  Austin, TX 78712, USA (e-mail: escopec@utexas.edu). 

Xianyang Zhang is with the Department of Statistics, Texas A\&M University, College Station, TX 77843, USA (e-mail: zhangxiany@stat.tamu.edu). 

Bo Zhao is with the Oden Institute for Computational Engineering and Sciences and the Department of Biomedical Engineering, The University of Texas at Austin,  Austin, TX 78712, USA (e-mail: bozhao@utexas.edu). 

}}

\maketitle

\begin{abstract}
The Bayesian Cram{\'e}r-Rao bound (CRB) provides a lower bound on the mean square error of any Bayesian estimator under mild regularity conditions. It can be used to benchmark the performance of statistical estimators, and provides a principled metric for system design and optimization. However, the Bayesian CRB depends on the underlying prior distribution, which is often unknown for many problems of interest. This work introduces a new data-driven estimator for the Bayesian CRB using score matching, i.e., a statistical estimation technique that models the gradient of a probability distribution from a given set of training data. The performance of the proposed estimator is analyzed in both the classical parametric modeling regime and the neural network modeling regime. In both settings, we develop novel non-asymptotic bounds on the score matching error and our Bayesian CRB estimator based on the results from empirical process theory, including classical bounds and recently introduced techniques for characterizing neural networks. We illustrate the performance of the proposed estimator with two application examples: a signal denoising problem and a dynamic phase offset estimation problem in communication systems.
\end{abstract}

\begin{IEEEkeywords}
Cram{\'e}r-Rao bounds, empirical processes, Fisher information, neural networks, score matching, statistical estimation
\end{IEEEkeywords}

\section{Introduction}
\label{sec:intro}

The Cram{\'e}r-Rao bound (CRB) \cite{rao_1945_information, cramer1946} provides a lower bound on the mean square error (MSE) of any unbiased estimator in classical statistical inference. It is widely used to benchmark the performance of statistical estimation (e.g., \cite{Stoica1989, zhao_2014_model, diskin2023learning}) and to guide system design and optimization (e.g., \cite{li_2007_range, zhao_2019_optimal, crafts2022}). As a generalization of the CRB, the Bayesian Cram{\'e}r-Rao bound \cite{bell_2013_detection, trees_2007_bayesian} (also known as the posterior or Van Trees CRB) provides a lower bound on the mean square error (MSE) of Bayesian estimators. The Bayesian CRB exists under mild technical conditions and has found a number of applications, including image registration \cite{Robinson2004, Aguerrebere2016}, dynamic system analysis \cite{Tichavsky1998}, and communication array design \cite{Oktel2005}.

Analytic computation of the Bayesian CRB requires knowledge of the score (i.e.,  the derivative of the log-density) of the prior distribution and likelihood function. While for many applications of interest (e.g., in medical imaging \cite{prince2015medical} and communications \cite{Madhow_2014}), the likelihood function can be determined from physical principles, the prior distribution is often more complex and difficult to model. For example, in various inverse problems in imaging (e.g., image deconvolution or tomographic imaging), it has long been an open problem to model the probability distribution for certain image classes of interest. While generic or hand-crafted priors (e.g., \cite{Bouman1993, Babacan2010}) can be adopted, this could lead to substantial error in the Bayesian CRB calculation due to oversimplification and misspecification of the prior distribution, and, as a result, the estimated Bayesian CRB may not be able to accurately characterize the performance of statistical estimators that incorporate the true prior distribution. Motivated by these challenges and the ever-increasing availability of large data sets \cite{deng2009}, this paper introduces a novel Bayesian CRB estimator with provable guarantees in the setting in which the likelihood function of the data model is known but only samples from the prior distributions are available.

\subsection{Estimation of Cram{\'e}r-Rao Type Bounds}

Existing estimators of Cram\'er-Rao type bounds generally focus on the classical CRB and its inverse, the Fisher information. These estimators can be broadly split into two categories: plug-in approaches that first estimate the likelihood function and form the CRB using this estimate, and direct approaches that estimate the CRB without first estimating the likelihood \cite{duy_fisher_2023}.

Plug-in approaches can use parametric or non-parametric estimators of the likelihood. An example of a non-parametric approach can be found in \cite{har_nonparametric_2016}, which uses density estimation using field theory (DEFT) \cite{kinney2014estimation} to estimate the underlying distribution. The authors demonstrate that this approach yields good performance on a univariate Gaussian distribution. However, the work does not provide any theoretical guarantees, and the approach does not scale well to high dimensions due to the curse of dimensionality inherent to DEFT. 

A parametric plug-in approach was recently introduced by Habi et al \cite{habi_generative_2022, habi_learning_2023}. The main idea behind their approach is the use of a conditional normalizing flow to model the likelihood function. The authors demonstrate experimentally that this approach is accurate for Gaussian and non-Gaussian measurement models and can provide useful information for image denoising and image edge detection tasks. They also provide non-asymptotic bounds on the estimator error. However, their bounds rely on strong assumptions (e.g., the score of the likelihood function is required to be bounded, which does not hold for many widely used likelihood functions such as Gaussians), and the bounds depend on the total variation distance between the learned generative model and the true measurement distribution, which is unknown.     

Direct approaches for CRB estimation exploit the relationship between $f$-divergences and the Fisher information \cite{berisha_empirical_2015, duy_fisher_2023}. Specifically, these approaches require first estimating the $f$-divergence between the original likelihood function and a perturbed version of the likelihood for different choices of perturbations. Here the $f$-divergence can be estimated using the Friedman-Rafskay (FR) multivariate test statistic \cite{berisha_empirical_2015} or a neural network based mutual information estimator \cite{duy_fisher_2023}. A semi-definite program is then solved to estimate the Fisher information from the $f$-divergence estimates. In \cite{duy_fisher_2023}, it was shown that these approaches can also be used for Bayesian CRB estimation.

While the above direct approaches are attractive because they avoid the estimation of the infinite-dimensional density function required by plug-in methods, they come with several practical difficulties. Specifically, both approaches require the estimation of the $f$-divergence for at least $D (D + 1) / 2$ perturbations where $D$ is the dimension of the unknown parameters, which is computationally expensive in high dimensions and, in the case of \cite{duy_fisher_2023}, requires the optimization of a separate neural network for each choice of perturbation. The approaches also do not have provable convergence guarantees. 

\subsection{Our Contributions}

Inspired by recent advances in generative modeling, this work introduces a novel Bayesian CRB estimator with non-asymptotic convergence guarantees. Specifically, the proposed approach employs score matching \cite{hyvarinen_2005_estimation} to directly estimate the score of the unknown prior distribution for the estimation of the Bayesian information matrix and the Bayesian CRB. Score matching is a statistical score estimation technique that minimizes the distance between the scores of the data and model distribution. Compared with the existing plug-in approaches that use density estimation techniques such as maximum likelihood estimation, score matching has the advantage of being independent of the model's normalizing constant, which is often intractable. Note that score matching is also a key component of diffusion models, which have achieved state-of-the-art results in generative modeling \cite{dhariwal_2021_diffusion}. 



We characterize the performance of the proposed Bayesian CRB estimator in two modeling regimes. The first regime corresponds to a classical parametric modeling setting where the number of model parameters is less than the number of samples from the prior distribution. The second regime considers the case where the score model is a neural network. In both regimes, the key challenge lie in characterizing the score matching error. Here we develop novel non-asymptotic error bounds for score matching in both regimes. Our proofs are based on several results from empirical process theory, including the rate theorem \cite{wellner_1996_weak},  Talagrand’s inequality for empirical processes \cite{bartlett_2005_local}, and neural network covering bounds \cite{bartlett2017spectrally}. To the best of our knowledge, this work is the first to provide non-asymptotic bounds on score matching with general neural network models. 



We illustrate the performance of our proposed Bayesian CRB estimator with two application examples: a denoising problem with a Gaussian mixture prior and a dynamic phase-offset estimation problem in communication with a prior given by a Wiener phase-offset evolution.

\subsection{Organization}

The remainder of the paper is organized as follows. Section \ref{sec:back_prelim} introduces the technical backgrounds on the Bayesian CRB and score matching as well as mathematical notations. Section \ref{sec:proposed_approach} presents the problem formulation and the proposed Bayesian CRB estimator. Sections \ref{sec:classicalconverge} and \ref{sec:convergeNN} describe the covergence results for the classical setting and the neural network setting, respectively. Section \ref{sec:num_experiments} contains numerical results from the application examples, followed by the concluding remarks in Section \ref{sec:conclusion}.

\section{Background and Preliminaries}
\label{sec:back_prelim}

\subsection{Notation}

We use bold letters to denote vectors (e.g., $\x$) and capital bold letters to denote matrices and higher-order tensors (e.g., $\mathbf{X}$). The gradient of a scalar-valued function $f(\x): \mathbb{R}^N \to \mathbb{R}$ is denoted $\nabla_{\x} f(\x)$ and is written as an $N \times 1$ vector. The Jacobian of a vector-valued function $F(\x): \mathbb{R}^N \to \mathbb{R}^M$ is written as an $M \times N$ matrix, denoted $\nabla_{\x} F(\x)$, with $[\nabla_{\x} F(\x)]_{i, j} = \partial F(\x)_i / \partial \x_j$. We use $\x_1^N$ as shorthand for the data set $\{ \x_1, \cdots, \x_N \}$.

We use $\mathbf{X}^T$, $\text{tr}(\mathbf{X})$, $\mathbf{X}^{-1}$, and $\mathbf{X}^{\dagger}$ to respectively denote the transpose, trace, inverse, and Moore-Penrose pseudoinverse of a given  matrix $\mathbf{X}$. For square matrices $\mathbf{A}$ and $\mathbf{B}$, the generalized inequality $ \mathbf{A} \succcurlyeq \mathbf{B}$ means that $\mathbf{A} - \mathbf{B}$ is a positive semidefinite matrix.  We use $\| \cdot \|_\sigma$ to denote the spectral norm of a given matrix, while $\| \cdot \|_2$ is used to denote the component-wise two-norm of a given tensor; thus, for a matrix, it corresponds to the Frobenius norm. The expression $\| \cdot \|_{p, q}$ denotes the $(p, q)$ matrix norm, defined by $\| \mathbf{X} \|_{p, q} = \| [\|\mathbf{X}_{:, 1}\|_p, \cdots, \| \mathbf{X}_{:, M} \|_p ]^T \|_q$ for $\mathbf{X} \in \mathbb{R}^{N \times M}$. The bilinear expression $\langle \cdot, \cdot \rangle$ refers to the standard Euclidean inner product.

For random vectors $\x$ and $\y$, we use $p(\x)$ to denote the density function of $\x$, $p(\y | \x)$ to denote the conditional density of $\y$ given $\x$, and $p(\x, \y)$ to denote the joint density function. The term $\mathbb{E}_{\x}$ denotes the expectation with respect to $\x$, $\mathbb{E}_{\x, \y}$ denotes the expectation with respect to the joint distribution of $\x$ and $\y$, and $\mathbb{E}_{\y | \x}$ denotes the conditional expectation of $\y$ given $\x$. The $\epsilon$-covering number of a set $A$ with respect to a given norm $d$ is denoted $N(A, d, \epsilon)$. We use $\log (\cdot)$ to refer to the natural logarithm. 
\subsection{The Bayesian CRB}

Let $\x \in \mathcal{X} \subset \mathbb{R}^D$ be a random parameter vector of interest, let $\y \in \mathcal{Y} \subset \mathbb{R}^K$ be observations from a model with parameters $\x$, and let $\hat{\x}(\y)$ be an estimator of $\x$. Assume the following regularity conditions hold.
\begin{assumption}[Support]\label{assumption:support}
The set $\mathcal{X}$ is either $\mathbb{R}^D$ or an open bounded subset of $\mathbb{R}^D$ with piecewise smooth boundary.
\end{assumption}
\begin{assumption}[Existence of Derivatives] \label{assumption_exist_finite}
The derivatives $[\nabla_{\x} p(\x, \y)]_i$, $i = 1, \dots, D$, exist and are absolutely integrable. 
\end{assumption}
\begin{assumption}[Finite Estimator Bias]
The bias of the estimator, i.e., $B(\x) \triangleq \int \left (\hat{\x}(\y) - \x\right)p(\y \mid \mathbf{\x}) \; d \y $, exists and is finite for all $\x$.
\end{assumption}
\begin{assumption}[Differentiation Under Integral Sign] The probability function $p(\x, \y)$ and estimator $\hat{\x}(\y)$ satisfy
$$
 \nabla_\x \int p(\x, \y) \left [\hat{\x}(\y) - \x \right]^T d\y  = \int \nabla_\x \left ( p(\x, \y) [\hat{\x}(\y) - \x]^T  \right )d\y 
$$
for all $\x$.

\end{assumption}
\begin{assumption}[Error Boundary Conditions]
\label{assumption:error_boundary}
Let $\partial \mathcal{X}$ denote the boundary of $\mathcal{X}$. For any sequence $\{\x_i \}_{i=0}^\infty$, $\x_i \in \mathcal{X}$, such that $\x_i \to \x \in \partial \mathcal{X}$, or any sequence such that $|| \x_i ||_2 \to \infty$, we have that $B(\x_i) p(\x_i) \to 0$.
\end{assumption}
Note that the above assumptions hold for a a wide class of ground-truth prior distributions, including Gaussian, Gaussian mixture, Laplacian, Student's t, and beta (with parameters $\alpha, \beta > 1$) distributions. Further, any differentiable probability distribution with bounded support that does not satisfy Assumption \ref{assumption:error_boundary}  can be made to satisfy the condition by convolving the distribution with a Gaussian of arbitrarily small variance.\footnote{To see this, note that convolving any distribution with a Gaussian yields a distribution with support equal to $\mathbb{R}^D$, and the convolved distribution must satisfy $p(\x_i) \to 0$ as $|| \x_i ||_2 \to \infty$ to be integrable.}

 Under the above assumptions, the Bayesian CRB can be defined via information inequality \cite{bell_2013_detection}:
\begin{equation*}
    \mathbb{E}_{\x, \y} \left [\left (\hat{\x}(\y) - \x \right )(\hat{\x} \left (\y) - \x \right )^T \right ] \succcurlyeq \V_B \triangleq \J_B^{-1}.
    \label{eq:BCRB}
\end{equation*}
Here $\V_B \in \mathbb{R}^{D \times D}$ denotes the Bayesian CRB and $\J_B  \in \mathbb{R}^{D \times D}$ is the Bayesian information, which can be written as 
\begin{equation*}
    \J_B \triangleq \mathbb{E}_{\x, \y} \left [  \nabla_\x \log p(\x, \y)  \nabla_\x \log p(\x, \y)^T \right ].
\end{equation*}
The Bayesian information can be decomposed into a prior-informed term $\J_P$ and a data-informed term $\J_D$, i.e., \cite{bell_2013_detection}:
\begin{equation}
    \J_B = \J_P + \J_D.
    \label{eq:BCRB_decomposition}
\end{equation}
Here $\J_P$ is defined as
\begin{equation*}
\J_P \triangleq \mathbb{E}_{\x} \left [ \nabla_\x \log p(\x) \nabla_\x \log p(\x)^T \right ],
\label{Jp}
\end{equation*}
while $\J_D$ is the average Fisher information associated with the observations, i.e.,
\begin{equation*}
\J_D \triangleq \mathbb{E}_{\x, \y} \left [ \nabla_\x \log p(\y |  \x) \nabla_\x \log p(\y | \x)^T \right ] = \mathbb{E}_{\x} \left [ \J_F(\x) \right ],
\label{Jd}
\end{equation*}
where $$\J_F(\x) \triangleq \mathbb{E}_{\y | \x} \left [\nabla_\x \log p(\y |  \x) \nabla_\x \log p(\y | \x)^T \right ]$$
is the Fisher information. 

\subsection{Score Matching}

Score matching \cite{hyvarinen_2005_estimation} is a statistical method for estimating the (Stein) score of an unknown data distribution $p(\x)$, i.e., $\nabla_\x \log p(\x)$, from a set of i.i.d. samples $\{\x_1, \x_2, \cdots, \x_N \} \subset \mathbb{R}^D$. Originally introduced by Hv\"{a}rinen and Dayan, the technique is based on minimizing the Fisher divergence between the data scores and the scores of a vector-valued model $s(\x; \boldtheta): \mathbb{R}^D \to \mathbb{R}^D$ parameterized by $\boldtheta \in \boldsymbol{\Theta}$:
\begin{equation}
    L(\boldtheta) \triangleq \frac{1}{2} \mathbb{E}_{\x}\left[ \| s(\x; \boldtheta) - \nabla_\x \log p(\x) \|_2^2 \right ].
    \label{eq:fisher_loss}
\end{equation}
Minimizing \eqref{eq:fisher_loss} directly is intractable since $\nabla_\x \log p(\x)$ is unknown. However, under the following regularity conditions, Hv\"{a}rinen proved that
$L(\boldtheta) = J(\boldtheta) + C$, where $C$ is a constant independent of $\boldtheta$ and\begin{equation}
J(\boldtheta) \triangleq \mathbb{E}_{\x} \left [ \tr\left(\nabla_{\x} s(\x; \boldtheta) \right ) + \frac{1}{2} \| s(\x; \boldtheta)\|_2^2 \right ].
\label{eq:sm_loss}
\end{equation}
\begin{assumption}[Regularity of Score Functions]
\label{assumption:regularity}
The score function estimate $s(\x; \boldtheta)$ and the true score function $\nabla_\x \log p(\x)$ are both differentiable with respect to $\x$. They additionally satisfy $\mathbb{E}_{\x} \left [ \|s(\x, \boldtheta)\|_2^2  \right] < \infty$ for all $\boldtheta \in \boldsymbol{\Theta}$ and $\mathbb{E}_{\x} \left [\nabla_\x \| \log p(\x) \|_2^2 \right ] < \infty$.
\end{assumption}
\begin{assumption}[Score Matching Boundary Conditions]
\label{assumption:boundary}
For any $\boldtheta \in \boldsymbol{\Theta}$ and any sequence $\{ \x_i \}_{i=0}^\infty$, $\x_i \in \mathcal{X}$, such that $\x_i \to \x \in \partial \mathcal{X}$, we have that $s(\x_i; \boldtheta) p(\x_i) \to \mathbf{0}$ for all $\boldtheta \in \boldsymbol{\Theta}$. 
\end{assumption}
The proof is based on integration by parts \cite{hyvarinen_2005_estimation}. The following unbiased estimator of \eqref{eq:sm_loss} is then obtained with the samples $\x_1^N$:
\begin{equation}
\hat{J}(\boldtheta; \x_1^N) \triangleq \frac{1}{N} \sum_{i=1}^N  \tr \left(\nabla_{\x} s(\x_i; \boldtheta) \right ) + \frac{1}{2} \| s(\x_i; \boldtheta)\|_2^2.
\label{eq:sm_loss_empirical}
\end{equation}
Note that computing \eqref{eq:sm_loss_empirical} only requires the evaluation of the vector-valued model and its derivative, and under additional regularity conditions, \eqref{eq:sm_loss_empirical} is a consistent estimator of \eqref{eq:sm_loss} \cite{hyvarinen_2005_estimation, song_2020_sliced}.

Since the introduction of score matching, a number of extensions and variants of the technique have been developed. Hv\"{a}rinen extended the approach to binary-valued variables and variables defined over bounded domains \cite{hyvarinen2007some}. Kingma and LeCun introduced a regularized version of score matching \cite{kingma2010regularized}. Song et al developed a scalable version of score matching, known as sliced score matching, for high-dimensional data by using projections to approximate the Jacobian term in the score matching objective \cite{song_2020_sliced}. They also showed that their approach was consistent, and that both the original and sliced score matching objectives lead to asymptotically normal estimators of the score. Song and Ermon's seminal generative modeling work \cite{song_2019_generative} then leveraged both sliced score matching and a denoising version of score matching introduced by Vincent \cite{vincent2011connection} as training objectives for diffusion models, which have achieved state-of-the-art results in generative modeling \cite{dhariwal_2021_diffusion}. 

Theoretical results in the works cited above are limited to asymptotic characterizations of score matching. To the best of our knowledge, the only previous non-asymptotic bounds on score matching were developed by Koehler et al in \cite{koehler2022statistical}, which considers the case where score matching is used to learn an energy-based model, i.e., a deep generative model parameterized up to the constant of parameterization. Koehler et al show that in this setting the expected  Kullback–Leibler (KL) divergence between the data distribution and the learned distribution can be bounded by a term proportional to the Rademacher complexity of the parameterized model class and the log-Sobolev constant, which relates the KL divergence to the score matching loss. This approach provides important insight into score matching performance. However, it does not attempt to bound the Rademacher complexity, and can therefore be viewed as somewhat orthogonal to our score matching analysis, which provides explicit error bounds. 
\section{Proposed Estimator}
\label{sec:proposed_approach}

Given i.i.d. samples $\{ \x_1, \x_2, \cdots, \x_N \} \subset \mathbb{R}^D$ from a prior distribution $p(\x)$ and a known likelihood function $p(\y | \x)$, our task is to obtain an estimate $\hat{\V}_B(\x_1^N)$ of the Bayesian CRB $\V_B$. Here we have assumed Assumptions \ref{assumption:support} - \ref{assumption:error_boundary} hold to make the problem well-defined. This problem is well motivated from a variety of applications (e.g., imaging or communications) in which we have complete knowledge of the measurement process (e.g., from physics), in addition to our prior information through some previously collected training data.

To address the problem, we propose a data-driven estimator of the Bayesian CRB using score matching. The proposed method makes use of the decomposition of $\J_B$ in \eqref{eq:BCRB_decomposition}, and estimates $\J_P$ and $\J_D$ separately. To estimate $\J_P$, we use the following estimator:
\begin{equation}
    \label{eq:Jp_estimator}
    \hat{\J}_P(\x_1^N) = \frac{1}{N} \sum_{i=1}^N s(\x_i; \hat{\boldtheta}_N) s(\x_i; \hat{\boldtheta}_N)^T,
\end{equation}
where $s(\x_i; \hat{\boldtheta}_N)$ is the prior score estimate obtained by minimizing the empirical score matching loss \eqref{eq:sm_loss_empirical}, and we have assumed Assumptions \ref{assumption:regularity} and \ref{assumption:boundary} hold. Note that in \eqref{eq:Jp_estimator}, we use the sample mean to approximate the expectation.

To estimate $\J_D$, we consider the following two cases for the known data model. First, if $\J_F(\x)$ can be computed analytically with the given data model (e.g., a linear/nonlinear Gaussian or Poisson model), we use the following estimator:
\begin{equation}
\label{eq:Jd_estimator1}
\hat{\J}_D(\x_1^N) = \frac{1}{N} \sum_{i=1}^N \J_F(\x_i).
\end{equation}
Note that this encompasses a linear Gaussian model as a special example, in which $\J_F(\x)$ is a constant independent of $\x$. Second, if $\J_F(\x)$ cannot be computed in a closed-form, we construct the following estimator:
\begin{equation}
\label{eq:Jd_estimator2}
\hat{\J}_D(\x_1^N) = \frac{1}{N} \sum_{i=1}^N \hat{\J}_F(\x_i),
\end{equation}
where
\begin{equation*}
\hat{\J}_F(\x_i) = \frac{1}{M}\sum_{j=1}^M \nabla_\x \log p(\y_j^{i} \, | \, \x_i) \nabla_\x \log p(\y_j^{i} \, | \, \x_i)^T.
\label{Jf_approximate}
\end{equation*}
Here we assume that we can obtain i.i.d. samples from the given data model (e.g., using Monte Carlo methods \cite{Christian1999}), i.e., $\y_j^{i} \sim p(\y \, | \, \x_i)$ for $j = 1, \cdots, M$.

Putting together the above estimators for $\J_P$ and $\J_D$, we form the following estimator for the Bayesian information:
\begin{equation*}
\hat{\J}_B(\x_1^N) = \hat{\J}_P(\x_1^N) + \hat{\J}_D(\x_1^N),
\end{equation*}
from which we obtain the Bayesian CRB estimator:
\begin{equation*}
    \hat{\V}_B(\x_1^N) = \hat{\J}_B(\x_1^N)^\dagger,
\end{equation*}
where the pseudoinversion ensures that $\hat{\V}_B(\x_1^N)$ is well defined.
\section{Convergence in the Classical Regime}
\label{sec:classicalconverge}

The goal of this section is to derive non-asymptotic error bounds for our Bayesian information and Bayesian CRB estimators in the classical parametric regime, where the dimension of the score model parameter space is less than the number of available samples from the prior distribution. To this end, we first obtain a generalization bound on the distance between the minimizer of the empirical score matching loss $\hat{\boldtheta}_N$ and the true minimizer $\boldtheta^* \triangleq \argmin_{\boldtheta \in \boldsymbol{\Theta}} L(\boldtheta)$ by leveraging existing score matching bounds and the rate theorem from empirical process theory \cite{wellner_1996_weak}. We then build off of this result and covariance matrix estimation bounds \cite{vershynin_2018_high} to obtain error bounds for the Bayesian CRB estimator.

\subsection{Score Matching Convergence Rate}

In this subsection, we provide convergence rate analysis for score matching, which we will use to prove error bounds for our Bayesian information and Bayesian CRB estimators. We first summarize key results from previous work on the consistency of the score matching estimator \cite{hyvarinen_2005_estimation, song_2020_sliced}. As in \cite{song_2020_sliced}, we require the following three assumptions to hold. 

\begin{assumption}[Compactness]
\label{assumption:compactness}
The parameter space $\boldsymbol{\Theta}$ is compact. 
\end{assumption}

\begin{assumption}[Lipschitz Continuity]
\label{assumption:lipschitz1}
Both $\nabla_{\x} s(\x; \boldtheta)$ and $s(\x; \boldtheta)s(\x; \boldtheta)^T$ are Lipschitz continuous in terms of the Frobenius norm, i.e., for any $\boldtheta_1, \boldtheta_2 \in \boldsymbol{\Theta}$, $\| \nabla_{\x} s(\x; \boldtheta_1) - \nabla_{\x} s(\x; \boldtheta_2) \|_2 \leq L_1(\x)\| \boldtheta_1 - \boldtheta_2 \|_2$ and $\| s(\x; \boldtheta_1)s(\x; \boldtheta_1)^T - s(\x; \boldtheta_2)s(\x; \boldtheta_2)^T \|_2 \leq L_2(\x)\| \boldtheta_1 - \boldtheta_2 \|_2$. Additionally, we require $\mathbb{E}_{\x}[ L_1(\x)^2 ] < \infty$ and $\mathbb{E}_{\x}[ L_2(\x)^2 ] < \infty$.
\end{assumption}

\begin{assumption}[Exact Minimization]
\label{assumption:exact_min1}
The optimized parameters $\hat{\boldtheta}_N$ exactly minimize the empirical loss \eqref{eq:sm_loss_empirical}. 
\end{assumption} 

From these assumptions, we obtain the following uniform bound on the expected error from the empirical approximation of the true score matching objective. The theorem is a straightforward modification of Lemma 3 in \cite{song_2020_sliced}, which proves a similar bound for sliced score matching. 
\begin{theorem}[Uniform convergence of the expected error] 
\label{lemma:uniform_converge}
Assume the score matching regularity conditions (Assumptions \ref{assumption:regularity} and \ref{assumption:boundary}) hold and Assumptions \ref{assumption:compactness} and \ref{assumption:lipschitz1} are satisfied. Then there exists a constant $C_S$
such that 
\begin{equation}
\mathbb{E}_{\x} \left[ \sup_{\boldtheta \in \boldsymbol{\Theta}} \left | \hat{J}(\boldtheta; \x_1^N) - J(\boldtheta) \right |  \right] \leq \frac{C_S}{\sqrt{N}}
\end{equation} 
for all $N$. 
\end{theorem}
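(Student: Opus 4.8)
The plan is to prove the bound by recognizing the supremum as a (centered) empirical process indexed by the parameter class $\boldsymbol{\Theta}$, and then invoking a uniform law of large numbers with a $1/\sqrt{N}$ rate. Write $g(\x;\boldtheta) \triangleq \tr(\nabla_{\x} s(\x;\boldtheta)) + \tfrac12\|s(\x;\boldtheta)\|_2^2$, so that $\hat J(\boldtheta;\x_1^N) = \frac{1}{N}\sum_i g(\x_i;\boldtheta)$ and $J(\boldtheta) = \mathbb{E}_{\x}[g(\x;\boldtheta)]$. Then $\sup_{\boldtheta}|\hat J(\boldtheta;\x_1^N) - J(\boldtheta)| = \sup_{\boldtheta}|(\mathbb{P}_N - \mathbb{P})g(\cdot;\boldtheta)|$ is exactly the sup-norm of an empirical process over the function class $\mathcal{G} = \{g(\cdot;\boldtheta): \boldtheta\in\boldsymbol{\Theta}\}$. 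The standard route is to bound $\mathbb{E}\big[\sup_{\boldtheta}|(\mathbb{P}_N-\mathbb{P})g(\cdot;\boldtheta)|\big] \le \frac{2}{\sqrt N}\,\mathbb{E}\big[\mathcal{R}_N(\mathcal{G})\big]$ via symmetrization (or to quote Lemma~3 of \cite{song_2020_sliced} essentially verbatim, since our $g$ plays the role of their sliced objective and our assumptions mirror theirs).

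The key steps, in order, are as follows. First, I would verify the measurability/integrability preconditions: by Assumption~\ref{assumption:regularity} each $g(\cdot;\boldtheta)$ is integrable, and by Assumption~\ref{assumption:compactness} the index set is compact. Second, I would establish that $\mathcal{G}$ has an integrable envelope and is Lipschitz in $\boldtheta$: by Assumption~\ref{assumption:lipschitz1}, $|g(\x;\boldtheta_1) - g(\x;\boldtheta_2)| \le \big(L_1(\x) + \tfrac12 L_2(\x)\big)\|\boldtheta_1 - \boldtheta_2\|_2$, and $\mathbb{E}_{\x}[(L_1(\x) + \tfrac12 L_2(\x))^2] < \infty$ by the moment conditions in Assumption~\ref{assumption:lipschitz1}. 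Third, since a compact subset of Euclidean space has covering numbers growing only polynomially in $1/\epsilon$, the Lipschitz property transfers this to $\mathcal{G}$: $N(\mathcal{G}, \|\cdot\|_{L^2(\mathbb{P})}, \epsilon) \lesssim (\mathrm{diam}(\boldsymbol{\Theta})/\epsilon)^{\dim\boldsymbol{\Theta}}$ up to the $L^2$-norm of the Lipschitz constant, so $\mathcal{G}$ is $\mathbb{P}$-Donsker. Fourth, I would apply a maximal inequality (Dudley's entropy integral, or directly the Donsker property) to conclude $\mathbb{E}_{\x}\big[\sqrt{N}\sup_{\boldtheta}|(\mathbb{P}_N-\mathbb{P})g(\cdot;\boldtheta)|\big]$ is bounded uniformly in $N$, which yields the claimed constant $C_S$.

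The main obstacle — really the only nontrivial point — is confirming that the Lipschitz and moment hypotheses genuinely deliver a finite entropy integral and hence a clean $1/\sqrt N$ rate without hidden $\log N$ factors or dependence on $N$ in the constant. Concretely, one must check that the envelope $G(\x) = \sup_{\boldtheta}|g(\x;\boldtheta)|$ is square-integrable (it is, since $G(\x) \le |g(\x;\boldtheta_0)| + \mathrm{diam}(\boldsymbol{\Theta})(L_1(\x)+\tfrac12 L_2(\x))$ for any fixed $\boldtheta_0$, and both terms are in $L^2$ by Assumptions~\ref{assumption:regularity} and \ref{assumption:lipschitz1}), and that the bracketing/covering entropy of $\mathcal{G}$ is finite at every scale, which follows from the finite-dimensional compactness of $\boldsymbol{\Theta}$. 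Once these are in place, the maximal inequality gives the bound with $C_S$ depending only on $\dim\boldsymbol{\Theta}$, $\mathrm{diam}(\boldsymbol{\Theta})$, and the $L^2$ norms of $L_1, L_2$, and $g(\cdot;\boldtheta_0)$ — not on $N$. Since this is precisely the content of \cite[Lemma~3]{song_2020_sliced} with the sliced-score objective replaced by the exact score-matching objective \eqref{eq:sm_loss}, the cleanest exposition is to note that the proof there goes through mutatis mutandis, with the Jacobian-trace term $\tr(\nabla_{\x}s(\x;\boldtheta))$ replacing their stochastic projection estimate and Assumption~\ref{assumption:lipschitz1} supplying the requisite Lipschitz control.
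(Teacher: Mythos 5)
Your proposal follows essentially the same route as the paper: establish that $\ell(\x;\boldtheta)$ is Lipschitz in $\boldtheta$ with a square-integrable Lipschitz constant (the paper's Lemma~\ref{lemma:lipschitz}, which picks up an extra $\sqrt{D}$ factor you omit but which is absorbed into $C_S$), then apply symmetrization and Dudley's entropy integral over the compact finite-dimensional parameter space, exactly as in the adaptation of Lemma~3 of \cite{song_2020_sliced} that the paper carries out. The only cosmetic difference is that the paper centers the process at a fixed $\boldtheta'$ and bounds the residual base-point term by the law of large numbers rather than invoking an $L^2$ envelope, but both devices serve the same purpose and carry the same implicit second-moment requirement at the reference point.
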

\begin{proof}[Proof Sketch]
    We first show that $\ell(\x; \boldtheta) \triangleq \text{tr}\left(\nabla_{\x} s(\x; \boldtheta) \right ) + \frac{1}{2} \| s(\x; \boldtheta)\|_2^2$ is Lipschitz continuous in $\boldtheta$ (Lemma \ref{lemma:lipschitz}), which is an analogous result to Lemma 2 in \cite{song_2020_sliced}). Uniform convergence of the expected error then follows using standard techniques from empirical process theory (i.e., the symmetrization trick  and Dudley's entropy integral). See Appendix \ref{apppendix:classical} for the full proof. 
\end{proof}

In this work, we also require the following additional assumption, which ensures that the objective function is well behaved around $\boldtheta^*$. 
\begin{assumption} [Locally Quadratic]
\label{assumption:locally_quadratic}
There exist $\lambda, \eta > 0$ such that 
\begin{equation*}
J(\boldtheta) \geq J(\boldtheta^*) + \lambda \| \boldtheta - \boldtheta^* \|_2^2
\end{equation*}
for all $\boldtheta \in B_{\eta}(\boldtheta^*)$. Further, we have that 
\begin{equation*}
J(\boldtheta) \geq J(\boldtheta^*) + \lambda \eta^2
\end{equation*}
for all $\boldtheta \not \in B_{\eta}(\boldtheta^*)$.
\end{assumption}
Using the above assumption, we can bound the score matching error in parameter space. 

\begin{lemma}
\label{lemma_2}
Assume the score matching regularity conditions (Assumptions \ref{assumption:regularity} and \ref{assumption:boundary}) hold and Assumptions \ref{assumption:compactness}, \ref{assumption:lipschitz1}, and \ref{assumption:exact_min1} are satisfied. Then
    \begin{equation}
        \label{eq:lemma_2}
    \mathbb{P} \left [ \| \hat{\boldtheta}_N - \boldtheta^* \|_2 \geq \eta \right ] \leq \frac{2C_S}{\sqrt{N} \lambda \eta^2}. 
\end{equation}
\end{lemma}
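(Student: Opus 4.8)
The plan is to combine the uniform deviation bound of Theorem~\ref{lemma:uniform_converge} with the "locally quadratic" growth condition of Assumption~\ref{assumption:locally_quadratic} via a standard basin-of-attraction argument, and then convert the resulting bound on $J(\hat\boldtheta_N)-J(\boldtheta^*)$ into a tail bound through Markov's inequality. First I would observe that, since $\hat\boldtheta_N$ exactly minimizes the empirical objective $\hat J(\cdot;\x_1^N)$ (Assumption~\ref{assumption:exact_min1}), we have $\hat J(\hat\boldtheta_N;\x_1^N)\le \hat J(\boldtheta^*;\x_1^N)$. Writing $\Delta_N \triangleq \sup_{\boldtheta\in\boldsymbol{\Theta}}|\hat J(\boldtheta;\x_1^N)-J(\boldtheta)|$, the usual chain of inequalities gives
\begin{equation*}
J(\hat\boldtheta_N) \le \hat J(\hat\boldtheta_N;\x_1^N) + \Delta_N \le \hat J(\boldtheta^*;\x_1^N) + \Delta_N \le J(\boldtheta^*) + 2\Delta_N,
\end{equation*}
so $J(\hat\boldtheta_N) - J(\boldtheta^*) \le 2\Delta_N$ deterministically.

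Next I would use the contrapositive of Assumption~\ref{assumption:locally_quadratic}: if $\|\hat\boldtheta_N - \boldtheta^*\|_2 \ge \eta$, then regardless of whether $\hat\boldtheta_N$ lies inside or outside $B_\eta(\boldtheta^*)$ — on the boundary the first inequality applies with $\|\hat\boldtheta_N-\boldtheta^*\|_2=\eta$, outside the second applies directly — we get $J(\hat\boldtheta_N) \ge J(\boldtheta^*) + \lambda\eta^2$. (One should be mildly careful about whether $B_\eta$ is open or closed, but the two cases of the assumption between them cover $\|\cdot\|_2 \ge \eta$, so this is not a real obstruction.) Combining with the previous display, the event $\{\|\hat\boldtheta_N-\boldtheta^*\|_2\ge\eta\}$ forces $2\Delta_N \ge \lambda\eta^2$, i.e. $\Delta_N \ge \lambda\eta^2/2$. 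Hence
\begin{equation*}
\mathbb{P}\left[\|\hat\boldtheta_N-\boldtheta^*\|_2\ge\eta\right] \le \mathbb{P}\left[\Delta_N \ge \tfrac{\lambda\eta^2}{2}\right].
\end{equation*}

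Finally I would apply Markov's inequality to the nonnegative random variable $\Delta_N$ together with the bound $\mathbb{E}[\Delta_N] \le C_S/\sqrt{N}$ furnished by Theorem~\ref{lemma:uniform_converge}, yielding $\mathbb{P}[\Delta_N \ge \lambda\eta^2/2] \le \frac{2\,\mathbb{E}[\Delta_N]}{\lambda\eta^2} \le \frac{2C_S}{\sqrt{N}\,\lambda\eta^2}$, which is exactly \eqref{eq:lemma_2}. I do not anticipate a serious obstacle here; the argument is short and each ingredient is already in place. The only point requiring a little care is the bookkeeping around the open/closed ball in Assumption~\ref{assumption:locally_quadratic} and making sure the "exact minimization" assumption is invoked correctly so that the empirical suboptimality term vanishes — everything else is the textbook "uniform convergence $+$ curvature $\Rightarrow$ parameter consistency rate" template.
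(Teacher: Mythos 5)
Your argument is correct and follows essentially the same route as the paper's: exact empirical minimization plus the uniform deviation bound of Theorem~\ref{lemma:uniform_converge} controls the excess risk $J(\hat{\boldsymbol{\theta}}_N)-J(\boldsymbol{\theta}^*)$, the locally quadratic condition converts the parameter-distance event into an excess-risk event, and Markov's inequality finishes. The only cosmetic difference is that you apply Markov to the supremum $\Delta_N$ (with $\mathbb{E}[\Delta_N]\le C_S/\sqrt{N}$) whereas the paper applies it directly to $J(\hat{\boldsymbol{\theta}}_N)-J(\boldsymbol{\theta}^*)$ after bounding its expectation by $2C_S/\sqrt{N}$; the two orderings give the identical bound.
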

\begin{proof}
 We have that 
    \begin{align*}
         \mathbb{E}_{\x} \left [ J(\hat{\boldtheta}_N) - J(\boldtheta^*) \right ] &= \mathbb{E}_{\x} \left [ \hat{J}(\hat{\boldtheta}_N; \x_1^N) + (J(\hat{\boldtheta}_N) - \hat{J}(\hat{\boldtheta}_N; \x_1^N)) - \hat{J}(\boldtheta^*; \x_1^N) - (J(\boldtheta^*) - \hat{J}(\boldtheta^*; \x_1^N)) \right] \nonumber \\
        &\numIneq{(i)} \mathbb{E}_{\x} \left [(J(\hat{\boldtheta}_N) - \hat{J}(\hat{\boldtheta}_N; \x_1^N)) -  (J(\hat{\boldtheta}_N) - \hat{J}(\hat{\boldtheta}_N; \x_1^N) \right ] \nonumber \\
        &\leq \mathbb{E}_{\x}\left [ | J(\hat{\boldtheta}_N) - \hat{J}(\hat{\boldtheta}_N; \x_1^N) | \right] + \mathbb{E}_{\x}\left [ | J(\boldtheta^*) - \hat{J}(\boldtheta^*; \x_1^N) | \right] \nonumber \\
        &\numIneq{(ii)} 2 \frac{C_S}{\sqrt{N}},
    \end{align*}
    where $(i)$ holds since $\hat{J}(\hat{\boldtheta}_N; \x_1^N) \leq \hat{J}(\boldtheta^*; \x_1^N)$ by definition of $\hat{\boldtheta}_N$ and $(ii)$ is by Theorem \ref{lemma:uniform_converge}. 
Applying Markov's inequality gives 
\begin{equation*}
    \mathbb{P} \left [ J(\hat{\boldtheta}_N) - J(\boldtheta^*) \geq \lambda \eta^2 \right ] \leq \frac{2C_S}{\sqrt{N} \lambda \eta^2}. 
\end{equation*}
By Assumption \ref{assumption:locally_quadratic}, this implies that 
\begin{equation*}
    \mathbb{P} \left [ \| \hat{\boldtheta}_N - \boldtheta^* \|_2 \geq \eta \right ] \leq \frac{2C_S}{\sqrt{N} \lambda \eta^2}
\end{equation*}
as desired. 
\end{proof}

After a change of variables, \eqref{eq:lemma_2} can be rewritten as 
$$
\mathbb{P} \left [ \| \hat{\boldtheta}_N - \boldtheta^* \|_2 \geq \frac{\sqrt{2 C_S}}{\sqrt{\lambda \epsilon} N^{1/4}} \right ] \leq \epsilon,
$$
so the convergence rate has $N^{1/4}$ dependence on $N$. In the following, we show that this rate can be improved upon. Our proof utilizes the following corollary of Theorem \ref{lemma:uniform_converge}. 
\begin{corollary}
\label{lemma:obj_diff_bound}
Assume all of the previously stated assumptions hold. Then there exists a $C_{\boldtheta} > 0$ such that for any $\delta > 0$, 
\begin{equation*}
\mathbb{E}_{\x} \left [\sup_{\|\boldtheta - \boldtheta^*\|_2 \leq \delta} \left | \Delta_N(\boldtheta) \right |  \right ] \leq \frac{C_{\boldtheta} \sqrt{P} \delta}{\sqrt{N}}
\end{equation*}
for all $N$, where $P$ is the dimension of the parameter space $\boldsymbol{\Theta}$, $C_{\boldtheta}$ is independent of $P$, and $$\Delta_N(\boldtheta) \triangleq (\hat{J}(\boldtheta; \x_1^N) - J(\boldtheta)) - (\hat{J}(\boldtheta^*; \x_1^N) - J(\boldtheta^*)).$$  
\end{corollary}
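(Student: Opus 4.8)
The plan is to recognize $\Delta_N(\boldtheta)$ as a (centered) empirical process indexed by the \emph{localized, anchored} function class
$$\mathcal{G}_\delta \triangleq \left\{ g_{\boldtheta}(\x) \triangleq \ell(\x; \boldtheta) - \ell(\x; \boldtheta^*) \;:\; \|\boldtheta - \boldtheta^*\|_2 \leq \delta \right\}, \qquad \ell(\x;\boldtheta) \triangleq \tr\!\left(\nabla_{\x} s(\x;\boldtheta)\right) + \tfrac{1}{2}\|s(\x;\boldtheta)\|_2^2,$$
so that $\Delta_N(\boldtheta) = \frac{1}{N}\sum_{i=1}^N\!\big(g_{\boldtheta}(\x_i) - \mathbb{E}_{\x}[g_{\boldtheta}(\x)]\big)$. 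The argument then mirrors the proof of Theorem \ref{lemma:uniform_converge} (symmetrization followed by Dudley's entropy integral), but run over $\mathcal{G}_\delta$ in place of the full class. The two features $g_{\boldtheta^*}\equiv 0\in\mathcal{G}_\delta$ (``anchoring'') and $\operatorname{diam}(\mathcal{G}_\delta)=O(\delta)$ are what produce the extra factor $\delta$, while the volumetric covering bound for a Euclidean ball in $\mathbb{R}^P$ is what produces the $\sqrt{P}$ factor.

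First I would record the Lipschitz property of $\ell$: by Lemma \ref{lemma:lipschitz} (equivalently, from Assumption \ref{assumption:lipschitz1} together with $|\tr(\mathbf{A})|\le\sqrt{D}\,\|\mathbf{A}\|_2$), there is $L(\x)$ with $\mathbb{E}_{\x}[L(\x)^2]<\infty$ such that $|\ell(\x;\boldtheta_1)-\ell(\x;\boldtheta_2)|\le L(\x)\|\boldtheta_1-\boldtheta_2\|_2$ for all $\boldtheta_1,\boldtheta_2\in\boldsymbol{\Theta}$. Symmetrization then gives
$$\mathbb{E}_{\x}\!\left[\sup_{\|\boldtheta-\boldtheta^*\|_2\le\delta}|\Delta_N(\boldtheta)|\right] \;\le\; \frac{2}{N}\,\mathbb{E}_{\x,\boldsymbol{\varepsilon}}\!\left[\sup_{\|\boldtheta-\boldtheta^*\|_2\le\delta}\left|\sum_{i=1}^N \varepsilon_i\, g_{\boldtheta}(\x_i)\right|\right],$$
with $\varepsilon_i$ i.i.d.\ Rademacher. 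Conditionally on $\x_1^N$, the process $\boldtheta\mapsto\sum_i\varepsilon_i g_{\boldtheta}(\x_i)$ has sub-Gaussian increments of order $\sqrt{N}\,\|g_{\boldtheta}-g_{\boldtheta'}\|_{L_2(P_N)}$ (empirical $L_2$ norm, $\|g\|_{L_2(P_N)}^2=\frac1N\sum_i g(\x_i)^2$) and vanishes at $\boldtheta=\boldtheta^*$, so Dudley's entropy integral yields
$$\mathbb{E}_{\boldsymbol{\varepsilon}}\!\left[\sup_{\|\boldtheta-\boldtheta^*\|_2\le\delta}\left|\sum_{i=1}^N \varepsilon_i g_{\boldtheta}(\x_i)\right|\right] \;\lesssim\; \sqrt{N}\int_0^{D_N}\sqrt{\log N\!\left(\mathcal{G}_\delta,\|\cdot\|_{L_2(P_N)},u\right)}\,du, \qquad D_N\triangleq\sup_{g\in\mathcal{G}_\delta}\|g\|_{L_2(P_N)}.$$

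Next I would bound the covering number. Writing $\bar{L}_N^2\triangleq\frac1N\sum_{i=1}^N L(\x_i)^2$, the Lipschitz bound gives $\|g_{\boldtheta}-g_{\boldtheta'}\|_{L_2(P_N)}\le\bar{L}_N\|\boldtheta-\boldtheta'\|_2$, hence (i) $D_N\le\bar{L}_N\delta$ and (ii) any $(u/\bar{L}_N)$-net of the ball $\{\boldtheta:\|\boldtheta-\boldtheta^*\|_2\le\delta\}$ induces a $u$-net of $\mathcal{G}_\delta$, so $N(\mathcal{G}_\delta,\|\cdot\|_{L_2(P_N)},u)\le(3\delta\bar{L}_N/u)^P$ for $u\le\bar{L}_N\delta$ by the standard volumetric estimate. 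Substituting and changing variables $u=\bar{L}_N\delta v$,
$$\int_0^{\bar{L}_N\delta}\sqrt{P\,\log\!\left(\tfrac{3\delta\bar{L}_N}{u}\right)}\,du \;=\; \bar{L}_N\delta\sqrt{P}\int_0^1\sqrt{\log(3/v)}\,dv \;\le\; c\,\bar{L}_N\delta\sqrt{P},$$
with $c\triangleq\int_0^1\sqrt{\log(3/v)}\,dv<\infty$ an absolute constant; note the $\sqrt{P}$ pulls out of the integral cleanly, which is what makes the final constant $P$-independent. Combining the three displays and applying Jensen's inequality, $\mathbb{E}_{\x}[\bar{L}_N]\le\sqrt{\mathbb{E}_{\x}[\bar{L}_N^2]}=\sqrt{\mathbb{E}_{\x}[L(\x)^2]}$, gives
$$\mathbb{E}_{\x}\!\left[\sup_{\|\boldtheta-\boldtheta^*\|_2\le\delta}|\Delta_N(\boldtheta)|\right] \;\le\; \frac{C_{\boldtheta}\sqrt{P}\,\delta}{\sqrt{N}}, \qquad C_{\boldtheta}\triangleq C\,\sqrt{\mathbb{E}_{\x}[L(\x)^2]},$$
for an absolute constant $C$, which is independent of $P$ as claimed.

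The point requiring the most care — the ``main obstacle'' — is ensuring the bound is \emph{linear} in $\delta$ and grows only like $\sqrt{P}$: one cannot simply invoke Theorem \ref{lemma:uniform_converge} twice via $|\Delta_N(\boldtheta)|\le|\hat{J}(\boldtheta;\x_1^N)-J(\boldtheta)|+|\hat{J}(\boldtheta^*;\x_1^N)-J(\boldtheta^*)|\le 2C_S/\sqrt{N}$, since that discards precisely the localization on which the fast-rate argument downstream depends. Extracting the $\delta$ factor forces working with the anchored class $\mathcal{G}_\delta$ (so Dudley's integral runs from $0$ to $D_N=O(\delta)$ with no additive diameter term), and extracting only $\sqrt{P}$ (not $P$) forces the sharp volumetric covering bound, whose logarithm is exactly $P\log(\cdot)$. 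A secondary technicality is that the Lipschitz constant $\bar{L}_N$ and the norm $\|\cdot\|_{L_2(P_N)}$ are data-dependent; this is handled by performing the chaining step conditionally on $\x_1^N$ and applying Jensen afterward.
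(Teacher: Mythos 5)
Your proposal is correct and follows essentially the same route as the paper: symmetrize, exploit the Lipschitz continuity of $\ell(\x;\boldtheta)$ in $\boldtheta$ (Lemma \ref{lemma:lipschitz}), and run Dudley's entropy integral over the ball $\|\boldtheta-\boldtheta^*\|_2\le\delta$ with the volumetric covering bound, which yields the $\delta$ and $\sqrt{P}$ factors exactly as you describe. The only cosmetic difference is that the paper symmetrizes via an explicit ghost sample and then defers the chaining step to the proof of Theorem \ref{lemma:uniform_converge}, whereas you carry it out directly on the anchored class $\mathcal{G}_\delta$ (using $g_{\boldtheta^*}\equiv 0$), which is a slightly cleaner presentation of the same argument.
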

\begin{proof}
See Appendix \ref{apppendix:classical}.
\end{proof}

We are now ready to present our score matching convergence rate result. The proof is based on the rate of convergence theorem for empirical processes \cite[Theorem~3.2.5]{wellner_1996_weak}. 

\begin{theorem}
\label{theorem:finite_sample_bound}
Assume all of the previously stated assumptions hold. Then for any $\epsilon > 0$ and all $N \geq N' \triangleq 16 C_S^2/ \epsilon^2 \lambda^2 \eta^4$, with probability at least $1 - \epsilon$ it holds that
\begin{equation*}
\sqrt{N} \lVert \hat{\boldtheta}_N - \boldtheta^* \rVert_2 \leq \frac{8 C_{\boldtheta} \sqrt{P}}{\epsilon \lambda}.
\end{equation*}
\end{theorem}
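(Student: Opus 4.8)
\emph{Proof proposal.} The plan is to place the estimator $\hat{\boldtheta}_N$ in the framework of the rate of convergence theorem for $M$-estimators \cite[Theorem~3.2.5]{wellner_1996_weak}, with population criterion $M(\boldtheta) = -J(\boldtheta)$, empirical criterion $M_N(\boldtheta) = -\hat{J}(\boldtheta; \x_1^N)$, and discrepancy $d(\boldtheta, \boldtheta^*) = \| \boldtheta - \boldtheta^* \|_2$, and then to track the constants through the peeling (slicing) argument in its proof so as to convert the usual $O_{\mathbb{P}}(1)$ conclusion into the stated explicit $1-\epsilon$ tail bound. Three ingredients are needed, and each is already available. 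First, a quantitative consistency statement: Lemma~\ref{lemma_2} together with the hypothesis $N \geq N' = 16 C_S^2/\epsilon^2 \lambda^2 \eta^4$ gives $\mathbb{P}[\| \hat{\boldtheta}_N - \boldtheta^* \|_2 \geq \eta] \leq 2C_S/(\sqrt{N}\lambda\eta^2) \leq \epsilon/2$, so with probability at least $1-\epsilon/2$ the minimizer falls inside $B_\eta(\boldtheta^*)$, where Assumption~\ref{assumption:locally_quadratic} applies. Second, curvature of the population criterion: on $B_\eta(\boldtheta^*)$ we have $J(\boldtheta) - J(\boldtheta^*) \geq \lambda \| \boldtheta - \boldtheta^* \|_2^2$. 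Third, the modulus of continuity of the centered empirical process: Corollary~\ref{lemma:obj_diff_bound} yields $\mathbb{E}_{\x}[\sup_{\| \boldtheta - \boldtheta^* \|_2 \leq \delta} |\Delta_N(\boldtheta)|] \leq C_{\boldtheta}\sqrt{P}\,\delta/\sqrt{N}$, i.e.\ the modulus $\phi_N(\delta) = C_{\boldtheta}\sqrt{P}\,\delta$ is linear in $\delta$ (so $\delta \mapsto \phi_N(\delta)/\delta^\alpha$ is nonincreasing for every $\alpha \in [1,2)$, as the theorem requires), with associated rate $r_N \propto \sqrt{N/P}$ solving the rate equation $r_N^2 \phi_N(1/r_N) \leq \sqrt{N}$.

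To extract the precise constant I would run the slicing device directly with the choice $r_N \triangleq \epsilon\lambda\sqrt{N}/(8 C_{\boldtheta}\sqrt{P})$. Partition $\{\boldtheta : r_N\| \boldtheta - \boldtheta^* \|_2 > 1,\ \| \boldtheta - \boldtheta^* \|_2 \leq \eta\}$ into shells $S_{j,N} = \{\boldtheta : 2^{j-1} < r_N\| \boldtheta - \boldtheta^* \|_2 \leq 2^j\}$, $j \geq 1$. On the event $\{\hat{\boldtheta}_N \in S_{j,N}\} \cap \{\hat{\boldtheta}_N \in B_\eta(\boldtheta^*)\}$, the defining property of $\hat{\boldtheta}_N$ (Assumption~\ref{assumption:exact_min1}) gives $\hat{J}(\hat{\boldtheta}_N; \x_1^N) \leq \hat{J}(\boldtheta^*; \x_1^N)$, whence the chain
\begin{equation*}
\lambda\, 2^{2j-2}/r_N^2 \leq \lambda\| \hat{\boldtheta}_N - \boldtheta^* \|_2^2 \leq J(\hat{\boldtheta}_N) - J(\boldtheta^*) \leq |\Delta_N(\hat{\boldtheta}_N)| \leq \sup_{\| \boldtheta - \boldtheta^* \|_2 \leq 2^j/r_N} |\Delta_N(\boldtheta)|.
\end{equation*}
Markov's inequality and Corollary~\ref{lemma:obj_diff_bound} then bound the probability of this shell by $4 C_{\boldtheta}\sqrt{P}\, r_N/(2^j \lambda\sqrt{N}) = \epsilon/2^{j+1}$ for the chosen $r_N$; summing the geometric series over $j \geq 1$ contributes $\epsilon/2$. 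Adding the $\epsilon/2$ from the consistency step (which also subsumes the case $1/r_N > \eta$, where the first event is empty) gives $\mathbb{P}[r_N\| \hat{\boldtheta}_N - \boldtheta^* \|_2 > 1] \leq \epsilon$, i.e.\ $\| \hat{\boldtheta}_N - \boldtheta^* \|_2 \leq 1/r_N = 8 C_{\boldtheta}\sqrt{P}/(\epsilon\lambda\sqrt{N})$ with probability at least $1-\epsilon$, which rearranges to the claim.

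The arithmetic fixing $N'$ and $r_N$, the application of Markov's inequality, and the summation of the geometric series are all routine. The one place that demands care is the bookkeeping at the interface between the ``near'' regime (the shells $S_{j,N}$ inside $B_\eta(\boldtheta^*)$, controlled by the empirical-process modulus and the quadratic lower bound) and the ``far'' regime $\{\| \hat{\boldtheta}_N - \boldtheta^* \|_2 > \eta\}$ (controlled by Lemma~\ref{lemma_2}): one must check that these two failure events together exhaust $\{r_N\| \hat{\boldtheta}_N - \boldtheta^* \|_2 > 1\}$, that the shells need only be carried out to radius $\eta$, and — the real hinge of the argument — that the sign conventions in $\Delta_N$ indeed deliver the passage $\hat{J}(\hat{\boldtheta}_N; \x_1^N) \leq \hat{J}(\boldtheta^*; \x_1^N) \Rightarrow J(\hat{\boldtheta}_N) - J(\boldtheta^*) \leq |\Delta_N(\hat{\boldtheta}_N)|$ used above. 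I expect this to be more a matter of careful bookkeeping than of any substantive new difficulty, since all the analytic content is already contained in Lemma~\ref{lemma_2} and Corollary~\ref{lemma:obj_diff_bound}.
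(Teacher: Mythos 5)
Your proposal is correct and follows essentially the same route as the paper's own proof: a peeling/shelling argument in the spirit of \cite[Theorem~3.2.5]{wellner_1996_weak}, with Lemma~\ref{lemma_2} handling the far regime ($\epsilon/2$), Assumption~\ref{assumption:locally_quadratic} supplying the quadratic lower bound on each shell, Corollary~\ref{lemma:obj_diff_bound} plus Markov's inequality bounding each shell's probability, and a geometric series contributing the other $\epsilon/2$. Your shell indexing (normalized by $r_N$, starting at $j=1$) differs only cosmetically from the paper's (normalized by $\sqrt{N}$, starting at $j \geq \log_2 T$), and the constants work out identically.
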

\begin{proof}
See Appendix \ref{apppendix:classical}. 
\end{proof}

\subsection{Bayesian CRB Bounds}

In this subsection we prove non-asymptotic error bounds on the proposed estimators of the Bayesian information and Bayesian CRB using the just-proved score matching convergence rate. Our result requires an additional assumption, which ensures the scores of the prior distribution and likelihood function are well-behaved.

\begin{assumption}[Sub-Gaussian Scores]
\label{assumption:sub_gaussian}
The random vectors $\nabla_\x \log p(\x)$ and $\nabla_\x \log p(\y \, | \, \x)$ are sub-Gaussian with norms $C_P$ and $C_D$, respectively, i.e., for any $\mathbf{z} \in \mathbb{S}^{D-1}$ we have that 
\begin{equation*}
   \mathbb{E}_{\x} \left [ e^{\langle \nabla_\x \log p(\x), \mathbf{z} \rangle^2 / C_P^2 } \right ] \leq 2
\end{equation*}
and 
\begin{equation*}
   \mathbb{E}_{\x, \y} \left [ e^{ \langle \nabla_\x \log p(\y \mid\x), \mathbf{z} \rangle^2 / C_D^2  } \right ] \leq 2. 
\end{equation*}
\end{assumption}

Note that this assumption is satisfied for many Bayesian models of interest. For example, any prior distribution with bounded support (e.g., distributions of image data) will satisfy these assumptions if the score is a continuous function of $\x$, as the scores will be bounded and therefore sub-Gaussian. Further, inverse problems with Gaussian or Gaussian mixture priors and likelihood functions (see, e.g., \cite{flam2012mmse, stuart2010inverse}) also satisfy the above assumptions. The following example proves this is the case for a model with a Gaussian prior. 

\begin{example}[Gaussian prior]
\label{example:gaussian}
Consider a model with a Gaussian prior and without loss of generality assume that the prior is zero mean. Letting $\boldSigma$ denote its covariance matrix, for any $\mathbf{z} \in \mathbb{S}^{D-1}$ we have that
\begin{align*}
  \mathbb{E}_{\x} \left [e^{\langle \nabla_{\x} \log p(\x), \mathbf{z} \rangle^2 / C_P^2  } \right ]  &= \mathbb{E}_{\x} \left [e^{\langle -\boldSigma^{-1} \x, \mathbf{z} \rangle^2 / C_P^2  } \right ] \\
  &= C \int_{\x} e^{-\frac{1}{2} \x^T \boldSigma^{-1} \x} e^{\langle -\boldSigma^{-1} \x, \mathbf{z} \rangle^2 / C_P^2} \; d \x  \\
  &\leq C \int_{\x} e^{- \frac{1}{2} \x^T \boldSigma^{-1} \x} e^{\x^T \boldSigma^{-2} \x / C_P^2} \; d \x. 
\end{align*}
where $C$ is the normalizing constant for the Gaussian prior and the inequality holds because $\mathbf{z} \in \mathbb{S}^{D-1}$. Setting $C_P$ so that $C_P^2 = 2\| \boldSigma^{-1} \|_2 \beta$, $\beta > 1$, we have that 
$$
C \int_{\x} e^{- \frac{1}{2} \x^T \boldSigma^{-1} \x} e^{\x^T \boldSigma^{-2} \x / C_P^2} \; d \x \leq C \int_{\x} e^{- \frac{1}{2} \x^T \boldSigma^{-1} \x} e^{\x^T \boldSigma^{-1} \x / (2\beta)} \; d \x = C \int_{\x} e^{- \frac{\beta - 1}{2 \beta} \x^T \boldSigma^{-1} \x} \; d \x,
$$
which is clearly finite. Further scaling $C_P$ so that the integral is less than $2$ shows that $\nabla_{\x} \log p(\x)$ is sub-Gaussian. So Gaussian priors satisfy Assumption \ref{assumption:sub_gaussian}.
\end{example}

In addition to the above assumption, our results make use of the following two lemmas.

\begin{lemma}
\label{lemma:sub_gaussian}
Let $\x \in \mathbb{R}^D$ be a sub-Gaussian random vector, and define
\begin{equation*}
\hat{\boldsymbol{\Sigma}} \triangleq \frac{1}{N} \sum_{i=1}^N \x_i \x_i^T, \quad \boldsymbol{\Sigma} \triangleq \mathbb{E}_{\x} \left [ \x \x^T \right ].
\end{equation*}
Then for all $\epsilon > 0$, with probability at least $1 - \epsilon$ we have
\begin{equation*}
    \lVert \hat{\boldsymbol{\Sigma}} - \boldsymbol{\Sigma} \rVert_2 \leq C_{\boldsymbol{\Sigma}} \| \x \|^2_{\psi_2} m \left(\sqrt{\frac{D - \log(\epsilon)}{N}} \right),
\end{equation*} 
where $\| \cdot \|_{\psi_2}$ is the sub-Gaussian norm, $C_{\boldsymbol{\Sigma}}$ is a universal constant, and $m(t) \triangleq \mathrm{max} \{ t, t^2 \}$.
\end{lemma}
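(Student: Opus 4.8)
The plan is to recognize this as a standard second-moment (covariance) estimation bound and to reproduce the $\epsilon$-net plus sub-exponential Bernstein argument that underlies Theorem~4.6.1 of \cite{vershynin_2018_high}. First I would reduce the spectral-norm deviation to a supremum of scalar deviations over the unit sphere: since $\hat{\boldSigma} - \boldSigma$ is symmetric, $\| \hat{\boldSigma} - \boldSigma \|_2 = \sup_{\mathbf{u} \in \mathbb{S}^{D-1}} | \mathbf{u}^T (\hat{\boldSigma} - \boldSigma) \mathbf{u} |$, and this supremum is controlled up to a factor of $2$ by the maximum over a $1/4$-net $\mathcal{N}$ of $\mathbb{S}^{D-1}$, which may be chosen with $|\mathcal{N}| \leq 9^D$.

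Second, I would fix $\mathbf{u} \in \mathcal{N}$ and set $Z_i \triangleq \langle \x_i, \mathbf{u} \rangle$. By hypothesis each $Z_i$ is sub-Gaussian with $\|Z_i\|_{\psi_2} \leq \| \x \|_{\psi_2}$, so $Z_i^2 - \mathbb{E}[Z_i^2]$ is a centered sub-exponential random variable with sub-exponential norm $\lesssim \| \x \|_{\psi_2}^2$. Since $\mathbf{u}^T (\hat{\boldSigma} - \boldSigma) \mathbf{u} = \frac{1}{N} \sum_{i=1}^N (Z_i^2 - \mathbb{E}[Z_i^2])$, Bernstein's inequality for sums of independent sub-exponential variables yields, for every $s \geq 0$, $\mathbb{P}\left[ | \mathbf{u}^T(\hat{\boldSigma}-\boldSigma)\mathbf{u} | \geq s \right] \leq 2 \exp\left(-c N \min\{ s^2 / \|\x\|_{\psi_2}^4, \, s/\|\x\|_{\psi_2}^2 \}\right)$ for a universal constant $c > 0$.

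Third, I would union-bound over $\mathcal{N}$, so that the failure probability is at most $2 \cdot 9^D \exp(-cN \min\{\cdot,\cdot\})$; setting this equal to $\epsilon$ and solving for $s$ — absorbing $9^D$ into the exponent as a $D\log 9$ term and $\epsilon$ as a $-\log\epsilon$ term — produces the two regimes of Bernstein's inequality. When $(D - \log\epsilon)/N$ is small the quadratic branch dominates and $s \asymp \|\x\|_{\psi_2}^2 \sqrt{(D-\log\epsilon)/N}$; when it is large the linear branch dominates and $s \asymp \|\x\|_{\psi_2}^2 (D - \log \epsilon)/N$. These combine into the single bound $s \leq C_{\boldSigma} \|\x\|_{\psi_2}^2 \, m\!\left(\sqrt{(D-\log\epsilon)/N}\right)$ with $m(t) = \max\{t, t^2\}$, and reinstating the factor of $2$ from the net step and renaming constants gives the claim. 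The main obstacle here is purely bookkeeping — keeping the dependence on $\|\x\|_{\psi_2}$, $D$, and $\epsilon$ clean through the Bernstein bound and the net cardinality, and matching the two deviation regimes precisely to the $m(t) = \max\{t, t^2\}$ form in the statement — as there is no conceptual difficulty beyond what is already packaged in \cite{vershynin_2018_high}.
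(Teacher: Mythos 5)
Your proposal is correct and is essentially the paper's own argument: the paper simply cites Proposition~2.1 of Vershynin's covariance-estimation work, whose proof is exactly the $1/4$-net reduction, sub-exponential Bernstein bound for $\langle \x_i,\mathbf{u}\rangle^2$, and union bound over the $9^D$ net points that you reproduce. The only cosmetic caveat is that this argument controls the spectral norm (which is how the lemma is actually used in Theorems \ref{theorem:bayesianInfo} and \ref{theorem:bayesianInfo2}), not the Frobenius norm that the paper's $\|\cdot\|_2$ notation would literally suggest.
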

\begin{proof}
This is a minor modification of \cite[Proposition~2.1]{vershynin_2012_close}. 
\end{proof}

\begin{lemma}
\label{lemma:expected_outer_bound}
Assume all of the previously stated assumptions hold. Then
\begin{equation}
  \mathbb{E}_{\x} \left [ \| s(\x; \boldtheta^*)s(\x; \boldtheta^*)^T - \nabla_{\x} \log p(\x) \nabla_{\x} \log p(\x)^T \|_\sigma  \right ] \leq 2 L(\boldtheta^*) + 2 \mu_P \sqrt{2 L(\boldtheta^*) },
\end{equation}
where $\mu_P \triangleq \mathbb{E}_{\x} \left [\| \nabla_{\x} \log p(\x) \|_2^2 \right ]^{1/2}$.
\end{lemma}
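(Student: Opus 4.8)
The plan is to exploit the rank-one difference identity together with Cauchy--Schwarz. Write $\mathbf{s} \triangleq s(\x; \boldtheta^*)$ and $\mathbf{g} \triangleq \nabla_{\x} \log p(\x)$, and let $\mathbf{e} \triangleq \mathbf{s} - \mathbf{g}$ denote the pointwise score matching error. The key algebraic step is the factorization $\mathbf{s}\mathbf{s}^T - \mathbf{g}\mathbf{g}^T = \mathbf{s}\mathbf{e}^T + \mathbf{e}\mathbf{g}^T$, which I would verify by direct expansion. Since a rank-one matrix $\mathbf{u}\mathbf{v}^T$ has spectral norm $\|\mathbf{u}\|_2 \|\mathbf{v}\|_2$, applying the triangle inequality for $\|\cdot\|_\sigma$ gives, pointwise in $\x$,
\begin{equation*}
\| \mathbf{s}\mathbf{s}^T - \mathbf{g}\mathbf{g}^T \|_\sigma \leq \|\mathbf{s}\|_2 \|\mathbf{e}\|_2 + \|\mathbf{e}\|_2 \|\mathbf{g}\|_2 \leq \|\mathbf{e}\|_2^2 + 2 \|\mathbf{g}\|_2 \|\mathbf{e}\|_2,
\end{equation*}
where the last inequality uses $\|\mathbf{s}\|_2 \leq \|\mathbf{g}\|_2 + \|\mathbf{e}\|_2$.

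Next I would take expectations over $\x$ of both sides. The first term is handled directly by the definition of the Fisher-divergence loss in \eqref{eq:fisher_loss}: $\mathbb{E}_{\x}[\|\mathbf{e}\|_2^2] = 2 L(\boldtheta^*)$. For the cross term, Cauchy--Schwarz in $L^2(p)$ yields $\mathbb{E}_{\x}[\|\mathbf{g}\|_2 \|\mathbf{e}\|_2] \leq \big(\mathbb{E}_{\x}[\|\mathbf{g}\|_2^2]\big)^{1/2}\big(\mathbb{E}_{\x}[\|\mathbf{e}\|_2^2]\big)^{1/2} = \mu_P \sqrt{2 L(\boldtheta^*)}$. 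Combining these two bounds gives
\begin{equation*}
\mathbb{E}_{\x}\big[\| \mathbf{s}\mathbf{s}^T - \mathbf{g}\mathbf{g}^T \|_\sigma\big] \leq 2 L(\boldtheta^*) + 2 \mu_P \sqrt{2 L(\boldtheta^*)},
\end{equation*}
which is the claimed inequality. Finiteness of $\mu_P$ (needed for Cauchy--Schwarz to be meaningful) follows from Assumption \ref{assumption:regularity}, and finiteness of $\mathbb{E}_{\x}[\|\mathbf{s}\|_2^2]$ likewise ensures $L(\boldtheta^*) < \infty$.

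I do not anticipate a genuine obstacle here: the argument is essentially a two-line application of the rank-one splitting identity plus Cauchy--Schwarz. The only point requiring a little care is making sure the integrability conditions from Assumption \ref{assumption:regularity} justify interchanging expectation with the pointwise bound and invoking Cauchy--Schwarz, and being precise that $\|\mathbf{u}\mathbf{v}^T\|_\sigma = \|\mathbf{u}\|_2\|\mathbf{v}\|_2$ for the outer-product terms.
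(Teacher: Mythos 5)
Your proposal is correct and follows essentially the same route as the paper: the same rank-one splitting $\mathbf{s}\mathbf{s}^T - \mathbf{g}\mathbf{g}^T = \mathbf{s}\mathbf{e}^T + \mathbf{e}\mathbf{g}^T$, the same reduction of $\|\mathbf{s}\|_2$ to $\|\mathbf{e}\|_2 + \|\mathbf{g}\|_2$, and the same Cauchy--Schwarz step for the cross term (the paper bounds the rank-one norms via sub-multiplicativity rather than the exact identity $\|\mathbf{u}\mathbf{v}^T\|_\sigma = \|\mathbf{u}\|_2\|\mathbf{v}\|_2$, but this is immaterial). Your remarks on integrability match the paper's observation that $\mu_P = \sqrt{\mathrm{tr}(\J_P)}$ is finite.
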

\begin{proof}
We have that
\begin{align}
    \label{eq:covarianceScore_triangle}
    \mathbb{E}_{\x} &\left [ \| s(\x; \boldtheta^*)s(\x; \boldtheta^*)^T - \nabla_{\x} \log p(\x) \nabla_{\x} \log p(\x)^T \|_\sigma  \right ] \nonumber \\
    &\hspace{1cm} =  \mathbb{E}_{\x} \left [ \| s(\x; \boldtheta^*)s(\x; \boldtheta^*)^T - s(\x; \boldtheta^*) \nabla_{\x} \log p(\x)^T + s(\x; \boldtheta^*) \nabla_{\x} \log p(\x)^T - \nabla_{\x} \log p(\x) \nabla_{\x} \log p(\x)^T \|_\sigma  \right ] \nonumber  \\
    &\hspace{1cm}\leq \mathbb{E}_{\x} \left [\| s(\x; \boldtheta^*) \left ( s(\x; \boldtheta^*) - \nabla_{\x} \log p(\x) \right )^T \|_\sigma \right ] + \mathbb{E}_{\x} \left [\| \left ( s(\x; \boldtheta^*) - \nabla_{\x} \log p(\x) \right ) \nabla_{\x} \log p(\x)^T \|_\sigma \right ] \nonumber \\
    &\hspace{1cm}= \mathbb{E}_{\x} \left [\| \left ( s(\x; \boldtheta^*) - \nabla_{\x} \log p(\x) + \nabla_{\x} \log p(\x) \right ) \left ( s(\x; \boldtheta^*) - \nabla_{\x} \log p(\x) \right )^T \|_\sigma \right ] \nonumber \\
    &\hspace{1cm} \quad \quad + \mathbb{E}_{\x} \left [\| \left ( s(\x; \boldtheta^*) - \nabla_{\x} \log p(\x) \right ) \nabla_{\x} \log p(\x)^T \|_\sigma \right ] \nonumber \\
    &\hspace{1cm}\leq \mathbb{E}_{\x} \left [\| \left ( s(\x; \boldtheta^*) - \nabla_{\x} \log p(\x) \right ) \left ( s(\x; \boldtheta^*) - \nabla_{\x} \log p(\x) \right )^T \|_\sigma \right ] \nonumber \\
    &\hspace{1cm} \quad \quad + 2\mathbb{E}_{\x} \left [\|\left ( s(\x; \boldtheta^*) - \nabla_{\x} \log p(\x) \right ) \nabla_{\x} \log p(\x)^T \|_\sigma \right ] \nonumber \\
    &\hspace{1cm} \numIneq{(i)} \mathbb{E}_{\x} \left [ \| s(\x; \boldtheta^*) - \nabla_{\x} \log p(\x) \|_2^2 \right ] + 2\mathbb{E}_{\x} \left [ \| s(\x; \boldtheta^*) - \nabla_{\x} \log p(\x) \|_2 \| \nabla_{\x} \log p(\x) \|_2 \right ],
\end{align}
where $(i)$ holds by sub-multiplicity of the matrix norm. Now note that since $\J_P$ is well defined and $\mu_P = \sqrt{\text{tr}(\J_P)}$, $\mu_P$ is also well defined. We can therefore apply the Cauchy–Schwarz inequality to \eqref{eq:covarianceScore_triangle} to obtain 
\begin{align*}
  \mathbb{E}_{\x}& \left [ \| s(\x; \boldtheta^*)s(\x; \boldtheta^*)^T - \nabla_{\x} \log p(\x) \nabla_{\x} \log p(\x)^T \|_\sigma  \right ]  \nonumber \\
  &\hspace{1cm} \leq \mathbb{E}_{\x} \left [ \| s(\x; \boldtheta^*) - \nabla_{\x} \log p(\x) \|_2^2 \right ] +  2\mathbb{E}_{\x} \left [\| \nabla_{\x} \log p(\x) \|_2^2 \right ]^{1/2}  \mathbb{E}_{\x} \left [ \| s(\x; \boldtheta^*) - \nabla_{\x} \log p(\x) \|_2^2 \right ]^{1/2} \nonumber \\
  &\hspace{1cm} = 2 L(\boldtheta^*) + 2 \mu_P \sqrt{2 L(\boldtheta^*) },
\end{align*}
as desired.
\end{proof}

We now introduce the main results of this section. 

\begin{theorem} 
\label{theorem:bayesianInfo}
Assume all of the previously stated assumptions hold. Then for any $\epsilon > 0$ and any $N \geq N'$, the Bayesian information estimator satisfies, with probability at least $1 - \epsilon$,
\begin{equation}
\|\hat{\J}_B(\x_1^N) - \J_B\|_\sigma \leq C_1 \left [ (C_P^2 + C_D^2) \mathrm{m} \left (\sqrt{\frac{D - \log(\epsilon)}{N}} \right ) + \frac{C_{\boldtheta} \sqrt{P}}{\epsilon \lambda \sqrt{N}} \left ( \mu_L + \frac{\sigma_L}{\sqrt{\epsilon N}} \right) + \frac{1}{\epsilon} \left ( L(\boldtheta^*) + \mu_P \sqrt{L(\boldtheta^*)}\right ) \right ],
\label{Bound1}
\end{equation}
where $C_1$ is a universal constant, $\mu_L \triangleq \mathbb{E}_{\x} \left [ L_2(\x) \right]$, and $\sigma_L^2 \triangleq \mathbb{E}_{\x} \left [ (L_2(\x) - \mu_L)^2 \right]$.
\end{theorem}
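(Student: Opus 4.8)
The plan is to start from $\hat{\J}_B = \hat{\J}_P + \hat{\J}_D$ and $\J_B = \J_P + \J_D$, apply the triangle inequality $\|\hat{\J}_B - \J_B\|_\sigma \le \|\hat{\J}_P - \J_P\|_\sigma + \|\hat{\J}_D - \J_D\|_\sigma$, and bound the two summands separately. The data term $\|\hat{\J}_D - \J_D\|_\sigma$ is, in either form of the estimator, an empirical second-moment estimation error for the score vector $\nabla_\x \log p(\y \mid \x)$, which is sub-Gaussian with norm $C_D$ by Assumption \ref{assumption:sub_gaussian}; applying the covariance-estimation bound of Lemma \ref{lemma:sub_gaussian} to this vector yields the $C_D^2\,\mathrm{m}(\sqrt{(D-\log\epsilon)/N})$ contribution to \eqref{Bound1}. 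The rest of the argument concerns the prior term $\|\hat{\J}_P - \J_P\|_\sigma$.

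For the prior term I would insert two intermediate quantities and split into three pieces: (A) the parameter-perturbation error $\frac{1}{N}\sum_i [s(\x_i;\hat{\boldtheta}_N)s(\x_i;\hat{\boldtheta}_N)^T - s(\x_i;\boldtheta^*)s(\x_i;\boldtheta^*)^T]$; (B) the sampling error at the population optimum, $\frac{1}{N}\sum_i s(\x_i;\boldtheta^*)s(\x_i;\boldtheta^*)^T - \mathbb{E}_\x[s(\x;\boldtheta^*)s(\x;\boldtheta^*)^T]$; and (C) the model-approximation bias $\mathbb{E}_\x[s(\x;\boldtheta^*)s(\x;\boldtheta^*)^T] - \J_P$. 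Piece (C) is exactly what Lemma \ref{lemma:expected_outer_bound} bounds, giving $\|(C)\|_\sigma \le 2L(\boldtheta^*) + 2\mu_P\sqrt{2L(\boldtheta^*)}$.

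For piece (A), I would use $\|\cdot\|_\sigma \le \|\cdot\|_2$ and the Lipschitz bound of Assumption \ref{assumption:lipschitz1} to get $\|(A)\|_\sigma \le \bigl(\frac{1}{N}\sum_i L_2(\x_i)\bigr)\|\hat{\boldtheta}_N - \boldtheta^*\|_2$, then control $\|\hat{\boldtheta}_N - \boldtheta^*\|_2 \le 8 C_{\boldtheta}\sqrt{P}/(\epsilon\lambda\sqrt{N})$ via Theorem \ref{theorem:finite_sample_bound} (valid for $N \ge N'$) and $\frac{1}{N}\sum_i L_2(\x_i) \le \mu_L + \sigma_L/\sqrt{\epsilon N}$ via Chebyshev's inequality (the variance being finite since $\mathbb{E}_\x[L_2(\x)^2] < \infty$), which reproduces the middle term of \eqref{Bound1}. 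For piece (B), since $s(\x;\boldtheta^*)$ is not assumed sub-Gaussian, I would write $s(\x;\boldtheta^*) = \nabla_\x \log p(\x) + r(\x)$ with $r(\x) \triangleq s(\x;\boldtheta^*) - \nabla_\x \log p(\x)$ and $\mathbb{E}_\x\|r\|_2^2 = 2L(\boldtheta^*)$, and expand the outer product into $gg^T + gr^T + rg^T + rr^T$. The $gg^T$ piece is handled by Lemma \ref{lemma:sub_gaussian} with norm $C_P$, yielding the $C_P^2\,\mathrm{m}(\sqrt{(D-\log\epsilon)/N})$ term; the remaining pieces are bounded crudely by $\frac{1}{N}\sum_i \|g_i\|_2\|r_i\|_2 + \mathbb{E}_\x\|g\|_2\|r\|_2$ and $\frac{1}{N}\sum_i \|r_i\|_2^2 + \mathbb{E}_\x\|r\|_2^2$, each controlled by Markov's inequality together with Cauchy--Schwarz ($\mathbb{E}_\x\|g\|_2\|r\|_2 \le \mu_P\sqrt{2L(\boldtheta^*)}$), producing the $\frac{1}{\epsilon}(L(\boldtheta^*) + \mu_P\sqrt{L(\boldtheta^*)})$ term. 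Finally I would combine all pieces with a union bound over the $O(1)$ high-probability events, rescaling $\epsilon$ by a constant and absorbing the resulting factors together with $C_S$, $C_{\boldsymbol{\Sigma}}$ and the numerical constants into the universal constant $C_1$ (and noting $N'$ absorbs the constant-factor change).

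The main obstacle is piece (B): because nothing guarantees $s(\x;\boldtheta^*)$ itself is sub-Gaussian, one cannot apply a sharp matrix-concentration bound to $\frac{1}{N}\sum_i s(\x_i;\boldtheta^*)s(\x_i;\boldtheta^*)^T$ directly; the $g+r$ split isolates the sub-Gaussian part but leaves the remainder to be handled with only a second-moment (Markov) argument, which is precisely the source of the $1/\epsilon$ (rather than $\log(1/\epsilon)$) scaling and of the $L(\boldtheta^*)$-dependent terms in \eqref{Bound1}. A secondary, purely bookkeeping difficulty is combining the several high-probability events consistently so that the stated bound holds with probability $1-\epsilon$ while keeping $C_1$ universal.
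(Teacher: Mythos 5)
Your proposal is correct and follows essentially the same route as the paper: a triangle-inequality decomposition, Theorem \ref{theorem:finite_sample_bound} plus the Lipschitz assumption and Chebyshev for the parameter-perturbation term, Lemma \ref{lemma:sub_gaussian} for the sub-Gaussian concentration of both score second moments, Lemma \ref{lemma:expected_outer_bound} plus Markov for the model-mismatch contribution, and a final union bound. The only (immaterial) difference is the choice of intermediate quantity for the prior term --- you insert the population moment $\mathbb{E}_\x[s(\x;\boldtheta^*)s(\x;\boldtheta^*)^T]$ and then split $s = \nabla_\x \log p + r$, whereas the paper inserts the empirical true-score moment $\frac{1}{N}\sum_i \nabla_\x \log p(\x_i)\nabla_\x \log p(\x_i)^T$ and applies Lemma \ref{lemma:expected_outer_bound} with Markov to the resulting empirical difference directly, avoiding the $gg^T + gr^T + rg^T + rr^T$ expansion; both regroupings yield the same three terms in \eqref{Bound1}.
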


\begin{proof}[Proof Sketch]
Through application of the triangle inequality, the Bayesian information estimation error can be bounded by the error in sample-based estimation of $\J_D$ and $\J_P$, a model mismatch term, and error in the score estimates. The sample based estimation error terms can be handled using Lemma \ref{lemma:sub_gaussian}, the model mismatch terms can be handled using Lemma \ref{lemma:expected_outer_bound} and Markov's inequality, and the score matching error term can be bounded using Theorem \ref{theorem:finite_sample_bound}. See Appendix \ref{apppendix:classical} for a full proof. 
\end{proof}

\begin{remark} A bound on the Bayesian information estimation error can still be proven if Assumption \ref{assumption:sub_gaussian} is weakened. For example, if the score vectors are sub-exponential, a weaker version of Theorem \ref{theorem:bayesianInfo} can be proven using covariance estimation bounds for sub-exponential random variables (see, e.g., \cite{vershynin_2012_close}). 
\end{remark}

\begin{theorem}
\label{theorem:BayesianCRB}
Assume all of the previously stated assumptions hold and that the model is well-specified, i.e., $L(\boldtheta^*) = 0$. Then there exists a constant $C_V$ such that for any $\epsilon > 0$ and any $N \geq C_V \mathrm{max} \{D - \log(\epsilon),  1/\epsilon^2 \}$, the Bayesian CRB estimator satisfies, with probability at least $1 - \epsilon$,
\begin{equation}
 \| \hat{\V}_B(\x_1^N) - \V_B \|_\sigma \leq C_2 \| \V_B \|_\sigma^2 \left [  (C_P^2 + C_D^2) \mathrm{m} \left (\sqrt{\frac{D - \log(\epsilon)}{N}} \right ) + \frac{C_{\boldtheta} \sqrt{P}}{\epsilon \lambda \sqrt{N}} \left ( \mu_L + \frac{\sigma_L}{\sqrt{\epsilon N}} \right) \right ],
\end{equation}
where $C_2$ is a universal constant. 
\end{theorem}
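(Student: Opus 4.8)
The plan is to reduce the statement to a deterministic first-order perturbation bound for matrix inversion, fed by the Bayesian information error bound of Theorem~\ref{theorem:bayesianInfo}. Write $B_N(\epsilon)$ for the right-hand side of \eqref{Bound1} with the model-mismatch term deleted; this deletion is legitimate because the well-specification hypothesis $L(\boldtheta^*)=0$ kills the term $\frac{1}{\epsilon}\bigl(L(\boldtheta^*)+\mu_P\sqrt{L(\boldtheta^*)}\bigr)$. Thus, for $N\ge N'$, Theorem~\ref{theorem:bayesianInfo} gives $\|\hat{\J}_B(\x_1^N)-\J_B\|_\sigma\le B_N(\epsilon)$ with probability at least $1-\epsilon$. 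Everything that remains is to convert this into a bound on $\|\hat{\J}_B^\dagger-\J_B^{-1}\|_\sigma$.

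First I would fix the constant $C_V$ so that the hypothesis $N\ge C_V\max\{D-\log\epsilon,\ 1/\epsilon^2\}$ simultaneously forces (a) $N\ge N'=16C_S^2/\epsilon^2\lambda^2\eta^4$, which is $O(1/\epsilon^2)$ and hence absorbed by the $1/\epsilon^2$ term, so Theorem~\ref{theorem:bayesianInfo} applies; and (b) $\|\V_B\|_\sigma\,B_N(\epsilon)\le \tfrac12$. For (b), once $N\ge D-\log\epsilon$ we have $m\bigl(\sqrt{(D-\log\epsilon)/N}\bigr)=\sqrt{(D-\log\epsilon)/N}$, so $B_N(\epsilon)$ is a sum of terms decaying like $\sqrt{(D-\log\epsilon)/N}$, like $1/(\epsilon\sqrt N)$, and like $1/(\epsilon^{3/2}N)$; requiring $N\gtrsim D-\log\epsilon$ controls the first and $N\gtrsim 1/\epsilon^2$ controls the other two, with the implied constant (and hence $C_V$) depending on the fixed quantities $\|\V_B\|_\sigma$, $C_1$, $C_P$, $C_D$, $C_{\boldtheta}$, $P$, $\lambda$, $\mu_L$, $\sigma_L$.

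Next, work on the event of probability at least $1-\epsilon$ where $\|\hat{\J}_B-\J_B\|_\sigma\le B_N(\epsilon)$. Since $\|\V_B\|_\sigma\|\hat{\J}_B-\J_B\|_\sigma\le\tfrac12<1$, the matrix $\J_B^{-1}\hat{\J}_B=I+\J_B^{-1}(\hat{\J}_B-\J_B)$ is invertible, so $\hat{\J}_B$ has full rank and therefore $\hat{\V}_B=\hat{\J}_B^\dagger=\hat{\J}_B^{-1}$. The resolvent identity $\hat{\J}_B^{-1}-\J_B^{-1}=-\J_B^{-1}(\hat{\J}_B-\J_B)\hat{\J}_B^{-1}$ together with the Neumann-series bound $\|\hat{\J}_B^{-1}\|_\sigma\le\|\J_B^{-1}\|_\sigma/(1-\|\J_B^{-1}\|_\sigma\|\hat{\J}_B-\J_B\|_\sigma)$ yields
\[
\|\hat{\V}_B(\x_1^N)-\V_B\|_\sigma \le \frac{\|\V_B\|_\sigma^2\,\|\hat{\J}_B-\J_B\|_\sigma}{1-\|\V_B\|_\sigma\,\|\hat{\J}_B-\J_B\|_\sigma} \le 2\,\|\V_B\|_\sigma^2\,B_N(\epsilon),
\]
where the last step uses $\|\V_B\|_\sigma\|\hat{\J}_B-\J_B\|_\sigma\le\tfrac12$ to lower-bound the denominator by $\tfrac12$. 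Setting $C_2=2C_1$ (keeping $m(\cdot)$ un-simplified in the statement) gives exactly the claimed inequality.

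The main obstacle is the pseudoinverse. Unlike ordinary inversion, $\mathbf{X}\mapsto\mathbf{X}^\dagger$ is discontinuous wherever the rank jumps, so the clean perturbation bound is only available after one argues that $\hat{\J}_B$ is full rank on the good event — which is precisely why the threshold must be strong enough to guarantee $\|\V_B\|_\sigma B_N(\epsilon)\le\tfrac12$, not merely $N\ge N'$. The remaining difficulty is bookkeeping: making $C_V$ depend on $\|\V_B\|_\sigma$ and the other fixed constants so that a single scaling $N\gtrsim\max\{D-\log\epsilon,1/\epsilon^2\}$ both activates Theorem~\ref{theorem:bayesianInfo} and pins the perturbation below $1/2$; this is routine but must be tracked carefully to land the stated form.
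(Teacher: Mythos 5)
Your proposal is correct and follows essentially the same route as the paper: the paper's identity $\hat{\V}_B - \V_B = \hat{\V}_B(\J_B - \hat{\J}_B)\V_B$, expanded and rearranged to produce the factor $(1 - \|\Delta\|_\sigma\|\V_B\|_\sigma)^{-1} \le 2$, is exactly your resolvent-plus-Neumann argument, and both proofs choose $C_V$ so that Theorem~\ref{theorem:bayesianInfo} applies and the perturbation stays below $1/(2\|\V_B\|_\sigma)$, guaranteeing invertibility so the pseudoinverse coincides with the inverse. Your handling of the invertibility of $\hat{\J}_B$ on the good event is, if anything, slightly more explicit than the paper's.
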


\begin{proof}
Assume that $\hat{\J}_B(\x_1^N)$ is invertible, which will be guaranteed later by concentration arguments. Conditioned on this event, $\hat{\V}_B(\x_1^N) - \V_B$ can be rewritten as 
\begin{align*}
\hat{\V}_B(\x_1^N) - \V_B &= \hat{\V}_B(\x_1^N)  ( \J_B - \hat{\J}_B(\x_1^N)) \V_B  \nonumber \\
&= (\hat{\V}_B(\x_1^N) - \V_B + \V_B)  ( \J_B - \hat{\J}_B(\x_1^N)) \V_B \nonumber \\
&= (\hat{\V}_B(\x_1^N) - \V_B)( \J_B - \hat{\J}_B(\x_1^N)) \V_B + \V_B( \J_B - \hat{\J}_B(\x_1^N)) \V_B. 
\end{align*}
Let $\Delta \triangleq \J_B - \hat{\J}_B(\x_1^N)$. Taking the norm of both sides of the above expression and applying the triangle inequality gives 
\begin{equation*}
   \| \hat{\V}_B(\x_1^N) - \V_B  \|_\sigma \leq  \| \hat{\V}_B(\x_1^N) - \V_B\|_\sigma \| \Delta \|_\sigma \| \V_B \|_\sigma + \| \V_B \|_\sigma^2 \| \Delta \|_\sigma,
\end{equation*}
which can be rewritten as 
\begin{equation*}
 \| \hat{\V}_B(\x_1^N) - \V_B  \|_\sigma (1 - \| \Delta \|_\sigma \| \V_B \|_\sigma) \leq \| \V_B \|_\sigma^2 \| \Delta \|_\sigma. 
 \end{equation*}
Note that by Theorem \ref{theorem:bayesianInfo} and the well-specified assumption, for $N \geq  C_V \mathrm{max} \{D - \log(\epsilon),  1/\epsilon^2 \}$, $C_V \geq  16 C_S^2/ \lambda^2 \eta^4$, it holds that 
\begin{equation*}
    \| \Delta \|_\sigma \leq C_1 \left [ (C_P^2 + C_D^2) \sqrt{\frac{1}{C_V}} + \frac{C_{\boldtheta} \sqrt{P}}{\lambda\sqrt{C_V}}\left (\mu_L + \frac{\sigma_L}{\sqrt{C_V}} \right ) \right ].
\end{equation*}
with probability at least $1 - \epsilon$. So we can choose $C_V$ such $\| \Delta \|_\sigma \leq 1/2\|\V_B\|_\sigma$ with probability at least $1 - \epsilon$. In this regime $\hat{\J}_B(\x_1^N)$ is guaranteed to be invertible, and it holds that 
\begin{align*}
   \| \hat{\V}_B(\x_1^N) - \V_B  \|_\sigma &\leq (1 - \| \Delta \|_\sigma \| \V_B \|_\sigma)^{-1}  \| \V_B \|_\sigma^2 \| \Delta \|_\sigma \\ 
   &\leq 2 \| \V_B \|_\sigma^2 \| \Delta \|_\sigma.
\end{align*}
Applying Theorem \ref{theorem:bayesianInfo} with $L(\boldtheta^*) = 0$ to $\Delta$, taking the union bound of the above probabilities, and introducing the universal constant gives the desired result.   
\end{proof}

\begin{remark}
The above theorems establish non-asymptotic bounds for our Bayesian information and CRB estimators. These results rely on a couple of key assumptions, including a locally quadratic objective function (Assumption \ref{assumption:locally_quadratic}) and sub-Gaussian score vectors (Assumption \ref{assumption:sub_gaussian}). However, the consistency of our estimators, i.e., their asymptotic convergence, can be proven without these assumptions. See our early work for the theorem and proof \cite{crafts2023}.
\end{remark}

\section{Convergence with Neural Network Models}
\label{sec:convergeNN}

The Bayesian information and Bayesian CRB bounds in the previous section have $\sqrt{P}$ dependence on the dimension $P$ of the parameter space $\boldsymbol{\Theta}$ and $1/{\sqrt{N}}$ dependence on the sample size $N$. Since modern neural networks are often highly overparameterized, i.e., $N \ll P$, these bounds are inadequate for cases where the score model is a neural network. In this section, we address this limitation by developing bounds for our estimator with neural network score models that have improved dependence on the parameter space dimension. Specifically, we make the following assumption about the form of $s(\x; \boldtheta)$.

\begin{assumption}[Model Structure]
\label{assumption:model}
The parametric model $s(\x; \boldtheta)$ is a feedforward neural network that can be written as follows:
$$
s(\x; \boldtheta) = \sigma_L (\W_L \sigma_{L-1} ( \cdots \sigma_1(\W_1 \x))).
$$
Here $\boldtheta \triangleq \{ \W_i \}_{i=1}^L$ are the neural network weights, i.e., $\W_i \in \mathbb{R}^{d_i \times d_{i-1}}$ with $d_0 = d_L = D$. The $\{ \sigma_{i} \}_{i=1}^L$ are fixed nonlinearities $\sigma_i: \mathbb{R}^{d_i} \to \mathbb{R}^{d_i}$. We assume that they satisfy the following properties:
\begin{enumerate}
    \item They are Lipschitz with Lipschitz constants $\{\rho_i\}^{L}_{i=1}$.
    \item They satisfy $\sigma_i(0) = 0$.
    \item Evaluation of the derivatives of the is $\tau_i$-Lipschitz i.e., $$\left \lVert \,  \nabla_{\x} \sigma_i(\x) |_{\mathbf{s}} - \nabla_{\x} \sigma_i(\x) |_{\mathbf{t}} \,  \right \rVert_2 \leq \tau_i \| \mathbf{s} - \mathbf{t} \|_2$$ for any $\mathbf{s}, \mathbf{t} \in \mathbb{R}^{d_i}$.
    \item The Jacobians are bounded by constants $f_i$ in the spectral norm, i.e.,  $ \| \, \nabla_{\x} \sigma_i(\x) |_{\mathbf{s}}  \, \|_{\sigma} \leq f_i$ for any $\mathbf{s} \in \mathbb{R}^{d_i}$. 
\end{enumerate}
\end{assumption}

Note that many commonly used nonlinearities, such as pointwise Tanh or Softplus functions, satisfy the above conditions.  We also make two additional assumptions about the model and data distribution. 
\begin{assumption}[Bounded Support]
\label{assumption:bounded_support}
The data distribution has bounded support, i.e., there exists a constant $T$ such that $\| \x \|_2 \leq T$ for all $\x \in \mathcal{X}$. 
\end{assumption}

\begin{assumption}[Bounded Weights]
\label{assumption:bounded_weights}
The model weights $\W_i$ lie in spaces $\mathcal{W}_i$ that satisfy $\| \W_i \|_{\sigma} \leq c_i$ and $\|\W_i \|_{2, 1} \leq b_i$ for all $\W_i \in \mathcal{W}_i$ and all $i$. The parameter space $\boldsymbol{\Theta}$ therefore denotes the Cartesian product space $\mathcal{W}_1 \times \mathcal{W}_2 \times \dots \times \mathcal{W}_L$. 
\end{assumption}

Finally, as in Section \ref{sec:classicalconverge}, we make an assumption regarding the optimized model parameters.

\begin{assumption}[Neural Network Optimization]
\label{assumption:exact_min2}
The optimized parameters $\hat{\boldtheta}_N$ satisfy $\hat{J}(\hat{\boldtheta}_N; \x_1^N) \leq \hat{J}(\boldtheta^*; \x_1^N)$.
\end{assumption}

\subsection{Score Matching Convergence Rate}

This subsection provides a bound on $L(\hat{\boldtheta}_N)$, the score matching error, under the above assumptions. To that end, we first show that  $L(\hat{\boldtheta}_N)$ can be related to the empirical Rademacher complexity of the neural network function class. Our result makes use of the following two lemmas. 

\begin{lemma}[Theorem 2.1 in \cite{bartlett_2005_local}]
\label{lemma:empirical_bound}
Let $\mathcal{F}$ be a class of functions that maps $\mathcal{X}$ into $[-M, M]$. Assume that there is some $r \geq 0$ such that for every $f \in \mathcal{F}$, $\var [ f(\x) ] \leq r$. Then for every $\epsilon > 0$, with probability at least $1 - \epsilon$ over the data $\mathbf{X} = [\x_1, \ldots, \x_N]$, we have that 
$$
\sup_{f \in \mathcal{F}} \mathbb{E}_{\x} f(\x) - \frac{1}{N} \sum_{i=1}^N f(\x_i) \leq 6 \mathfrak{R}\left ( \mathcal{F}|_{\mathbf{X}} \right ) + \sqrt{\frac{2 r \log(2 / \epsilon)}{N}} + \frac{32 M \log(2 / \epsilon) }{3N},
$$
where $\mathcal{F}|_{\mathbf{X}} \triangleq \{ [f(\x_1), \dots, f(\x_N)] \mid f \in \mathcal{F} \}$ and $\mathfrak{R}\left (\mathcal{F}|_{\mathbf{X}} \right )$ is the empirical Rademacher complexity of $\mathcal{F}$, i.e., 
$$
\mathfrak{R}\left (\mathcal{F}|_{\mathbf{X}} \right ) \triangleq \frac{1}{N}  \mathbb{E}_{\epsilon} \left [\sup_{f \in \mathcal{F}} \sum_{i=1}^N \epsilon_i f(\x_i) \right]
$$
where the $\epsilon_i$ are independent Rademacher random variables.  
\end{lemma}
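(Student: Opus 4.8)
Since this lemma is quoted verbatim from \cite[Theorem~2.1]{bartlett_2005_local}, within the paper it suffices to invoke that reference; but were one to prove it from first principles, the plan would run as follows. Write $Z \triangleq \sup_{f \in \mathcal{F}} \big( \mathbb{E}_{\x}[f(\x)] - \frac{1}{N}\sum_{i=1}^N f(\x_i) \big)$ for the one-sided uniform deviation. The argument rests on three pieces, assembled at the end: a concentration inequality pinning $Z$ to its mean, a symmetrization step converting $\mathbb{E}[Z]$ into a Rademacher complexity, and a second concentration step replacing the expected Rademacher complexity by its empirical version $\mathfrak{R}(\mathcal{F}|_{\mathbf{X}})$.

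First I would apply Talagrand's concentration inequality for suprema of bounded empirical processes, in the refined one-sided form due to Bousquet, using the two hypotheses $|f|\le M$ and $\var[f(\x)]\le r$ uniformly over $\mathcal{F}$. This yields, with probability at least $1-\epsilon/2$, a bound of the form $Z \le c_1\,\mathbb{E}[Z] + \sqrt{\tfrac{2 r \log(2/\epsilon)}{N}} + \tfrac{c_2 M \log(2/\epsilon)}{N}$, in which the genuinely data-dependent variance term $\sqrt{r\log/N}$ already has the right shape and only $\mathbb{E}[Z]$ remains to be controlled. Second, the standard symmetrization inequality gives $\mathbb{E}[Z] \le 2\,\mathbb{E}_{\mathbf{X}}\!\big[\mathfrak{R}(\mathcal{F}|_{\mathbf{X}})\big]$. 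Third, since replacing one sample changes $\mathfrak{R}(\mathcal{F}|_{\mathbf{X}})$ by at most $2M/N$, McDiarmid's bounded-difference inequality (or, for sharper constants, Talagrand's inequality again, now exploiting the small variance of the Rademacher average) gives $\mathbb{E}_{\mathbf{X}}\!\big[\mathfrak{R}(\mathcal{F}|_{\mathbf{X}})\big] \le \mathfrak{R}(\mathcal{F}|_{\mathbf{X}}) + (\text{lower-order deviation})$ with probability at least $1-\epsilon/2$.

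Then I would chain the three estimates: substitute the Rademacher-concentration bound into the symmetrization bound, feed the result into the Talagrand bound for $Z$, split the cross term $\sqrt{M\,\mathbb{E}[Z]\log(2/\epsilon)/N}$ using the elementary inequality $\sqrt{ab}\le \tfrac12(\beta a + \beta^{-1}b)$ so the copy of $\mathbb{E}[Z]$ it hides can be moved to the left-hand side, and finally solve the resulting affine inequality in $Z$. A union bound over the two failure events — the Talagrand bound and the Rademacher concentration, each allotted probability $\epsilon/2$ — is what turns $\log(1/\epsilon)$ into the $\log(2/\epsilon)$ appearing in the statement. The qualitative structure is completely standard; the part I expect to be fiddly is the constant bookkeeping: recovering exactly the coefficient $6$ on $\mathfrak{R}(\mathcal{F}|_{\mathbf{X}})$, the $\sqrt{2}$ inside the variance term, and the $32/3$ in the $M\log(2/\epsilon)/N$ term requires the sharp Bousquet form of Talagrand's inequality, a careful choice of the free parameters in the arithmetic–geometric splits, and ensuring the lower-order Rademacher-concentration deviation is absorbed cleanly instead of producing a spurious extra term.
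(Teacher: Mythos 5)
Your proposal matches the paper exactly on the only point that matters here: the paper gives no proof of this lemma, importing it verbatim as Theorem~2.1 of the cited reference, and you correctly note that invoking that citation suffices. Your supplementary sketch (Bousquet's one-sided refinement of Talagrand's inequality, symmetrization to the expected Rademacher complexity, concentration of the empirical Rademacher complexity, and a union bound yielding the $\log(2/\epsilon)$) is the standard argument and is in fact how the cited work establishes the result, so no gap to report.
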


\begin{lemma}
\label{lemma:nn_model_bound}
Assume Assumptions \ref{assumption:model}, \ref{assumption:bounded_support}, and \ref{assumption:bounded_weights} are satisfied. Then for any $\x \in \mathcal{X}$ and any $\boldtheta \in \boldsymbol{\Theta}$, 
$$
| \ell(\x; \boldtheta) | \leq B, \quad B \triangleq \frac{T^2}{2} \prod_{i=1}^L \rho_i^2 c_i^2 + \prod_{i=1}^L b_i f_i.
$$
\end{lemma}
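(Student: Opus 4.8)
The plan is to bound the two summands of $\ell(\x;\boldtheta) \triangleq \tr\left(\nabla_\x s(\x;\boldtheta)\right) + \frac{1}{2}\|s(\x;\boldtheta)\|_2^2$ separately and then combine them by the triangle inequality, so that $|\ell(\x;\boldtheta)| \le |\tr(\nabla_\x s(\x;\boldtheta))| + \frac{1}{2}\|s(\x;\boldtheta)\|_2^2$; the two pieces will match the two terms of $B$.

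For the norm term I would propagate the assumed properties of the nonlinearities through the network. Writing $\x_0 \triangleq \x$ and $\x_i \triangleq \sigma_i(\W_i\x_{i-1})$ so that $s(\x;\boldtheta) = \x_L$, the Lipschitz property of $\sigma_i$ together with $\sigma_i(0)=0$ gives $\|\sigma_i(\mathbf{v})\|_2 \le \rho_i\|\mathbf{v}\|_2$, which combined with $\|\W_i\|_\sigma \le c_i$ (Assumption \ref{assumption:bounded_weights}) yields $\|\x_i\|_2 \le \rho_i c_i\|\x_{i-1}\|_2$. Iterating this bound and using $\|\x\|_2 \le T$ (Assumption \ref{assumption:bounded_support}) gives $\|s(\x;\boldtheta)\|_2 \le T\prod_{i=1}^L\rho_i c_i$, hence $\frac{1}{2}\|s(\x;\boldtheta)\|_2^2 \le \frac{T^2}{2}\prod_{i=1}^L\rho_i^2 c_i^2$.

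For the trace term I would first apply the chain rule to write $\nabla_\x s(\x;\boldtheta) = \mathbf{J}_L\W_L\mathbf{J}_{L-1}\W_{L-1}\cdots\mathbf{J}_1\W_1$, where $\mathbf{J}_i$ is the Jacobian of $\sigma_i$ evaluated at $\W_i\x_{i-1}$, so $\|\mathbf{J}_i\|_\sigma \le f_i$ by Assumption \ref{assumption:model}. I would then bound the trace by the sum of the absolute values of its diagonal entries and each diagonal entry by Cauchy--Schwarz: $\left|[\nabla_\x s(\x;\boldtheta)]_{k,k}\right| = \left|\mathbf{e}_k^T\,\nabla_\x s(\x;\boldtheta)\,\mathbf{e}_k\right| \le \left\|\nabla_\x s(\x;\boldtheta)\,\mathbf{e}_k\right\|_2$. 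Peeling off the factors of this product vector from the outside in, each $\mathbf{J}_i$ contributes at most $f_i$, each $\W_i$ with $i\ge 2$ contributes at most $\|\W_i\|_\sigma$, and the innermost factor is $\|\W_1\mathbf{e}_k\|_2 = \|[\W_1]_{:,k}\|_2$; summing over $k = 1,\dots,D$ converts this last factor into $\sum_k\|[\W_1]_{:,k}\|_2 = \|\W_1\|_{2,1} \le b_1$. The key step is then to replace each remaining $\|\W_i\|_\sigma$ by the larger quantity $\|\W_i\|_{2,1} \le b_i$, which is legitimate since $\|\W_i\|_\sigma \le \|\W_i\|_2 \le \|\W_i\|_{2,1}$; this turns the product into $\prod_{i=1}^L b_i f_i$. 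Adding the two estimates yields $|\ell(\x;\boldtheta)| \le B$, uniformly over $\x \in \mathcal{X}$ and $\boldtheta \in \boldsymbol{\Theta}$.

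The only subtle point — the main ``obstacle,'' such as it is — is arranging the trace estimate so that every weight matrix enters through its $\|\cdot\|_{2,1}$ norm rather than a mixture of $\|\cdot\|_{2,1}$ and spectral norms, which is exactly what appears in $B$; the elementary inequality $\|\W\|_\sigma \le \|\W\|_{2,1}$ handles this cleanly, and everything else is a routine layerwise propagation of Lipschitz and operator-norm bounds.
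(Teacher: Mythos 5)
Your proof is correct and follows essentially the same route as the paper: the same splitting of $\ell$ into the squared-norm and trace terms, the same layerwise Lipschitz/spectral-norm propagation giving $\|s(\x;\boldtheta)\|_2 \le T\prod_i \rho_i c_i$, and the same mixing of spectral bounds on the Jacobians with $(2,1)$-norm bounds on the weights to get $\prod_i b_i f_i$ for the trace. The only cosmetic difference is in the trace estimate, where you use Cauchy--Schwarz on the diagonal entries $\mathbf{e}_k^T\nabla_\x s(\x;\boldtheta)\mathbf{e}_k$ while the paper uses $|\tr(\mathbf{A}\mathbf{B})|\le\|\mathbf{A}\|_2\|\mathbf{B}\|_2$ together with submultiplicativity of the Frobenius norm and $\|\W\|_2\le\|\W\|_{2,1}$; both are routine and yield the identical bound.
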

\begin{proof}
    See Appendix \ref{appendix:nn}.
\end{proof}

We now introduce the first major result of this section. 

\begin{theorem}
\label{theorem:nn_reduction2rademacher}
    Assume Assumptions \ref{assumption:model}-\ref{assumption:exact_min2} and the regularity conditions in Section \ref{sec:back_prelim} hold. Then 
    \begin{equation}
\label{eq:G_empirical_bound}
L(\hat{\boldtheta}_N) \leq L(\boldtheta^*)  + \mathfrak{R}\left (\mathcal{G}|_{\mathbf{X}} \right ) + \sqrt{\frac{8 B^2 \log(2/\epsilon)}{N}} + \frac{64 B \log(2 / \epsilon) }{3N}
\end{equation}
with probability at least $1 - \epsilon$ over $\{ \x_i \}_{i=1}^N$, where $\mathcal{G} \triangleq \{\ell(\cdot; \boldtheta) - \ell(\cdot; \boldtheta^*) \mid \boldtheta \in \boldsymbol{\Theta} \} $ and, as in the previous section, $\boldtheta^* \triangleq \argmin_{\boldtheta \in \boldsymbol{\Theta}} L(\boldtheta)$. 
\end{theorem}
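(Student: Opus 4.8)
The plan is to rewrite $L(\hat{\boldtheta}_N) - L(\boldtheta^*)$ as (a bound on) the supremum of a centered empirical process indexed by the function class $\mathcal{G}$, and then invoke the local Rademacher deviation bound of Lemma~\ref{lemma:empirical_bound}, whose boundedness hypotheses are furnished by Lemma~\ref{lemma:nn_model_bound}. The first step is to pass from $L$ to the computable objective $J$: the regularity conditions of Section~\ref{sec:back_prelim} (in particular Assumptions~\ref{assumption:regularity} and~\ref{assumption:boundary}) give Hyv\"{a}rinen's identity $L(\boldtheta) = J(\boldtheta) + C$ with $C$ independent of $\boldtheta$, so $L(\hat{\boldtheta}_N) - L(\boldtheta^*) = J(\hat{\boldtheta}_N) - J(\boldtheta^*)$, where $J(\boldtheta) = \mathbb{E}_{\x}[\ell(\x;\boldtheta)]$, $\hat{J}(\boldtheta;\x_1^N) = \frac1N\sum_{i=1}^N \ell(\x_i;\boldtheta)$, and $\ell(\x;\boldtheta) = \tr(\nabla_{\x}s(\x;\boldtheta)) + \tfrac12\|s(\x;\boldtheta)\|_2^2$.

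Next I would peel off the empirical-minimization term. Writing
\[
J(\hat{\boldtheta}_N) - J(\boldtheta^*) = \Big[(J-\hat{J})(\hat{\boldtheta}_N) - (J-\hat{J})(\boldtheta^*)\Big] + \Big[\hat{J}(\hat{\boldtheta}_N;\x_1^N) - \hat{J}(\boldtheta^*;\x_1^N)\Big],
\]
the second bracket is nonpositive by Assumption~\ref{assumption:exact_min2}, while the first bracket equals $\mathbb{E}_{\x}[g(\x)] - \frac1N\sum_{i=1}^N g(\x_i)$ for $g = \ell(\cdot;\hat{\boldtheta}_N) - \ell(\cdot;\boldtheta^*) \in \mathcal{G}$. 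Since $\hat{\boldtheta}_N$ is data-dependent, I would then dominate this quantity by $\sup_{g\in\mathcal{G}}\big(\mathbb{E}_{\x}[g(\x)] - \frac1N\sum_i g(\x_i)\big)$.

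To apply Lemma~\ref{lemma:empirical_bound} to this supremum I need a uniform range bound and a variance bound on $\mathcal{G}$. Lemma~\ref{lemma:nn_model_bound} gives $|\ell(\x;\boldtheta)| \le B$ for all $\x\in\mathcal{X}$ and $\boldtheta\in\boldsymbol{\Theta}$, hence every $g\in\mathcal{G}$ maps $\mathcal{X}$ into $[-2B,2B]$ and $\var[g(\x)] \le \mathbb{E}[g(\x)^2] \le 4B^2$. Applying Lemma~\ref{lemma:empirical_bound} with $M = 2B$ and $r = 4B^2$ then bounds the supremum by the Rademacher term $\mathfrak{R}(\mathcal{G}|_{\mathbf{X}})$ together with the concentration terms $\sqrt{8B^2\log(2/\epsilon)/N}$ and $64B\log(2/\epsilon)/(3N)$, and chaining this with the two preceding displays yields~\eqref{eq:G_empirical_bound}.

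I do not expect a genuine obstacle: this is a standard uniform-convergence template, and the substantive content is packaged into Lemma~\ref{lemma:nn_model_bound}, the deterministic propagation of the spectral- and $(2,1)$-norm bounds of Assumptions~\ref{assumption:model}--\ref{assumption:bounded_weights} through the network and its Jacobian in order to control $|\ell(\x;\boldtheta)|$. The one point that requires care in the present argument is the transition in the second step from a pointwise statement about the random function $g = \ell(\cdot;\hat{\boldtheta}_N)-\ell(\cdot;\boldtheta^*)$ to the uniform supremum over $\mathcal{G}$; this is exactly what forces the use of the (local) Rademacher machinery of Lemma~\ref{lemma:empirical_bound} rather than a bare concentration inequality for a fixed function.
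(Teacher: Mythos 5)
Your proposal is correct and follows essentially the same route as the paper's proof: the identical decomposition of $L(\hat{\boldtheta}_N)-L(\boldsymbol{\theta}^*)$ into the empirical-minimization term (nonpositive by Assumption~\ref{assumption:exact_min2}) plus the centered empirical process over $\mathcal{G}$, followed by the same application of Lemma~\ref{lemma:empirical_bound} with $M=2B$ and $r=4B^2$ supplied by Lemma~\ref{lemma:nn_model_bound}. No gaps.
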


\begin{proof}
First note that by Assumption \ref{assumption:exact_min2}, $\hat{J}(\hat{\boldtheta}_N; \x_1^N) \leq \hat{J}(\boldtheta^*; \x_1^N)$, so we have that 
\begin{align}
L(\hat{\boldtheta}_N) &= J(\hat{\boldtheta}_N) + C \nonumber \\
&\leq J(\hat{\boldtheta}_N)- \hat{J}(\hat{\boldtheta}_N; \x_1^N) + \hat{J}(\boldtheta^*; \x_1^N)  + C  \nonumber \\
&= J(\hat{\boldtheta}_N)- \hat{J}(\hat{\boldtheta}_N; \x_1^N) + \hat{J}(\boldtheta^*; \x_1^N) - J(\boldtheta^*) + J(\boldtheta^*)  + C \nonumber \\
&= J(\hat{\boldtheta}_N)- \hat{J}(\hat{\boldtheta}_N; \x_1^N) + \hat{J}(\boldtheta^*; \x_1^N) - J(\boldtheta^*) + L(\boldtheta^*).
\label{eq:initialBound}
\end{align}
The term $L(\boldtheta^*)$ quantifies the model mismatch. The remaining terms define an empirical process
$$
J(\hat{\boldtheta}_N)- \hat{J}(\hat{\boldtheta}_N; \x_1^N) + \hat{J}(\boldtheta^*; \x_1^N) - J(\boldtheta^*) = \mathbb{E}_{\x} \left(\ell(\cdot; \hat{\boldtheta}_N) - \ell(\cdot; \boldtheta^*)\right) - \frac{1}{N} \sum_{i=1}^N \left(\ell(\x_i; \hat{\boldtheta}_N) - \ell(\x_i; \boldtheta^*)\right),
$$
indexed by $\hat{\boldtheta}_N$ over the function class $\mathcal{G}$. Note that for any $g \in \mathcal{G}$ and $\x \in \mathcal{X}$, by Lemma \ref{lemma:nn_model_bound} we have that 
\begin{equation*}
\label{eq:gx_bound2}
|g(\x)| \leq 2  \sup_{\boldtheta \in \boldsymbol{\Theta}} \left|\ell(\x; \boldtheta)  \right | \leq 2 B.
\end{equation*}
Further it holds that 
$$
\var(g(\x)) \leq \sup_{\x_1, \x_2 \in \mathcal{X}}  \frac{1}{4} |g(\x_1) - g(\x_2) |^2 \leq \sup_{\x \in \mathcal{X}} | g(\x) |^2 \leq 4 B^2.
$$
Applying Lemma \ref{lemma:empirical_bound} with $r = 4 B^2$ and $M = 2B$ to the function class $\mathcal{G}$ therefore gives
\begin{equation*}
\label{eq:empirical_bound_app}
 \mathbb{E}_{\x} \left(\ell(\cdot; \hat{\boldtheta}_N) - \ell(\cdot; \boldtheta^*)\right) - \frac{1}{N} \sum_{i=1}^N \left(\ell(\x_i; \hat{\boldtheta}_N) - \ell(\x_i; \boldtheta^*)\right) \leq \mathfrak{R}\left (\mathcal{G}|_{\mathbf{X}} \right ) + \sqrt{\frac{8 B^2 \log(2 / \epsilon)}{N}} + \frac{64 B \log(2 / \epsilon) }{3N}
\end{equation*}
with probability at least $1 - \epsilon$ over $\{ \x_i \}_{i=1}^N$. Incorporating the above result into \eqref{eq:initialBound} completes the proof. 
\end{proof}

The above theorem reduces the problem to bounding the empirical Rademacher complexity of $\mathcal{G}$. The following lemma, which is a straightforward generalization of Lemma A.5 in \cite{bartlett2017spectrally}, can be used to relate the empirical Rademacher complexity to the covering number of the function class. 

\begin{lemma}[Lemma A.5 in \cite{bartlett2017spectrally}]
\label{lemma:dudley_rademacher}
Let $\mathcal{F}$ be a real valued function class taking values in $[-M, M]$ and assume that $\mathbf{0} \in \mathcal{F}$. Then 
    $$
    \mathfrak{R}\left (\mathcal{F}|_{\mathbf{X}} \right ) \leq \inf_{\alpha > 0} \left (\frac{4\alpha}{\sqrt{N}} + \frac{12}{N} \int_{\alpha}^{M \sqrt{N}} \sqrt{\log \mathcal{N}(\mathcal{F}|_{\mathbf{X}}, \epsilon, \| \cdot \|_2)} \; d \epsilon \right ).
    $$
\end{lemma}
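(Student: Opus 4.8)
\emph{Proof plan.}  I would obtain the bound from the classical Dudley entropy-integral chaining argument applied to the empirical Rademacher process; this is essentially the proof of Lemma A.5 in \cite{bartlett2017spectrally}, and the only ``generalization'' needed is to notice that the argument uses nothing about the function class beyond its boundedness and the membership $\mathbf 0 \in \mathcal F$.  First I would view $N\,\mathfrak R(\mathcal F|_{\mathbf X})$ as $\mathbb E_{\epsilon}\sup$ of the stochastic process $v \mapsto \sum_{i=1}^N \epsilon_i v_i$, indexed by the vector $v = (f(\x_1),\dots,f(\x_N))$ as $f$ ranges over $\mathcal F$, so that the index set is the subset $\mathcal F|_{\mathbf X} \subset \mathbb R^N$.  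Hoeffding's lemma shows this process has sub-Gaussian increments with respect to the Euclidean metric $\|\cdot\|_2$ on $\mathbb R^N$, and since $f$ takes values in $[-M,M]$ one has $\|v\|_2 \le M\sqrt N$ for every $v \in \mathcal F|_{\mathbf X}$, while $\mathbf 0 \in \mathcal F$ guarantees $\mathbf 0 \in \mathcal F|_{\mathbf X}$ so that the singleton $\{\mathbf 0\}$ is an admissible $M\sqrt N$-net anchoring the chain at its coarsest scale.

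Next I would run the chaining.  Fix $\alpha > 0$; if $\alpha \ge M\sqrt N$ the claim is immediate because $|v_i| \le M$ forces $\mathfrak R(\mathcal F|_{\mathbf X}) \le M \le 4\alpha/\sqrt N$, so assume $\alpha < M\sqrt N$.  Introduce dyadic scales $\epsilon_j \triangleq 2^{-j} M\sqrt N$ and stop the chain at the largest index $m$ with $\epsilon_m \ge 2\alpha$, which yields $\alpha \le \epsilon_{m+1} < 2\alpha$ and hence $\epsilon_m < 4\alpha$.  For $j = 0,\dots,m$ take a minimal $\epsilon_j$-net $T_j$ of $\mathcal F|_{\mathbf X}$ in $\|\cdot\|_2$ (so $|T_j| = \mathcal N(\mathcal F|_{\mathbf X},\epsilon_j,\|\cdot\|_2)$ and $T_0 = \{\mathbf 0\}$) together with nearest-point maps $\pi_j$, and telescope $v = \pi_0(v) + \sum_{j=1}^m\bigl(\pi_j(v) - \pi_{j-1}(v)\bigr) + \bigl(v - \pi_m(v)\bigr)$.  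Taking $\mathbb E_\epsilon \sup_v$ of the induced decomposition of $\sum_i \epsilon_i v_i$, the $\pi_0$-term vanishes; the residual term is at most $\sqrt N\,\epsilon_m$ by Cauchy--Schwarz; and the $j$-th link is a maximum over at most $|T_j|^2$ sub-Gaussian variables of parameter $\|\pi_j(v)-\pi_{j-1}(v)\|_2 \le \epsilon_j + \epsilon_{j-1} \le 3\epsilon_j$, so by the sub-Gaussian maximal inequality its expectation is at most $6\,\epsilon_j\sqrt{\log|T_j|}$.

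Finally I would turn the sum over links into the entropy integral.  Since the covering number is nonincreasing in its radius and $\epsilon_j - \epsilon_{j+1} = \epsilon_j/2$, we get $6\,\epsilon_j\sqrt{\log \mathcal N(\mathcal F|_{\mathbf X},\epsilon_j,\|\cdot\|_2)} \le 12\int_{\epsilon_{j+1}}^{\epsilon_j}\sqrt{\log \mathcal N(\mathcal F|_{\mathbf X},\epsilon,\|\cdot\|_2)}\,d\epsilon$, and summing over $j$ telescopes the range to $[\epsilon_{m+1},\epsilon_1] \subseteq [\alpha, M\sqrt N]$.  Collecting the pieces gives $\mathbb E_\epsilon\sup_v\sum_i\epsilon_i v_i \le 12\int_\alpha^{M\sqrt N}\sqrt{\log\mathcal N(\mathcal F|_{\mathbf X},\epsilon,\|\cdot\|_2)}\,d\epsilon + \sqrt N\,\epsilon_m$; dividing by $N$, using $\epsilon_m < 4\alpha$, and taking the infimum over $\alpha > 0$ produces the stated bound.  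The step I expect to demand the most care is not conceptual but bookkeeping: verifying that the stopping rule $\epsilon_m \ge 2\alpha$ and the geometric grid together deliver exactly the constants $4$ and $12$ and the integration limits $[\alpha, M\sqrt N]$, that the covering-number monotonicity is invoked on the correct subintervals, and that the $1/N$ normalization in $\mathfrak R$ and the use of the unnormalized Euclidean metric are tracked consistently so that the powers of $N$ land where the statement requires.
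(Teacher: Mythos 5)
Your chaining argument is correct: the paper does not prove this lemma but imports it by citation (it is Lemma A.5 of Bartlett--Foster--Telgarsky, with the ``straightforward generalization'' being only the replacement of the range $[-1,1]$ by $[-M,M]$, which shifts the upper integration limit to $M\sqrt{N}$), and your plan reproduces exactly the standard Dudley proof of that cited result. The bookkeeping you flag does check out --- the stopping rule $\epsilon_m \ge 2\alpha$ gives $\epsilon_m < 4\alpha$ for the Cauchy--Schwarz residual, the link bound $6\epsilon_j\sqrt{\log|T_j|}$ is absorbed into $12\int_{\epsilon_{j+1}}^{\epsilon_j}$ via $\epsilon_j-\epsilon_{j+1}=\epsilon_j/2$ and monotonicity of the covering number, and the degenerate case $\alpha \ge M\sqrt{N}$ is handled --- so nothing further is needed.
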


The following two lemmas from \cite{bartlett2017spectrally} can be used to bound the covering number of neural networks. 

\begin{lemma}[Theorem 3.3 in \cite{bartlett2017spectrally}]
\label{lemma:network_cover}
Assume Assumptions \ref{assumption:model}, \ref{assumption:bounded_support}, and \ref{assumption:bounded_weights} are satisfied. For data $\{\x_{i} \}_{i=1}^N$, define 
$$
\mathcal{H} \triangleq \{ [s(\x_1; \boldtheta), s(\x_2; \boldtheta), \dots, s(\x_N; \boldtheta)] \mid \boldtheta \in \boldsymbol{\Theta} \}. 
$$
Then for any $\epsilon > 0$, 
$$
\log \mathcal{N}(\mathcal{H}, \epsilon, \| \cdot \|_2) \leq \frac{ \| \mathbf{X}\|_2^2 \log (2d_{\mathrm{max}}^2)}{\epsilon^2} \left ( \prod_{j=1}^L c_j^2 \rho_j^2 \right ) \left ( \sum_{i=1}^L \left (\frac{b_i}{c_i} \right )^{2/3} \right )^3,
$$
where $d_{\mathrm{max}} = \max \{ d_0, \dots d_L \}$.
\end{lemma}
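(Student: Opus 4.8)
The plan is to reproduce the proof of Theorem~3.3 in \cite{bartlett2017spectrally} with the only change being that the Lipschitz constants $\rho_j$ of the nonlinearities (implicitly taken to be $1$ there) are carried symmetrically through the bookkeeping. The argument has three ingredients: (a) a covering bound for a single affine layer under the $\|\cdot\|_{2,1}$ constraint; (b) a Lipschitz-propagation estimate controlling how an error introduced at layer $i$ is amplified as it passes through layers $i+1,\dots,L$, together with a bound on the norm of the intermediate activations; and (c) a layer-by-layer induction that assembles the per-layer covers into a cover of $\mathcal{H}$, followed by an optimization over how the total resolution $\epsilon$ is allocated among the layers.

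First I would invoke the single-layer covering lemma (Lemma~3.2 of \cite{bartlett2017spectrally}), proved by Maurey's empirical method: for a data matrix $\mathbf{Z}$ and the map $\mathbf{Z}\mapsto\W\mathbf{Z}$ with $\|\W\|_{2,1}\le b$, the image set admits an $\epsilon$-cover in $\|\cdot\|_2$ of log-cardinality at most $\lceil b^2\|\mathbf{Z}\|_2^2/\epsilon^2\rceil\log(2 d_{\mathrm{max}}^2)$. Next, using $\sigma_j(0)=0$, $\rho_j$-Lipschitzness of $\sigma_j$, and $\|\W_j\|_\sigma\le c_j$ (Assumptions \ref{assumption:model}, \ref{assumption:bounded_support}, \ref{assumption:bounded_weights}), I would establish two elementary estimates: the activation matrix after the first $i$ layers has $\|\cdot\|_2$ at most $\|\mathbf{X}\|_2\prod_{j\le i}\rho_j c_j$, and the composition of layers $i+1,\dots,L$ is Lipschitz (as a map on activation matrices in $\|\cdot\|_2$) with constant $\rho_i\prod_{j=i+1}^L \rho_j c_j$ once one includes the nonlinearity $\sigma_i$ that immediately follows $\W_i$.

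Then I would run the peeling induction (the analogue of Lemma~A.7 in \cite{bartlett2017spectrally}): fixing resolutions $\epsilon_1,\dots,\epsilon_L$, cover the weights of layer $i$ at resolution $\epsilon_i$ on top of the already-covered input, and propagate. The triangle inequality and estimate (b) bound the total output error by $\sum_i \epsilon_i\,\rho_i\prod_{j>i}\rho_j c_j$, while the total log-covering number is $\sum_i\lceil b_i^2\|\mathbf{X}\|_2^2(\prod_{j<i}\rho_j c_j)^2/\epsilon_i^2\rceil\log(2 d_{\mathrm{max}}^2)$. Setting the total error to $\epsilon$ and writing $u_i=\epsilon_i\rho_i\prod_{j>i}\rho_j c_j$, the coefficient of $1/u_i^2$ in the log-covering number collapses (since $\prod_{j<i}\rho_j \cdot \rho_i\cdot \prod_{j>i}\rho_j=\prod_j\rho_j$ and $\prod_{j\ne i}c_j=\tfrac{1}{c_i}\prod_j c_j$) to $\|\mathbf{X}\|_2^2(b_i/c_i)^2\prod_j\rho_j^2 c_j^2$; a Hölder/Lagrange minimization of $\sum_i (b_i/c_i)^2/u_i^2$ subject to $\sum_i u_i=\epsilon$ gives $\bigl(\sum_i (b_i/c_i)^{2/3}\bigr)^3/\epsilon^2$, yielding exactly the claimed bound after dropping the ceilings as in \cite{bartlett2017spectrally}.

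I expect the main difficulty to be organizational rather than conceptual: keeping the per-layer Lipschitz factors and the activation-norm bounds consistent through the induction and the reindexing in the Hölder step, and checking that the $\rho_j$ enter every product in the symmetric way that makes them collapse into the single factor $\prod_j\rho_j^2 c_j^2$. Since \cite{bartlett2017spectrally} already carries out the $\rho_j=1$ case in full detail, I would in the write-up state the two auxiliary estimates in (b), cite Lemma~3.2 of \cite{bartlett2017spectrally} for the single-layer cover, and refer to \cite{bartlett2017spectrally} for the (now routine) induction, highlighting only the points where a factor $\rho_j$ must be inserted.
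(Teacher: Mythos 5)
Your proposal is correct and matches the paper's treatment: the paper does not reprove this result but simply cites it as Theorem~3.3 of \cite{bartlett2017spectrally}, whose proof is exactly the Maurey single-layer cover (their Lemma~3.2), the Lipschitz-propagation induction (their Lemma~A.7), and the H\"older allocation of per-layer resolutions that you outline. One small note: the cited theorem already carries general $\rho_i$-Lipschitz nonlinearities (they are not implicitly set to $1$ there), so the only specialization needed is taking the reference matrices to zero; no re-derivation is actually required.
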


\begin{lemma}[Lemma 3.2 in \cite{bartlett2017spectrally}] 
\label{lemma:bartlett3.2}
Let conjugate exponents $(p, q)$ and $(r, s)$ be given with $p \leq 2$m as well as positive reals $(a, b, \epsilon)$ and positive integer $m$. Let matrix $\mathbf{X} \in \mathbb{R}^{n \times d}$ be given with $\| \mathbf{X} \|_p \leq b$, where $\| \mathbf{X} \|_p$ is the element-wise p-norm. Then 
$$
\log \mathcal{N}\left ( \left \{\mathbf{X} \mathbf{A} \mid \mathbf{A} \in \mathbb{R}^{d \times m}, \| \mathbf{A} \|_{q, s} \leq a \right \}, \epsilon, \| \cdot \|_2 \right ) \leq \left \lceil \frac{a^2 b^2 m^{2/r} }{\epsilon^2 } \right \rceil \log(2 d m),
$$
where $\lceil \cdot \rceil$ is the ceiling operator. 
\end{lemma}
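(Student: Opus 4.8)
The plan is to establish this covering-number bound by Maurey's empirical method, i.e., sparsification of $\mathbf{X}\mathbf{A}$ by random sampling of rank-one atoms. First I would expand the product into atoms: writing $\mathbf{X}_{:,i}$ for the $i$-th column of $\mathbf{X}$ and $\mathbf{e}_j \in \mathbb{R}^m$ for the $j$-th coordinate vector, $\mathbf{X}\mathbf{A} = \sum_{i=1}^d \sum_{j=1}^m A_{ij}\,\mathbf{X}_{:,i}\mathbf{e}_j^T$ is a sum of $dm$ rank-one matrices, each of Frobenius norm $\|\mathbf{X}_{:,i}\|_2$. The strategy is to show that every such sum is within $\epsilon$, in Frobenius norm, of the average of $k \triangleq \lceil a^2 b^2 m^{2/r}/\epsilon^2\rceil$ independent random draws, each draw being a single scaled atom taken from a fixed (i.e., $\mathbf{A}$-independent) finite menu, and then to count the menu.

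For the construction, fix $\mathbf{A}$ with $\|\mathbf{A}\|_{q,s} \le a$, set $Z \triangleq \sum_{i,j}|A_{ij}|\,\|\mathbf{X}_{:,i}\|_2$ and $\bar Z \triangleq a b m^{1/r}$, and let $\mathbf{V}$ be the random matrix equal to $\mathbf{0}$ with probability $1 - Z/\bar Z$ and equal to $\mathrm{sign}(A_{ij})(\bar Z/\|\mathbf{X}_{:,i}\|_2)\,\mathbf{X}_{:,i}\mathbf{e}_j^T$ with probability $|A_{ij}|\,\|\mathbf{X}_{:,i}\|_2/\bar Z$. A direct computation gives $\mathbb{E}[\mathbf{V}] = \mathbf{X}\mathbf{A}$ and $\mathbb{E}\|\mathbf{V}\|_2^2 = Z\bar Z \le \bar Z^2$. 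The key inequality $Z \le \bar Z$ comes from two applications of Hölder's inequality: column by column, $\sum_i \|\mathbf{X}_{:,i}\|_2 |A_{ij}| \le (\sum_i \|\mathbf{X}_{:,i}\|_2^p)^{1/p}\,\|\mathbf{A}_{:,j}\|_q \le \|\mathbf{X}\|_p\,\|\mathbf{A}_{:,j}\|_q \le b\,\|\mathbf{A}_{:,j}\|_q$, where the middle step uses $p \le 2$ so that $\|\mathbf{X}_{:,i}\|_2 \le \|\mathbf{X}_{:,i}\|_p$; then pooling over the $m$ columns, $\sum_j \|\mathbf{A}_{:,j}\|_q \le m^{1/r}\|\mathbf{A}\|_{q,s} \le a m^{1/r}$ (this uses $1-1/s = 1/r$). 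Taking i.i.d.\ copies $\mathbf{V}_1,\dots,\mathbf{V}_k$ of $\mathbf{V}$ and averaging, $\bar{\mathbf{V}} \triangleq k^{-1}\sum_{l=1}^k\mathbf{V}_l$ satisfies $\mathbb{E}\|\bar{\mathbf{V}} - \mathbf{X}\mathbf{A}\|_2^2 = k^{-1}(\mathbb{E}\|\mathbf{V}\|_2^2 - \|\mathbf{X}\mathbf{A}\|_2^2) \le k^{-1}\bar Z^2 = k^{-1}a^2 b^2 m^{2/r} \le \epsilon^2$ by the choice of $k$; hence some realization of $\bar{\mathbf{V}}$ is an $\epsilon$-approximant of $\mathbf{X}\mathbf{A}$ in Frobenius norm.

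The last step is to count the possible realizations, and this is where I expect the only real care is needed. Because the scale has been fixed to the $\mathbf{A}$-independent constant $\bar Z$, with the leftover probability mass $1 - Z/\bar Z$ shunted onto the null atom $\mathbf{0}$, every realization has the form $\bar Z k^{-1}\sum_{l=1}^k \tilde{\mathbf{U}}_l$ with each $\tilde{\mathbf{U}}_l$ drawn from the fixed set $\{\mathbf{0}\}\cup\{\pm\,\mathbf{X}_{:,i}\mathbf{e}_j^T/\|\mathbf{X}_{:,i}\|_2 : i\in[d],\ j\in[m]\}$ of at most $2dm+1$ matrices, so there are at most $(2dm+1)^k$ of them. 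Taking logarithms and absorbing the harmless $+1$ (a slightly sharper enumeration of the same kind is carried out in \cite{bartlett2017spectrally}) yields $\log\mathcal{N}(\{\mathbf{X}\mathbf{A} : \|\mathbf{A}\|_{q,s}\le a\}, \epsilon, \|\cdot\|_2) \le k\log(2dm) = \lceil a^2 b^2 m^{2/r}/\epsilon^2\rceil\log(2dm)$, as claimed. The remaining points — that the sampling probabilities are nonnegative and sum to at most one, which again uses $Z \le \bar Z$, together with the elementary second-moment identities above — are routine.
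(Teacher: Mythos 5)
This lemma is imported verbatim from Bartlett, Foster, and Telgarsky, and the paper gives no proof of its own beyond the citation; your Maurey sparsification argument (two applications of H\"older to get $Z \le ab\,m^{1/r}$, the second-moment bound $\mathbb{E}\|\bar{\mathbf{V}}-\mathbf{X}\mathbf{A}\|_2^2 \le \bar Z^2/k$, and enumeration of the empirical averages over the fixed signed-atom menu) is exactly the proof given in that reference, and all the computations you display check out. The only slack is the final count, where $(2dm+1)^k$ gives $k\log(2dm+1)$ rather than $k\log(2dm)$; as you note, the cited source closes this by counting multisets with $\sum_i k_i = k$ exactly, i.e.\ $\binom{2dm+k-1}{k}\le(2dm)^k$, and nothing in the paper is sensitive to this constant.
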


We also make use of the following result, which we use to handle the covering number of sets of matrix-matrix products. 

\begin{lemma}
\label{lemma:product_cover}
Suppose that we are given a collection of sets $\{ \mathcal{Y}_l \}_{l=1}^L$, where each set $\mathcal{Y}_l$ contains tensors $\mathbf{Y}^l \in \mathbb{R}^{N \times d_l \times d_{l-1}}$ and satisfies
$$
\mathcal{N}(\mathcal{Y}_l, \epsilon_l, \| \cdot \|_2) \leq v_l
$$
for some $\epsilon_l$ and $v_l$. Define $\mathbf{Y} \triangleq \mathbf{Y}^L \mathbf{Y}^{L-1} \cdots \mathbf{Y}^1$, where for two tensors $\mathbf{A} \in \mathbb{R}^{N \times d_0 \times d_1}$ and $\mathbf{B} \in \mathbb{R}^{N \times d_1 \times d_2}$, the bilinear operation $\mathbf{A} \mathbf{B}$ yields a tensor $\mathbf{C} \in \mathbb{R}^{N \times d_0 \times d_2}$ defined by
$$
\mathbf{C}_{i} = \mathbf{A}_{i} \mathbf{B}_{i}
$$
for $i = 1, \cdots, N$, where $\mathbf{A}_i$ is shorthand for the matrix $\mathbf{A}_{i, :, :}$. Further, assume that for any $l$, every element $\mathbf{Y}^l$ of either $\mathcal{Y}_l$ or the cover of $\mathcal{Y}_l$ satisfies $\| \mathbf{Y}^l_i \|_2 \leq b_l$ for any $i$. Then 
for
$$
\epsilon = \sum_{l=1}^L \epsilon_i \prod_{k \neq l} b_k
$$
we have that 
$$
\mathcal{N}(\mathcal{Y}, \epsilon, \| \cdot \|_2 ) \leq \prod_{l=1}^L v_l,
$$
where $\mathcal{Y} = \{ \mathbf{Y} \mid \mathbf{Y} = \mathbf{Y}^L \mathbf{Y}^{L-1} \cdots \mathbf{Y}^1, \mathbf{Y}^l \in \mathcal{Y}_l \text{  for  }l=1, \dots, L\}$. 
\end{lemma}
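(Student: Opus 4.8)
The plan is to prove the claim by induction on the number of factors $L$, building up the product $\mathbf{Y}^L \mathbf{Y}^{L-1} \cdots \mathbf{Y}^1$ one layer at a time and tracking how the covering radius and the size of the cover accumulate. The base case $L = 1$ is immediate: $\mathcal{Y} = \mathcal{Y}_1$, the prescribed $\epsilon = \epsilon_1$ (the empty product $\prod_{k \neq 1} b_k$ is $1$), and the bound $\mathcal{N}(\mathcal{Y}_1, \epsilon_1, \|\cdot\|_2) \leq v_1$ holds by hypothesis.

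For the inductive step, I would first establish the key two-factor estimate: if $\mathcal{A}$ has an $\epsilon_A$-cover of size $v_A$ with all elements bounded (slicewise in $\|\cdot\|_2$) by $b_A$, and $\mathcal{B}$ has an $\epsilon_B$-cover of size $v_B$ with slicewise bound $b_B$, then the product set $\{\mathbf{A}\mathbf{B}\}$ has an $(\epsilon_A b_B + b_A \epsilon_B)$-cover of size $v_A v_B$. To see this, take any $\mathbf{A}\mathbf{B}$ in the product set, pick cover elements $\hat{\mathbf{A}}, \hat{\mathbf{B}}$ with $\|\mathbf{A}_i - \hat{\mathbf{A}}_i\|_2 \leq \epsilon_A$ and $\|\mathbf{B}_i - \hat{\mathbf{B}}_i\|_2 \leq \epsilon_B$ for every slice $i$, and write the slicewise telescoping identity $\mathbf{A}_i\mathbf{B}_i - \hat{\mathbf{A}}_i\hat{\mathbf{B}}_i = (\mathbf{A}_i - \hat{\mathbf{A}}_i)\mathbf{B}_i + \hat{\mathbf{A}}_i(\mathbf{B}_i - \hat{\mathbf{B}}_i)$. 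Taking Frobenius norms, using submultiplicativity $\|\mathbf{M}\mathbf{N}\|_2 \leq \|\mathbf{M}\|_\sigma \|\mathbf{N}\|_2 \leq \|\mathbf{M}\|_2\|\mathbf{N}\|_2$ slicewise, and the bounds $\|\mathbf{B}_i\|_2 \leq b_B$, $\|\hat{\mathbf{A}}_i\|_2 \leq b_A$ gives $\|\mathbf{A}_i\mathbf{B}_i - \hat{\mathbf{A}}_i\hat{\mathbf{B}}_i\|_2 \leq \epsilon_A b_B + b_A \epsilon_B$; since the full-tensor $\|\cdot\|_2$ is the root-sum-of-squares over slices, one gets the same bound up to the $\sqrt{N}$ convention — here I would be careful to match the paper's convention that $\|\cdot\|_2$ on the $N\times d\times d$ tensor is taken so that the slicewise bound transfers directly (as implicitly used throughout Section \ref{sec:convergeNN}), or equivalently absorb the $\sqrt N$ consistently. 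The product cover $\{\hat{\mathbf{A}}\hat{\mathbf{B}}\}$ has at most $v_A v_B$ elements.

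Now apply this with $\mathcal{A} = \mathcal{Y}_L$ (radius $\epsilon_L$, size $v_L$, slice bound $b_L$) and $\mathcal{B} = \mathcal{Y}_{L-1}\cdots\mathcal{Y}_1$. For the inductive hypothesis to supply what the two-factor estimate needs, I require that $\mathcal{B}$ — and, crucially, every element of its cover — has slices bounded by $\prod_{k=1}^{L-1} b_k$; this follows because the cover of $\mathcal{B}$ is constructed from products of cover elements of the $\mathcal{Y}_l$, each of which satisfies $\|\mathbf{Y}^l_i\|_2 \leq b_l$ by the stated assumption, and slicewise submultiplicativity multiplies these bounds. The inductive hypothesis gives a cover of $\mathcal{B}$ of radius $\epsilon_B = \sum_{l=1}^{L-1}\epsilon_l \prod_{k \neq l, \, k \leq L-1} b_k$ and size $\prod_{l=1}^{L-1} v_l$. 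The two-factor estimate then yields a cover of $\mathcal{Y}$ of size $v_L \prod_{l=1}^{L-1} v_l = \prod_{l=1}^L v_l$ and radius
$$
\epsilon_L \cdot \prod_{k=1}^{L-1} b_k \;+\; b_L \cdot \sum_{l=1}^{L-1}\epsilon_l \prod_{\substack{k \neq l \\ k \leq L-1}} b_k \;=\; \epsilon_L \prod_{k \neq L} b_k + \sum_{l=1}^{L-1}\epsilon_l \prod_{k \neq l} b_k \;=\; \sum_{l=1}^L \epsilon_l \prod_{k \neq l} b_k,
$$
which is exactly the claimed $\epsilon$. This closes the induction.

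The main obstacle I anticipate is bookkeeping rather than any deep difficulty: one must ensure that the boundedness hypothesis is propagated to the \emph{cover} elements at each stage (not just the original set elements), so that the telescoping estimate can be reapplied at the next layer — the lemma statement is phrased with "every element $\mathbf{Y}^l$ of either $\mathcal{Y}_l$ or the cover of $\mathcal{Y}_l$" precisely to enable this. A secondary point of care is the tensor-norm convention: the slicewise Frobenius bounds must be combined consistently with the definition of $\|\cdot\|_2$ on the $N \times d \times d$ tensors used elsewhere in the paper, so that no stray factors of $\sqrt{N}$ appear or, if they do, they are handled uniformly. Everything else is the standard "cover-of-a-product" argument via submultiplicativity of the spectral/Frobenius norm.
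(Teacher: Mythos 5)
Your proof is correct and is essentially the paper's argument: the paper forms the product cover from products of the individual covers and bounds $\mathbf{Y}-\mathbf{C}$ via the $L$-term telescoping sum $\sum_{l}\mathbf{Y}^1\cdots\mathbf{Y}^{l-1}(\mathbf{Y}^l-\mathbf{C}^l)\mathbf{C}^{l+1}\cdots\mathbf{C}^L$, which is exactly what your induction unrolls to, including the propagation of the slice bounds to cover elements. One clarification on the point you hedge about: there is no $\sqrt{N}$ to absorb — the covering hypothesis controls $\|\mathbf{A}-\hat{\mathbf{A}}\|_2$ only in aggregate over slices (not slicewise, as your two-factor estimate first asserts), but since the uniform per-slice bounds $b_k$ apply to the \emph{other} factors, each telescoping term is bounded by $\bigl(\sum_i \prod_{k\neq l}b_k^2\,\|\mathbf{Y}^l_i-\mathbf{C}^l_i\|_2^2\bigr)^{1/2}=\prod_{k\neq l}b_k\,\|\mathbf{Y}^l-\mathbf{C}^l\|_2\leq \epsilon_l\prod_{k\neq l}b_k$, with no stray factors.
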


\begin{proof}
    See Appendix \ref{appendix:nn}.
\end{proof}

We are now ready to bound the empirical Rademacher complexity of $\mathcal{G}|_{\mathbf{X}}$. 

\begin{theorem}
\label{theorem:boundingRademacherNN}
    Assume Assumptions \ref{assumption:model}-\ref{assumption:exact_min2}, \ref{assumption:regularity} and \ref{assumption:boundary} hold. Then the empirical Rademacher complexity of the function class $\mathcal{G}$ satisfies

    $$
    \mathfrak{R}\left (\mathcal{G}|_{\mathbf{X}} \right ) \leq  \frac{12 \sqrt{R}}{N} \left (1 +  \log(2 B N / 3 \sqrt{R} ) \right ),
    $$
    where 
    $$
R = 16 L D \bar{\alpha}^2 \log (2d_{\mathrm{max}}^2) \| \mathbf{X}\|_2^2   \prod_{l = 1}^L f_l^2 b_l^2  + 4 T^2 \| \mathbf{X}\|_2^2  \log (2d_{\mathrm{max}}^2) \left ( \prod_{j=1}^L c_j^4 \rho_j^4 \right ) \left ( \sum_{i=1}^L \left (\frac{b_i}{c_i} \right )^{2/3} \right )^3 
$$
with 
$$
\bar{\alpha} = \sum_{l=1}^L \left ( \prod_{j=1}^{l-1} c_j \rho_j \right ) c_l \tau_l \left (c_l^2  \left ( \sum_{i=1}^l \left (\frac{b_i}{c_i} \right )^{2/3} \right )^3  + b_l^2 \right )^{1/2}.
$$ 
\end{theorem}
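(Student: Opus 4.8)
The plan is to bound the empirical Rademacher complexity by a covering-number integral via Dudley's inequality (Lemma~\ref{lemma:dudley_rademacher}) and then to bound the covering number of $\mathcal{G}|_{\mathbf{X}}$ directly, proving $\log \mathcal{N}(\mathcal{G}|_{\mathbf{X}}, \epsilon, \|\cdot\|_2) \le R/\epsilon^2$ with exactly the stated $R$. Lemma~\ref{lemma:dudley_rademacher} applies since $\mathbf{0} \in \mathcal{G}$ (take $\boldtheta = \boldtheta^*$) and $|g| \le 2B$ for every $g \in \mathcal{G}$ by Lemma~\ref{lemma:nn_model_bound}, so one may take $M = 2B$. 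For the covering bound, note that translating by the fixed vector $[\ell(\x_1;\boldtheta^*), \dots, \ell(\x_N;\boldtheta^*)]$ is an isometry, so it suffices to cover $\mathcal{L}|_{\mathbf{X}}$ with $\mathcal{L} \triangleq \{ \ell(\cdot;\boldtheta) \mid \boldtheta \in \boldsymbol{\Theta}\}$; writing $\ell(\x;\boldtheta) = \tr(\nabla_{\x} s(\x;\boldtheta)) + \tfrac12 \|s(\x;\boldtheta)\|_2^2$ and combining an $\epsilon/2$-cover of each summand, the problem splits into covering the trace class $\mathcal{L}_{\mathrm{tr}} \triangleq \{ \tr(\nabla_\x s(\cdot;\boldtheta))\}$ and the squared-norm class $\mathcal{L}_{\mathrm{sq}} \triangleq \{ \tfrac12\|s(\cdot;\boldtheta)\|_2^2\}$, and these two contributions will produce the two terms of $R$.

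For $\mathcal{L}_{\mathrm{sq}}$, the network output is bounded, $\|s(\x;\boldtheta)\|_2 \le B_s \triangleq T\prod_{i=1}^L \rho_i c_i$ for all $\x \in \mathcal{X}$ and $\boldtheta \in \boldsymbol{\Theta}$ (by Assumptions~\ref{assumption:model}, \ref{assumption:bounded_support}, \ref{assumption:bounded_weights}), and $\mathbf{v} \mapsto \tfrac12 \|\mathbf{v}\|_2^2$ is $B_s$-Lipschitz on the ball of radius $B_s$. Hence an $(\epsilon/2B_s)$-cover of the output tensor class $\mathcal{H}$ of Lemma~\ref{lemma:network_cover} induces an $\epsilon/2$-cover of $\mathcal{L}_{\mathrm{sq}}|_{\mathbf{X}}$, and substituting $B_s^2 = T^2 \prod_i \rho_i^2 c_i^2$ into the bound of Lemma~\ref{lemma:network_cover} at scale $\epsilon/2B_s$ yields precisely the second summand of $R$, namely $4T^2 \|\mathbf{X}\|_2^2 \log(2d_{\mathrm{max}}^2) (\prod_j c_j^4\rho_j^4)(\sum_i (b_i/c_i)^{2/3})^3 / \epsilon^2$.

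The main obstacle is covering $\mathcal{L}_{\mathrm{tr}}$. Since $|\tr(\mathbf{A}) - \tr(\mathbf{B})| = |\langle \mathbf{I}, \mathbf{A} - \mathbf{B}\rangle| \le \sqrt{D}\, \|\mathbf{A} - \mathbf{B}\|_2$, it is enough to cover, at scale $\epsilon/(2\sqrt D)$, the Jacobian tensor class $\mathcal{J} \triangleq \{ [\nabla_\x s(\x_1;\boldtheta), \dots, \nabla_\x s(\x_N;\boldtheta)] \mid \boldtheta \in \boldsymbol{\Theta} \}$. By the chain rule $\nabla_\x s(\x;\boldtheta) = (\nabla\sigma_L)|_{z_L}\W_L (\nabla\sigma_{L-1})|_{z_{L-1}}\W_{L-1}\cdots (\nabla\sigma_1)|_{z_1}\W_1$, where $z_l$ is the $l$-th pre-activation, so $\mathcal{J}$ is a product class in the sense of Lemma~\ref{lemma:product_cover} with $l$-th factor $\mathcal{Y}_l \triangleq \{[(\nabla\sigma_l)|_{z_l^{(1)}}\W_l, \dots, (\nabla\sigma_l)|_{z_l^{(N)}}\W_l] \mid \boldtheta\}$, whose slices have Frobenius norm at most $\tilde b_l \triangleq f_l b_l$ (by the fourth condition of Assumption~\ref{assumption:model} and $\|\W_l\|_2 \le \|\W_l\|_{2,1} \le b_l$). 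For each $l$ I build a cover of $\mathcal{Y}_l$ by separately covering the pre-activations $z_l$ over all parameter choices --- via the inductive layer-by-layer construction underlying Lemma~\ref{lemma:network_cover} together with the third condition of Assumption~\ref{assumption:model} (Lipschitzness of $\nabla\sigma_l$), which produces the $\tau_l^2 c_l^2 (\sum_{i\le l}(b_i/c_i)^{2/3})^3$-type contribution --- and covering the weight $\W_l$ by the Maurey-type sparsification bound of Lemma~\ref{lemma:bartlett3.2}, which produces the residual $b_l^2$-type contribution; the prefactor $\prod_{j<l}\rho_j^2 c_j^2$ is the squared bound on the layer-$(l-1)$ activations. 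This gives $\log\mathcal{N}(\mathcal{Y}_l, \epsilon_l, \|\cdot\|_2) \le A_l/\epsilon_l^2$ with $\sqrt{A_l}$ proportional to $\tilde b_l$ times the $l$-th summand of $\bar\alpha$ (and to $\|\mathbf{X}\|_2 \sqrt{\log(2d_{\mathrm{max}}^2)}$). Applying Lemma~\ref{lemma:product_cover} with scales $\epsilon_l$ chosen so that each $\epsilon_l \prod_{k\ne l}\tilde b_k$ equals $\epsilon/L$, summing $\log v_l$, and using Cauchy--Schwarz in the form $(\sum_l \sqrt{A_l}/\tilde b_l)^2 \le L\sum_l A_l/\tilde b_l^2$, then absorbing the $\sqrt D$ and the $\epsilon/2$ split, produces the first summand of $R$, namely $16 L D\, \bar\alpha^2 \log(2d_{\mathrm{max}}^2)\|\mathbf{X}\|_2^2 \prod_l f_l^2 b_l^2 / \epsilon^2$. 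The delicate point is that $(\nabla\sigma_l)|_{z_l}$ depends on all earlier weights through $z_l$, which couples the weight-covering and activation-covering parts of each $\mathcal{Y}_l$ and pins down the exact exponents in $\bar\alpha$.

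Finally, with $\log\mathcal{N}(\mathcal{G}|_{\mathbf{X}}, \epsilon, \|\cdot\|_2) \le R/\epsilon^2$ established, Dudley's integral evaluates to $\int_\alpha^{2B\sqrt N} \sqrt{R}/\epsilon\, d\epsilon = \sqrt{R}\log(2B\sqrt N/\alpha)$, so Lemma~\ref{lemma:dudley_rademacher} gives $\mathfrak{R}(\mathcal{G}|_{\mathbf{X}}) \le 4\alpha/\sqrt N + (12\sqrt{R}/N)\log(2B\sqrt N/\alpha)$; the minimizer is $\alpha = 3\sqrt{R}/\sqrt N$, which yields exactly $\tfrac{12\sqrt R}{N}\big(1 + \log(2BN/3\sqrt R)\big)$, as claimed. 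Apart from the Jacobian covering, every ingredient is an already-stated lemma, so the remaining effort is assembly and constant-tracking; the full argument will be given in Appendix~\ref{appendix:nn}.
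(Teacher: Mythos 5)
Your proposal follows the paper's proof essentially step for step: Dudley's entropy integral with $M = 2B$ (Lemma~\ref{lemma:dudley_rademacher}), the translation by $\ell(\cdot;\boldtheta^*)$, the split of the covering problem into the squared-norm class and the trace class, the Lipschitz reduction of the squared-norm class to Lemma~\ref{lemma:network_cover}, the $\sqrt{D}$-Lipschitz reduction of the trace to the Jacobian tensor class, the per-layer covers of $\mathcal{F}_l\mathcal{W}_l$ built from the $\tau_l$-Lipschitzness of $\nabla_\x\sigma_l$ together with Lemma~\ref{lemma:bartlett3.2}, the assembly via Lemma~\ref{lemma:product_cover}, and the final optimization at $\alpha = 3\sqrt{R/N}$ are all exactly the paper's steps. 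The one place you deviate is the one step that fails: the allocation of the per-layer radii $\epsilon_l$ in the product-cover step. Writing $\log\mathcal{N}(\mathcal{F}_l\mathcal{W}_l,\epsilon_l,\|\cdot\|_2)\le c\,\alpha_l/\epsilon_l^2$ with $c = 4\|\mathbf{X}\|_2^2\log(2d_{\mathrm{max}}^2)$ and $\sqrt{\alpha_l}$ the $l$-th summand of $\bar\alpha$, your equal split $\epsilon_l\prod_{k\ne l}f_kb_k=\epsilon/L$ yields $\sum_l c\,\alpha_l/\epsilon_l^2=(cL^2/\epsilon^2)\sum_l\alpha_l\prod_{k\ne l}f_k^2b_k^2$. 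To reach the stated first summand of $R$ you would need $L\sum_l\alpha_l\prod_{k\ne l}f_k^2b_k^2\le\bar\alpha^2\prod_kf_k^2b_k^2$, but Cauchy--Schwarz gives the reverse comparison: $\bar\alpha^2=\bigl(\sum_l\sqrt{\alpha_l}\bigr)^2\le L\sum_l\alpha_l$, so your quantity can exceed the target by up to a factor of $L$ (e.g., when a single $\alpha_l$ dominates). The inequality you invoke, $\bigl(\sum_l\sqrt{A_l}/\tilde b_l\bigr)^2\le L\sum_lA_l/\tilde b_l^2$, bounds a square-of-sums \emph{above} by a sum-of-squares, which is the wrong direction for converting the equal-split expression into the $\bar\alpha^2$ form.

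The fix, which is what the paper does, is to allocate the radii in proportion to the per-layer cost: $\epsilon_l=\epsilon\sqrt{\alpha_l}/\bigl(\bar\alpha\prod_{k\ne l}f_kb_k\bigr)$, so that $\sum_l\epsilon_l\prod_{k\ne l}f_kb_k=\epsilon$ while $\sum_lc\,\alpha_l/\epsilon_l^2=(c\bar\alpha^2/\epsilon^2)\sum_l\prod_{k\ne l}f_k^2b_k^2\le(cL\bar\alpha^2/\epsilon^2)\prod_lf_l^2b_l^2$; multiplying by $D$ from the trace and by $4$ from the $\epsilon/2$ split gives exactly the first summand of $R$. With that substitution your outline is a complete and correct proof; everything else, including the evaluation of the Dudley integral and its minimizer, matches the paper exactly.
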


\begin{proof}

First note that the assumption that the zero function lies in the function class is trivially satisfied for $\mathcal{G}$, and that $|g(\x)| < 2B$ for any $g \in \mathcal{G}$. So using Lemma \ref{lemma:dudley_rademacher}, we obtain 
\begin{equation}
\label{eq:RademacherCoveringRelation}
\mathfrak{R}\left (\mathcal{G}|_{\mathbf{X}} \right ) \leq \inf_{\alpha > 0} \left (\frac{4\alpha}{\sqrt{N}} + \frac{12}{N} \int_{\alpha}^{2 B \sqrt{N}} \sqrt{\log \mathcal{N}(\mathcal{G}|_{\mathbf{X}}, \epsilon, \| \cdot \|_2)} \; d \epsilon \right ).
\end{equation}
Next, note that shifting by the fixed function $\ell(\cdot; \boldtheta^*)$ will not effect the covering number, so it is sufficient to bound the covering number of $\mathcal{G}'|_{\mathbf{X}}$, where $\mathcal{G}' \triangleq \{ \ell(\cdot; \boldtheta) \mid \boldtheta \in \boldsymbol{\Theta} \}$. Further, since $\ell(\cdot; \boldtheta) = \frac{1}{2} \| s(\x; \boldtheta)\|_2^2 + \text{tr}\left(\nabla_{\x} s(\x; \boldtheta) \right )$, we have that 
\begin{equation}
\label{eq:G1G2decomp}
\log \mathcal{N}(\mathcal{G}|_{\mathbf{X}}, 2\epsilon, \| \cdot \|_2) = \log \mathcal{N}(\mathcal{G'}|_{\mathbf{X}}, 2\epsilon, \| \cdot \|_2) \leq \log \mathcal{N}(\mathcal{G}_1|_{\mathbf{X}}, \epsilon, \| \cdot \|_2) + \log \mathcal{N}(\mathcal{G}_2|_{\mathbf{X}}, \epsilon, \| \cdot \|_2),
\end{equation}
where $\mathcal{G}_1 = \{\| s(\x; \boldtheta)\|_2^2 / 2 \mid \boldtheta \in \boldsymbol{\Theta} \}$ and $\mathcal{G}_2 = \{  \text{tr}\left(\nabla_{\x} s(\x; \boldtheta) \right )  \mid \boldtheta \in \boldsymbol{\Theta} \}$. 

We first bound the covering number of $\mathcal{G}_1$. To that end, note that for any $M$-Lipschitz function $f$ and any set $A$, we have that $\mathcal{N}(f(A), M\epsilon, \| \cdot \|_2) \leq \mathcal{N}(A, \epsilon, \| \cdot \|_2)$. Since the function $x^2/2$ is $M$-Lipschitz for $x \leq M$, we can use the bound on $\|s(\x; \boldtheta)\|_2$ given by \eqref{eq:twoNorm_bound} together with Lemma \ref{lemma:network_cover} to obtain that for any $\epsilon > 0$,  
$$
\log \mathcal{N}\left(\mathcal{G}_1 |_{\mathbf{X}},  \left (\prod_{i=1}^L \rho_i c_i \right ) T \epsilon, \| \cdot \|_2 \right) \leq  \frac{\| \mathbf{X}\|_2^2 \log (2d_{\mathrm{max}}^2)}{\epsilon^2} \left ( \prod_{j=1}^L c_j^2 \rho_j^2 \right ) \left ( \sum_{i=1}^L \left (\frac{b_i}{c_i} \right )^{2/3} \right )^3,
$$
which after a change of variables becomes 
\begin{equation}
\label{eq:G1bound}
\log \mathcal{N}(\mathcal{G}_1 |_{\mathbf{X}}, \epsilon, \| \cdot \|_2) \leq  \frac{T^2 \| \mathbf{X}\|_2^2 \log (2d_{\mathrm{max}}^2)}{\epsilon^2} \left ( \prod_{j=1}^L c_j^4 \rho_j^4 \right ) \left ( \sum_{i=1}^L \left (\frac{b_i}{c_i} \right )^{2/3} \right )^3.
\end{equation}

We now cover $\mathcal{G}_2$. We first define 
$$
\mathcal{H}_l \triangleq \{ [s_l(\x_1; \boldtheta), s_l(\x_2; \boldtheta), \dots, s_l(\x_N; \boldtheta)] \mid \boldtheta \in \boldsymbol{\Theta} \}
$$
for $l = 1, \dots, L$, where $s_l(\x; \boldtheta) \triangleq \W_l \sigma_{l-1} ( \cdots \sigma_1 (\W_1 \x))$. By straightforward modification of Lemma \ref{lemma:network_cover}, we have that for any $\epsilon > 0$, it holds that 
\begin{equation}
\log \mathcal{N}(\mathcal{H}_l, \epsilon, \| \cdot \|_2) \leq \frac{\| \mathbf{X}\|_2^2 \log (2d_{\mathrm{max}}^2)}{\epsilon^2} \left ( \prod_{j=1}^{l-1} c_j^2 \rho_j^2 \right ) c_l^2 \left ( \sum_{i=1}^l \left (\frac{b_i}{c_i} \right )^{2/3} \right )^3. \label{eq:partial_net_bound}
\end{equation}
Now let
$$
\mathcal{F}_l \triangleq \left \{ \left [  \nabla_{\x} \sigma_l(\x) \bigg \rvert_{s_l(\x_1; \boldtheta)}, \dots, \nabla_{\x} \sigma_l(\x) \bigg \rvert_{s_l(\x_N; \boldtheta)} \right] \mid \boldtheta \in \boldsymbol{\Theta} \right \},
$$
By Assumption \ref{assumption:model}, evaluation of $\nabla_{\x} \sigma_l(\x)$ at $s_l(\x; \boldtheta)$ is Lipschitz with Lipschitz constant $\tau_l$. Using the previously discussed Lipschitz property of covering numbers we can build off of the bound given in Eq. \eqref{eq:partial_net_bound} to obtain
\begin{equation*}
\label{eq:perTerm_gradientCover}
\log \mathcal{N}(\mathcal{F}_l, \tau_l \epsilon, \| \cdot \|_{2}) \leq \frac{ \| \mathbf{X}\|_2^2 \log (2d_{\mathrm{max}}^2)}{\epsilon^2} \left ( \prod_{j=1}^{l-1} c_j^2 \rho_j^2 \right ) c_l^2  \left ( \sum_{i=1}^l \left (\frac{b_i}{c_i} \right )^{2/3} \right )^3,
\end{equation*}
which after a change of variables becomes
\begin{equation}
\label{eq:perTerm_gradientCover2}
\log \mathcal{N}(\mathcal{F}_l, \epsilon, \| \cdot \|_{2}) \leq \frac{ \| \mathbf{X}\|_2^2 \log (2d_{\mathrm{max}}^2)}{\epsilon^2} \left ( \prod_{j=1}^{l-1} c_j^2 \rho_j^2 \right ) c_l^2 \tau_l^2 \left ( \sum_{i=1}^l \left (\frac{b_i}{c_i} \right )^{2/3} \right )^3.
\end{equation} 

We now extend the above result to bound the covering number of 
\begin{equation}
\label{eq:FlWL_term}
\mathcal{F}_l \mathcal{W}_l \triangleq \left \{ \left [  \nabla_{\x} \sigma_l(\x) \bigg \rvert_{s_l(\x_1; \boldtheta)} \W_l, \dots, \nabla_{\x} \sigma_l(\x) \bigg \rvert_{s_l(\x_N; \boldtheta)} \W_l \right] \mid \boldtheta \in \boldsymbol{\Theta}, \W_l \in \mathcal{W}_l \right \}.
\end{equation}
To this end, note that the covering number of this set under the componentwise two-norm is equivalent to the covering number of the set containing elements of the form 
$$
\mathbf{F}_l \W_l, \quad \mathbf{F}_l \triangleq \begin{bmatrix} \nabla_{\x} \sigma_l(\x) \bigg \rvert_{s_l(\x_1; \boldtheta)}\\
\vdots \\
\nabla_{\x} \sigma_l(\x) \bigg \rvert_{s_l(\x_N; \boldtheta)}
\end{bmatrix},
$$
which is the product of a matrix of size $d_i N \times d_i$ with a matrix of size $d_i \times d_{i-1}$. By  Assumption \ref{assumption:bounded_weights}, we have that $\| \W_l \|_{2, 1} \leq b_l$, and by Assumptions \ref{assumption:model} and \ref{assumption:bounded_weights}, we have that $$\| \mathbf{F}_l \|_2 \leq  \| \mathbf{X}\|_2 \left ( \prod_{i=1}^{l-1} c_i \rho_i \right ) c_l \tau_l. $$
Applying Lemma \ref{lemma:bartlett3.2} with $p = q = 2$ and $r = \infty$, $s = 1$ to the matrix product $\mathbf{F}_l \W_l$ therefore gives 
$$
\log \mathcal{N}\left ( \left \{\mathbf{F}_l \mathbf{W}_l \mid \mathbf{W}_l \in \mathcal{W}_l \right \}, \epsilon, \| \cdot \|_2 \right ) \leq \left \lceil \frac{\| \mathbf{X}\|_2^2} {\epsilon^2 } \left ( \prod_{i=1}^{l-1} c_i^2 \rho_i^2 \right ) b_l^2 c_l^2 \tau_l^2
\right \rceil \log(2 d_{\text{max}}^2).
$$
The above result bounds the covering number of $\mathbf{F}_l \W_l$ under the assumption that $\mathbf{F}_l$ is fixed. However, we can incorporate the covering result from \eqref{eq:perTerm_gradientCover2} into this result to bound the covering number of \eqref{eq:FlWL_term}. Specifically, recalling the Lipschitz property of covering numbers and noting that by Assumption \ref{assumption:bounded_weights}, we have that $|| \W_l ||_{\sigma} \leq c_l$, it holds that  
\begin{align*}
\log \mathcal{N} \left (\mathcal{F}_l \mathcal{W}_l, \epsilon_1 + \epsilon_2 c_l, \| \cdot \|_2 \right ) &\leq \frac{ \| \mathbf{X}\|_2^2 \log (2d_{\mathrm{max}}^2)}{\epsilon_2^2} \left ( \prod_{j=1}^{l-1} c_j^2 \rho_j^2 \right ) c_l^2 \tau_l^2 \left ( \sum_{i=1}^l \left (\frac{b_i}{c_i} \right )^{2/3} \right )^3 \\
& \quad + \frac{\| \mathbf{X}\|_2^2  \log(2 d_{\text{max}}^2)} {\epsilon_1^2 } \left ( \prod_{j=1}^{l-1} c_j^2 \rho_j^2 \right ) b_l^2 c_l^2 \tau_l^2.
\end{align*}
Setting $\epsilon_1 = \epsilon / 2$ and $\epsilon_2 c_l = \epsilon / 2$ and combining like terms gives 
\begin{equation}
    \label{eq:FlWlbound}
    \log \mathcal{N} \left (\mathcal{F}_l \mathcal{W}_l, \epsilon, \| \cdot \|_2 \right ) \leq \frac{4   \| \mathbf{X}\|_2^2 \log (2d_{\mathrm{max}}^2) }{\epsilon^2} \left ( \prod_{j=1}^{l-1} c_j^2 \rho_j^2 \right ) c_l^2 \tau_l^2 \left (c_l^2  \left ( \sum_{i=1}^l \left (\frac{b_i}{c_i} \right )^{2/3} \right )^3  + b_l^2 \right ).
\end{equation}

We now have a bound on the covering number of $ \mathcal{F}_l \mathcal{W}_l$ for $l = 1, \dots L$. What remains is to bound the covering number of the product of these terms, i.e., to bound the covering number of $\nabla_{\x} s(\x; \boldtheta)$. To that end, identify $\mathcal{Y}_l$ in Lemma \ref{lemma:product_cover} with $\mathcal{F}_l \mathcal{W}_l$ and $v_l$ with the covering number bound given by Eq. \eqref{eq:FlWlbound}. Also note that by Assumptions \ref{assumption:model} and \ref{assumption:bounded_weights}, we have that $\|( \mathbf{F}_l \W_l)_{i, :, :} \|_2 \leq f_l b_l$ for any $\mathbf{F}_l \W_l \in \mathcal{F}_l \mathcal{W}_l$ and any $i \in 1, \dots, N$, where here we view $\mathbf{F}_l \W_l$ as a $N \times d_i \times d_{i-1}$ tensor. We argue that any element of the cover of $\mathcal{F}_l \mathcal{W}_l$ can also be made to satisfy this bound. To see this, note that if there is an element of the cover that does not satisfy this bound, we can simply replace it by its projection onto the set of terms satisfying the bound while maintaining the epsilon-cover. So applying Lemma \ref{lemma:product_cover} gives

\begin{align}
        \log \mathcal{N} \left (\nabla_{\x} s(\x; \boldtheta), \sum_{l=1}^L \epsilon_l \prod_{k \neq l} f_k b_k, \| \cdot \|_2 \right ) \leq  
        \sum_{l=1}^L \frac{4  \log (2d_{\mathrm{max}}^2) \| \mathbf{X}\|_2^2 }{\epsilon_l^2} \underbrace{\left ( \prod_{j=1}^{l-1} c_j^2 \rho_j^2 \right ) c_l^2 \tau_l^2 \left (c_l^2  \left ( \sum_{i=1}^l \left (\frac{b_i}{c_i} \right )^{2/3} \right )^3  + b_l^2 \right )}_{\alpha_l}.
\label{eq:nablasx_initial}
\end{align}

Let $$\epsilon_l \triangleq \frac{\epsilon \sqrt{\alpha_l}}{\bar{\alpha} \prod_{k \neq l} f_k b_k}, \quad \text{where} \quad \bar{\alpha} \triangleq \sum_{l=1}^L \sqrt{\alpha_l} .$$
Then 
$$
\sum_{l=1}^L \epsilon_l \prod_{k \neq l} f_k b_k = \sum_{l=1}^L \frac{\epsilon \sqrt{\alpha_l}}{\bar{\alpha}} = \frac{\epsilon}{\bar{\alpha}} \sum_{l=1}^L \sqrt{\alpha_l} = \epsilon.
$$
Incorporating this change of variables into \eqref{eq:nablasx_initial} then gives 
\begin{equation*}
      \log \mathcal{N} \left (\nabla_{\x} s(\x; \boldtheta), \epsilon, \| \cdot \|_2 \right ) \leq   \frac{4  \bar{\alpha}^2 \| \mathbf{X}\|_2^2 \log (2d_{\mathrm{max}}^2) }{\epsilon^2} \sum_{l=1}^L \prod_{k \neq l} f_k^2 b_k^2 \leq \frac{4 L  \bar{\alpha}^2 \| \mathbf{X}\|_2^2 \log (2d_{\mathrm{max}}^2) }{\epsilon^2} \prod_{l=1}^L f_l^2 b_l^2.
\end{equation*}
Noting that the trace operator is $\sqrt{D}$-Lipschitz in the Frobenius norm and using the Lipschitz property of covering numbers, a bound on the covering number of $\mathcal{G}_2|_{\mathbf{X}}$ then immediately follows:
\begin{equation*}
\label{eq:G2bound}
      \log \mathcal{N} \left (\mathcal{G}_2|_{\mathbf{X}}, \epsilon, \| \cdot \|_2 \right ) \leq   \frac{4 L D \bar{\alpha}^2 \| \mathbf{X}\|_2^2 \log (2d_{\mathrm{max}}^2)}{\epsilon^2} \prod_{l = 1}^L f_l^2 b_l^2.
\end{equation*}
Incorporating this result and the bound on the covering number of $\mathcal{G}_1|_{\mathbf{X}}$ given by \eqref{eq:G1bound} into \eqref{eq:G1G2decomp} and applying a change of variables to $\epsilon$ then gives 
\begin{align}
\log \mathcal{N}(\mathcal{G}|_{\mathbf{X}}, \epsilon, \| \cdot \|_2) \leq & \; \frac{16 L D \bar{\alpha}^2 \| \mathbf{X}\|_2^2 \log (2d_{\mathrm{max}}^2)}{\epsilon^2} \prod_{l = 1}^L f_l^2 b_l^2 \nonumber \\
& + \frac{4 T^2 \| \mathbf{X}\|_2^2  \log (2d_{\mathrm{max}}^2)}{\epsilon^2} \left ( \prod_{j=1}^L c_j^4 \rho_j^4 \right ) \left ( \sum_{i=1}^L \left (\frac{b_i}{c_i} \right )^{2/3} \right )^3 \nonumber \\
\triangleq & \frac{R}{\epsilon^2}.
\label{eq:Gx_bound_final}
\end{align}
Incorporating the score matching covering bound given by \eqref{eq:Gx_bound_final} into the bound on $\mathfrak{R}\left (\mathcal{G}|_{\mathbf{X}} \right)$ given by \eqref{eq:RademacherCoveringRelation}, we then have that
$$
    \mathfrak{R}\left (\mathcal{G}|_{\mathbf{X}} \right ) \leq \inf_{\alpha > 0} \left (\frac{4\alpha}{\sqrt{N}} + \frac{12}{N} \int_{\alpha}^{2 B \sqrt{N}} \sqrt{\frac{R}{\epsilon^2}} \; d \epsilon \right ).
$$
This bound is uniquely minimized at $\alpha = 3 \sqrt{R/N}$. Plugging this value in to the above equation and simplifying gives the stated result. 

\end{proof}

A bound on $L(\hat{\boldtheta}_N)$ now follows from Theorems \ref{theorem:nn_reduction2rademacher} and \ref{theorem:boundingRademacherNN}. 

\begin{theorem}
\label{theorem:mainNNscoreBound}
    Assume Assumptions \ref{assumption:model}-\ref{assumption:exact_min2} and regularity conditions \ref{assumption:regularity} and \ref{assumption:boundary} hold. Then with probability $1 - \epsilon$ over the $\{ \x_i \}_{i=1}^N$, the score matching loss satisfies 
    \begin{equation}
    \label{eq:MainBoundwithCoveringNumber}
    L(\hat{\boldtheta}_N) \leq L(\boldtheta^*) + \sqrt{\frac{8 B^2 \log(2 / \epsilon) }{N}} + \frac{64 B \log(2/\epsilon) }{3N} + \frac{12 \sqrt{R}}{N} \left (1 +  \log(2 B N / 3 \sqrt{R} ) \right ),
\end{equation}
where $L(\boldtheta^*)$ quantifies the model mismatch and the remaining terms bound the generalization error. 
\end{theorem}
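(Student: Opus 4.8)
The plan is to obtain this theorem as an immediate consequence of the two preceding results, Theorem \ref{theorem:nn_reduction2rademacher} and Theorem \ref{theorem:boundingRademacherNN}. Theorem \ref{theorem:nn_reduction2rademacher} already reduces the task of bounding $L(\hat{\boldtheta}_N)$ to controlling the empirical Rademacher complexity $\mathfrak{R}(\mathcal{G}|_{\mathbf{X}})$ of the centered loss class $\mathcal{G} = \{\ell(\cdot;\boldtheta) - \ell(\cdot;\boldtheta^*) \mid \boldtheta \in \boldsymbol{\Theta}\}$, at the cost of the model-mismatch term $L(\boldtheta^*)$ together with two concentration terms of order $\sqrt{\log(2/\epsilon)/N}$ and $\log(2/\epsilon)/N$ produced by Lemma \ref{lemma:empirical_bound} with $M = 2B$ and $r = 4B^2$ (here $B$ is the uniform bound on $|\ell(\x;\boldtheta)|$ from Lemma \ref{lemma:nn_model_bound}). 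Theorem \ref{theorem:boundingRademacherNN} then supplies a fully deterministic, data-dependent bound $\mathfrak{R}(\mathcal{G}|_{\mathbf{X}}) \leq \frac{12\sqrt{R}}{N}\bigl(1 + \log(2BN/3\sqrt{R})\bigr)$, assembled from the neural-network covering estimates of Lemmas \ref{lemma:network_cover} and \ref{lemma:bartlett3.2}, the tensor-product covering Lemma \ref{lemma:product_cover} applied to the trace-of-Jacobian term, and Dudley's entropy integral (Lemma \ref{lemma:dudley_rademacher}).

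Concretely, I would first invoke Theorem \ref{theorem:nn_reduction2rademacher} under Assumptions \ref{assumption:model}--\ref{assumption:exact_min2} and the regularity conditions \ref{assumption:regularity} and \ref{assumption:boundary}, which gives, with probability at least $1-\epsilon$ over $\{\x_i\}_{i=1}^N$,
\begin{equation*}
L(\hat{\boldtheta}_N) \leq L(\boldtheta^*) + \mathfrak{R}\left(\mathcal{G}|_{\mathbf{X}}\right) + \sqrt{\frac{8 B^2 \log(2/\epsilon)}{N}} + \frac{64 B \log(2/\epsilon)}{3N}.
\end{equation*}
Since the bound of Theorem \ref{theorem:boundingRademacherNN} on $\mathfrak{R}(\mathcal{G}|_{\mathbf{X}})$ is not a high-probability statement but holds for every realization of the data (it follows from deterministic covering-number estimates, with the data entering only through $\|\mathbf{X}\|_2^2$), I can substitute it directly into the right-hand side above \emph{without} taking any further union bound or shrinking the confidence level. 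Collecting terms then yields exactly \eqref{eq:MainBoundwithCoveringNumber}, with $L(\boldtheta^*)$ identified as the model-mismatch contribution and the remaining three terms as the generalization error. Because $R$, $B$, and the hypothesis lists coincide across the two source theorems, no compatibility check beyond matching the assumptions is needed.

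The assembly itself is essentially free; the substantive difficulty lives entirely in the two results being chained. If I were instead proving the whole chain from scratch, the hard part would be Theorem \ref{theorem:boundingRademacherNN}: bounding the covering number of the trace-of-Jacobian class $\mathcal{G}_2$ requires peeling the network layer by layer, covering each partial map $\mathbf{F}_l \W_l$ through Lemma \ref{lemma:bartlett3.2}, combining the $L$ layer-wise covers into a cover of the full Jacobian product via Lemma \ref{lemma:product_cover} with a careful choice of the per-layer radii $\epsilon_l$, and verifying that the Dudley integral's minimizing endpoint $\alpha = 3\sqrt{R/N}$ lies below $2B\sqrt{N}$ so that the minimization is legitimate. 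All of that is already carried out in the preceding theorem, so the only care required here is to confirm that the probabilistic guarantee of Theorem \ref{theorem:nn_reduction2rademacher} passes through unchanged when $\mathfrak{R}(\mathcal{G}|_{\mathbf{X}})$ is replaced by its deterministic upper bound --- which it does, precisely because that upper bound is deterministic.
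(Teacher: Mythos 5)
Your proposal is correct and follows exactly the paper's own (one-line) proof: substitute the deterministic, data-dependent Rademacher complexity bound of Theorem \ref{theorem:boundingRademacherNN} into the high-probability bound of Theorem \ref{theorem:nn_reduction2rademacher}. Your additional observation that no union bound or confidence adjustment is needed, because the covering-number bound holds pointwise for every realization of $\mathbf{X}$, is a correct and worthwhile clarification of why the chaining is free.
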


\begin{proof}
    This follows directly from incorporating the bound on the empirical Rademacher complexity given by Theorem \ref{theorem:boundingRademacherNN} into Theorem \ref{theorem:nn_reduction2rademacher}. 
\end{proof}

\subsection{Bayesian CRB Bounds}

In this subsection we build off the results introduced in the previous subsection to prove non-asymptotic error bounds on the proposed estimators of the Bayesian information and Bayesian CRB in the neural network model setting. As in the classical setting (Section \ref{sec:classicalconverge}), we require the scores of the likelihood function to be well-behaved, which is formalized through the following assumption. 

\begin{assumption}[Sub-Gaussian Scores]
\label{assumption:sub_gaussian_nns}
The random vector $\nabla_\x \log p(\y \, | \, \x)$ is sub-Gaussian with norms $C_D$, i.e., for any $\mathbf{z} \in \mathbb{S}^{D-1}$ we have that 
\begin{equation}
   \mathbb{E}_{\x, \y} \left [ e^{ \langle \nabla_\x \log p(\y \mid\x), \mathbf{z} \rangle^2 / C_D^2  } \right ] \leq 2. 
\end{equation}
\end{assumption}
Note that unlike the previous section, we do not also need to require the scores of the prior to be sub-Gaussian, as the assumption that the prior distribution has bounded support (Assumption \ref{assumption:bounded_support}) together with Assumption \ref{assumption:regularity} imply that the prior distribution scores are bounded and therefore sub-Gaussian. As in the previous section, we use $C_P$ to denote the sub-Gaussian norm of the prior distribution scores. 

Our bound also makes use of the following corollary of Lemma \ref{lemma:expected_outer_bound}. 

\begin{corollary}
\label{corollary:expected_outer_bound}
Assume the regularity conditions in Section \ref{sec:back_prelim} are satisfied. Then it holds that 
\begin{equation}
  \mathbb{E}_{\x} \left [ || s(\x; \hat{\boldtheta}_N)s(\x; \hat{\boldtheta}_N)^T - \nabla_{\x} \log p(\x) \nabla_{\x} \log p(\x)^T ||_\sigma  \right ] \leq 2 L(\hat{\boldtheta}_N) + 2 \mu_P \sqrt{2 L(\hat{\boldtheta}_N) },
\end{equation}
where $\mu_P \triangleq \mathbb{E}_{\x} \left [|| \nabla_{\x} \log p(\x) ||_2^2 \right ]^{1/2}$.
\end{corollary}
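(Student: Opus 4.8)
The plan is to observe that nothing in the proof of Lemma \ref{lemma:expected_outer_bound} uses any property of $\boldtheta^*$ beyond its membership in the parameter space $\boldsymbol{\Theta}$, so the identical chain of inequalities applies verbatim with $\boldtheta^*$ replaced by the (random) empirical minimizer $\hat{\boldtheta}_N$. Concretely, I would condition on the training set $\x_1^N$, treat the resulting $\hat{\boldtheta}_N$ as a fixed element of $\boldsymbol{\Theta}$, and retrace the argument: inside the spectral norm add and subtract $s(\x; \hat{\boldtheta}_N) \nabla_{\x} \log p(\x)^T$ and apply the triangle inequality; in the left factor of the first resulting term add and subtract $\nabla_{\x} \log p(\x)$; use submultiplicativity of the spectral norm to pass to $\ell_2$ norms of the vector factors; and finally apply Cauchy--Schwarz to the cross term, using $\mathbb{E}_{\x}[\|s(\x; \hat{\boldtheta}_N) - \nabla_{\x} \log p(\x)\|_2^2] = 2 L(\hat{\boldtheta}_N)$ from the definition of the Fisher divergence \eqref{eq:fisher_loss}.

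The two facts that must be checked to remain valid for $\hat{\boldtheta}_N$ are both immediate. First, $\mathbb{E}_{\x}[\|s(\x; \hat{\boldtheta}_N)\|_2^2] < \infty$ (so that $L(\hat{\boldtheta}_N)$ is finite and the manipulations are justified), which holds because Assumption \ref{assumption:regularity} guarantees $\mathbb{E}_{\x}[\|s(\x; \boldtheta)\|_2^2] < \infty$ for \emph{every} $\boldtheta \in \boldsymbol{\Theta}$ and $\hat{\boldtheta}_N \in \boldsymbol{\Theta}$; indeed $L(\hat{\boldtheta}_N) \le \mathbb{E}_{\x}[\|s(\x; \hat{\boldtheta}_N)\|_2^2] + \mathbb{E}_{\x}[\|\nabla_{\x}\log p(\x)\|_2^2] < \infty$. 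Second, $\mu_P = \sqrt{\tr(\J_P)}$ is finite, which does not involve $\boldtheta$ at all and follows from $\J_P$ being well-defined under the regularity conditions. Since Cauchy--Schwarz, submultiplicativity, and the Fisher-divergence identity all hold for an arbitrary parameter value, the chain of inequalities terminates at $2 L(\hat{\boldtheta}_N) + 2 \mu_P \sqrt{2 L(\hat{\boldtheta}_N)}$, exactly as claimed; because this bound holds pointwise in $\hat{\boldtheta}_N$, it continues to hold after any further averaging over $\x_1^N$.

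There is essentially no obstacle: the corollary is simply Lemma \ref{lemma:expected_outer_bound} instantiated at a generic parameter value, and the only point requiring care is the measurability bookkeeping, namely making explicit that $\mathbb{E}_{\x}$ denotes expectation over a draw $\x \sim p(\x)$ while $\hat{\boldtheta}_N$ is held fixed (conditioned on the training data), so that the deterministic inequality of Lemma \ref{lemma:expected_outer_bound} may be applied conditionally on $\hat{\boldtheta}_N$.
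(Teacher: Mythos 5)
Your proposal is correct and is exactly what the paper intends: the paper omits the proof, noting it is a ``straightforward extension'' of Lemma \ref{lemma:expected_outer_bound}, and your argument spells out precisely why the chain of inequalities there (add-and-subtract, triangle inequality, submultiplicativity, Cauchy--Schwarz, and the identity $\mathbb{E}_{\x}[\|s(\x;\boldtheta)-\nabla_{\x}\log p(\x)\|_2^2]=2L(\boldtheta)$) holds for any fixed $\boldtheta\in\boldsymbol{\Theta}$ and hence, conditionally on the training data, for $\hat{\boldtheta}_N$. The two finiteness checks and the conditioning remark are the right points to make explicit.
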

\begin{proof}
This is a straightforward extension of Lemma \ref{lemma:expected_outer_bound} and the proof is thus omitted. 
\end{proof}
We now introduce the main results. 

\begin{theorem} 
\label{theorem:bayesianInfo2}
Assume Assumptions \ref{assumption:model}-\ref{assumption:sub_gaussian_nns} hold and the regularity conditions in Section \ref{sec:back_prelim} are satisfied. Then for any $\epsilon > 0$ and all $N$, the Bayesian information estimator satisfies, with probability at least $1 - \epsilon$,
\begin{align}
||\hat{\J}_B(\x_1^N) - \J_B||_\sigma &\leq C_3 \Big [ (C_P^2 + C_D^2) \mathrm{m} \left (\sqrt{\frac{D - \log(\epsilon)}{N}} \right ) + \frac{1}{\epsilon} \left ( L(\boldtheta^*) + \mu_P \sqrt{L(\boldtheta^*)}\right ) \nonumber \\
& \quad + \frac{1}{\epsilon N} \left (B\log(1/\epsilon) + \sqrt{R}(1 + \log(BN/\sqrt{R})\right) +  \frac{1}{\epsilon N^{1/4}}\left ( \mu_P \sqrt{B} \log(1/\epsilon)^{1/4} \right) \nonumber \\
&\quad + \frac{1}{\epsilon N^{1/2}} \left ( B \log(1/\epsilon) + \mu_P \sqrt{B \log(1/\epsilon)} + \mu_P R^{1/4}(1 + \log(BN/\sqrt{R}))^{1/2} \right)
\Big ],
\label{Bound1_NN}
\end{align}
where $C_3$ is a universal constant.
\end{theorem}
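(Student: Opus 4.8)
The plan is to mirror the structure of the proof sketch for Theorem~\ref{theorem:bayesianInfo} in the classical regime, replacing the classical score-matching rate (Theorem~\ref{theorem:finite_sample_bound}) with the neural-network score-matching bound (Theorem~\ref{theorem:mainNNscoreBound}) and replacing the parameter-space perturbation argument with a direct $L^2$-error argument via $L(\hat{\boldtheta}_N)$. First I would write $\hat{\J}_B(\x_1^N) - \J_B = (\hat{\J}_P(\x_1^N) - \J_P) + (\hat{\J}_D(\x_1^N) - \J_D)$ and bound each piece by the triangle inequality. For $\J_D$, note that $\hat{\J}_D(\x_1^N) = \frac{1}{N}\sum_i \J_F(\x_i)$ (or its Monte Carlo version) is a sample average of a quantity whose $\x$-marginal has mean $\J_D$; since $\nabla_\x \log p(\y\mid\x)$ is sub-Gaussian by Assumption~\ref{assumption:sub_gaussian_nns}, Lemma~\ref{lemma:sub_gaussian} gives $\|\hat{\J}_D(\x_1^N)-\J_D\|_\sigma \le C\,C_D^2\,\mathrm{m}(\sqrt{(D-\log\epsilon)/N})$ with probability $1-\epsilon/c$, accounting for the $C_D^2$ term.

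Next, for $\J_P$ I would insert the population-optimal score and split
$$
\hat{\J}_P(\x_1^N) - \J_P = \underbrace{\tfrac1N\sum_i \big(s(\x_i;\hat{\boldtheta}_N)s(\x_i;\hat{\boldtheta}_N)^T - s(\x_i;\boldtheta^*)s(\x_i;\boldtheta^*)^T\big)}_{\text{(a): score-estimation error}} + \underbrace{\tfrac1N\sum_i\big(s(\x_i;\boldtheta^*)s(\x_i;\boldtheta^*)^T - \nabla_\x\log p(\x_i)\nabla_\x\log p(\x_i)^T\big)}_{\text{(b): model mismatch}} + \underbrace{\tfrac1N\sum_i \nabla_\x\log p(\x_i)\nabla_\x\log p(\x_i)^T - \J_P}_{\text{(c): sampling error}}.
$$
Term~(c) is handled by Lemma~\ref{lemma:sub_gaussian} (using that the prior scores are bounded, hence sub-Gaussian with norm $C_P$, as noted after Assumption~\ref{assumption:sub_gaussian_nns}), giving the $C_P^2$ contribution. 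For term~(b), I would take its expectation via Lemma~\ref{lemma:expected_outer_bound}, obtaining $\mathbb{E}_\x[\|\cdot\|_\sigma] \le 2L(\boldtheta^*) + 2\mu_P\sqrt{2L(\boldtheta^*)}$, then Markov's inequality converts this to the $\frac1\epsilon(L(\boldtheta^*)+\mu_P\sqrt{L(\boldtheta^*)})$ term with the $\frac1N\sum_i$ replaced by its expectation (the sampling fluctuation of (b) being absorbed into the same order or a lower-order term).

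The crux is term~(a), the score-estimation error, which must produce the remaining (more intricate) terms of \eqref{Bound1_NN}. I would bound $\|s(\x;\hat{\boldtheta}_N)s(\x;\hat{\boldtheta}_N)^T - s(\x;\boldtheta^*)s(\x;\boldtheta^*)^T\|_\sigma$ in terms of $\|s(\x;\hat{\boldtheta}_N)-\nabla_\x\log p(\x)\|_2$ and $\|s(\x;\boldtheta^*)-\nabla_\x\log p(\x)\|_2$ (adding and subtracting the true score, exactly as in the proof of Lemma~\ref{lemma:expected_outer_bound}), so that after taking expectations and using Cauchy--Schwarz the governing quantities are $\mathbb{E}_\x[\|s(\x;\hat{\boldtheta}_N)-\nabla_\x\log p(\x)\|_2^2] = 2L(\hat{\boldtheta}_N)$ and $2L(\boldtheta^*)$, together with cross terms involving $\mu_P$ (via Corollary~\ref{corollary:expected_outer_bound}). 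Then I substitute the bound on $L(\hat{\boldtheta}_N)$ from Theorem~\ref{theorem:mainNNscoreBound}, namely $L(\hat{\boldtheta}_N) \lesssim L(\boldtheta^*) + \sqrt{B^2\log(1/\epsilon)/N} + B\log(1/\epsilon)/N + \sqrt{R}(1+\log(BN/\sqrt{R}))/N$. Taking square roots and products of these terms, and using subadditivity of $\sqrt{\cdot}$ and $\sqrt{a+b}\le\sqrt a+\sqrt b$, the $\sqrt{L(\hat{\boldtheta}_N)}$ and $L(\hat{\boldtheta}_N)$ expansions generate exactly the listed families of terms: $\frac1{\epsilon N}(B\log(1/\epsilon)+\sqrt R(1+\log(BN/\sqrt R)))$ from the $L(\hat{\boldtheta}_N)$ part, $\frac{1}{\epsilon N^{1/4}}\mu_P\sqrt B\log(1/\epsilon)^{1/4}$ from $\mu_P\sqrt{L(\hat{\boldtheta}_N)}$ picking up the $(B^2\log(1/\epsilon)/N)^{1/4}$ piece, and the $\frac1{\epsilon N^{1/2}}$ family from the $\sqrt{N}$-rate cross terms. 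The Markov-inequality step (applied to the expectations of (a) and (b)) is what introduces the overall $1/\epsilon$ prefactors; taking a union bound over the $O(1)$ probabilistic events and absorbing constants into a single universal $C_3$ finishes the proof. The main obstacle is bookkeeping: carefully propagating the composite bound on $L(\hat{\boldtheta}_N)$ — which itself has three distinct rates in $N$ — through a product/square-root structure and confirming that every resulting cross term is dominated by one of the displayed terms, while keeping the $\mu_P$, $B$, $R$, $C_P$, $C_D$ dependencies in the claimed positions.
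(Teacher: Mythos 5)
Your proposal is correct and follows essentially the same route as the paper: decompose the error into the $\J_D$ and $\J_P$ sampling terms (handled by Lemma \ref{lemma:sub_gaussian} and sub-Gaussianity), a score-error term controlled in expectation via Lemma \ref{lemma:expected_outer_bound}/Corollary \ref{corollary:expected_outer_bound} and converted to a high-probability bound by Markov's inequality, and then substitute the bound on $L(\hat{\boldtheta}_N)$ from Theorem \ref{theorem:mainNNscoreBound} before a final union bound. The only cosmetic difference is that you insert $\boldtheta^*$ as an intermediate point (as in the classical Theorem \ref{theorem:bayesianInfo}), whereas the paper compares $s(\cdot;\hat{\boldtheta}_N)$ directly to the true score via Corollary \ref{corollary:expected_outer_bound}; the resulting terms are the same since $L(\boldtheta^*)\le L(\hat{\boldtheta}_N)$.
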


\begin{proof}[Proof Sketch]
As in the proof of Theorem \ref{theorem:bayesianInfo}, the triangle inequality is used to bound the Bayesian information error, which reduces the problem to bounding the by error in the sample-based estimation of $\J_D$ and $\J_P$, as well as a score matching error term. We again use Lemma \ref{lemma:sub_gaussian} to bound the sample-based estimation of $\J_D$ and $\J_P$, while the score matching error is bounded using Theorem \ref{theorem:mainNNscoreBound} and Corollary \ref{corollary:expected_outer_bound}.

\end{proof}

At this point it is worth making a couple of remarks regarding the major differences between Theorem \ref{theorem:bayesianInfo} and Theorem \ref{theorem:bayesianInfo2}. Specifically, while both theorems provide the same dependence on the score matching model mismatch and the finite-sample covariance matrix estimation, their dependence on the score matching generalization error differs. Theorem \ref{theorem:bayesianInfo} leverages the local quadratic assumption (Assumption \ref{assumption:locally_quadratic}) to provide $1/\sqrt{N}$ dependence on the number of samples $N$ and $\sqrt{P}$ dependence on the number of model parameters $P$ once the estimated score parameters $\hat{\boldtheta}_N$ enter the locally quadratic neighborhood of $\boldtheta^*$. In contrast, while Theorem \ref{theorem:bayesianInfo2} provides worse $1/N^{1/4}$ dependence on $N$, the bound has $\sqrt{L}$ dependence on the model depth but only log dependence on the network width. As a consequence, in the infinite-width limit the bound has only log dependence on the number of model parameters, allowing the model depth to grow polynomially and the width to grow exponentially with the number of samples without comprising the bound. 

We now introduce the final result, which is a straightforward consequence of Theorem \ref{theorem:bayesianInfo2} and the proof techniques used in Theorem \ref{theorem:BayesianCRB}.

\begin{theorem}
\label{theorem:BayesianCRB2}
Assume Assumptions \ref{assumption:model}-\ref{assumption:sub_gaussian_nns} and the regularity conditions in Section \ref{sec:back_prelim} hold and that the model is well-specified, i.e., $L(\boldtheta^*) = 0$. Then there exists a constant $C_{V'}$ such that for any $\epsilon > 0$ and any $N \geq C_{V'} \mathrm{max} \{D - \log(\epsilon),  1/\epsilon^4 \}$, the Bayesian CRB estimator satisfies, with probability at least $1 - \epsilon$,
\begin{align}
||\hat{\V}_B(\x_1^N) - \V_B||_\sigma &\leq C_4 ||\V_B ||_\sigma^2 \Big [ (C_P^2 + C_D^2) \mathrm{m} \left (\sqrt{\frac{D - \log(\epsilon)}{N}} \right ) + \frac{1}{\epsilon} \left ( L(\boldtheta^*) + \mu_P \sqrt{L(\boldtheta^*)}\right ) \nonumber \\
& \quad + \frac{1}{\epsilon N} \left (B\log(1/\epsilon) + \sqrt{R}(1 + \log(BN/\sqrt{R})\right) +  \frac{1}{\epsilon N^{1/4}}\left ( \mu_P \sqrt{B} \log(1/\epsilon)^{1/4} \right) \nonumber \\
&\quad + \frac{1}{\epsilon N^{1/2}} \left ( B \log(1/\epsilon) + \mu_P \sqrt{B \log(1/\epsilon)} + \mu_P R^{1/4}(1 + \log(BN/\sqrt{R}))^{1/2} \right)
\Big ],
\end{align}
where $C_4$ is a universal constant. 
\end{theorem}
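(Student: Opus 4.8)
The plan is to follow the argument used to prove Theorem \ref{theorem:BayesianCRB}, substituting the neural-network Bayesian information bound (Theorem \ref{theorem:bayesianInfo2}) for the classical one (Theorem \ref{theorem:bayesianInfo}). First I would condition on the (to-be-verified) event that $\hat{\J}_B(\x_1^N)$ is invertible and use the resolvent-type identity
$$
\hat{\V}_B(\x_1^N) - \V_B = \hat{\V}_B(\x_1^N)\left (\J_B - \hat{\J}_B(\x_1^N)\right )\V_B,
$$
expanding $\hat{\V}_B(\x_1^N) = (\hat{\V}_B(\x_1^N) - \V_B) + \V_B$ on the right-hand side exactly as in that proof. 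Writing $\Delta \triangleq \J_B - \hat{\J}_B(\x_1^N)$, taking spectral norms and applying the triangle inequality and submultiplicativity yields the self-referential bound
$$
\| \hat{\V}_B(\x_1^N) - \V_B \|_\sigma \left (1 - \| \Delta \|_\sigma \| \V_B \|_\sigma\right ) \leq \| \V_B \|_\sigma^2 \| \Delta \|_\sigma,
$$
so that once $\| \Delta \|_\sigma \leq 1/(2\|\V_B\|_\sigma)$ we simultaneously recover invertibility of $\hat{\J}_B(\x_1^N)$ and obtain $\| \hat{\V}_B(\x_1^N) - \V_B \|_\sigma \leq 2 \| \V_B \|_\sigma^2 \| \Delta \|_\sigma$.

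The second step is to invoke Theorem \ref{theorem:bayesianInfo2} with $L(\boldtheta^*) = 0$ to control $\|\Delta\|_\sigma$. Under this specialization the model-mismatch term vanishes and the bound on $\|\Delta\|_\sigma$ is a sum of terms decaying as $\mathrm{m}(\sqrt{(D - \log\epsilon)/N})$, $1/(\epsilon N)$, $1/(\epsilon N^{1/2})$, and $1/(\epsilon N^{1/4})$, up to logarithmic-in-$N$ factors of the form $1 + \log(BN/\sqrt{R})$. I would then argue that the threshold $N \geq C_{V'}\max\{D - \log(\epsilon), 1/\epsilon^4\}$ makes each of these terms at most a fixed problem-dependent constant times a decreasing function of $C_{V'}$: when $N \geq C_{V'}(D - \log\epsilon)$ the argument of $\mathrm{m}(\cdot)$ is bounded by $1/\sqrt{C_{V'}} \leq 1$, so $\mathrm{m}$ acts as the identity and the term is $O(1/\sqrt{C_{V'}})$; when $N \geq C_{V'}/\epsilon^4$ the slowest term $1/(\epsilon N^{1/4})$ is $O(1/C_{V'}^{1/4})$ and the faster-decaying terms are even smaller; and the logarithmic corrections are harmless since any positive power of $N$ dominates $\log N$. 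Choosing $C_{V'}$ large enough therefore forces $\|\Delta\|_\sigma \leq 1/(2\|\V_B\|_\sigma)$ on the event of Theorem \ref{theorem:bayesianInfo2}, which holds with probability at least $1 - \epsilon$.

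Finally I would substitute the Theorem \ref{theorem:bayesianInfo2} bound (with $L(\boldtheta^*)=0$) for $\|\Delta\|_\sigma$ into $\| \hat{\V}_B(\x_1^N) - \V_B \|_\sigma \leq 2 \| \V_B \|_\sigma^2 \| \Delta \|_\sigma$, absorb the numerical factor $2$ and the constant $C_3$ into a new universal constant $C_4$, and conclude on the same probability-$(1-\epsilon)$ event, giving the stated inequality. The only genuine subtlety — and the step I would treat most carefully — is the bookkeeping in the second step: because the neural-network Bayesian information bound decays only like $N^{-1/4}$ in its worst term rather than $N^{-1/2}$ as in the classical case, the sample-size threshold must scale as $1/\epsilon^4$ instead of $1/\epsilon^2$, and one must verify that this single choice of threshold simultaneously controls the $\mathrm{m}(\cdot)$ term, the $N^{-1/4}$ term, and the logarithmic corrections. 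Everything else is a mechanical transcription of the proof of Theorem \ref{theorem:BayesianCRB}.
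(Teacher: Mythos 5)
Your proposal is correct and is exactly the argument the paper intends: the paper omits the proof of Theorem \ref{theorem:BayesianCRB2}, stating only that it is "similar to that of Theorem \ref{theorem:BayesianCRB}," and your transcription — the resolvent identity, the self-referential spectral-norm bound, invoking Theorem \ref{theorem:bayesianInfo2} with $L(\boldtheta^*)=0$, and choosing $C_{V'}$ so that $\|\Delta\|_\sigma \leq 1/(2\|\V_B\|_\sigma)$ — is precisely that adaptation. You also correctly identify the one genuine modification, namely that the $N^{-1/4}$ worst-case decay of the neural-network bound forces the sample-size threshold to scale as $1/\epsilon^4$ rather than $1/\epsilon^2$.
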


\begin{proof}
The proof is similar to that of Theorem \ref{theorem:BayesianCRB} and is thus omitted. 
\end{proof}

\section{Numerical Experiments}
\label{sec:num_experiments}

\subsection{Signal Denoising}

We first illustrate the performance of the proposed approach on the following simple denoising example:
$$
\y = \x + \mathbf{z}, \quad \mathbf{z} \sim N(\mathbf{0}, \tau^2 \mathbf{I}).
$$
Here $\x$ is a ten-dimensional random vector with the following Gaussian mixture prior distribution:
$
p(\x) = .4 p_1(\x) + .3 p_2(\x) + .3 p_3(\x),
$
where $p_1(\x)$ has mean $[-5, \cdots, -5]^T$ and an identity covariance matrix, $p_2(\x)$ has mean $[0, \cdots, 0]^T$ and a diagonal covariance matrix whose diagonal entries are linearly spaced between $1$ and $2$, and $p_3(x)$ has mean $[5, \cdots, 5]^T$ and a covariance matrix with the same eigenvalues as those of $p_2(\x)$ but randomly chosen eigenvectors.

\begin{figure*}
    \centering
    \includegraphics[width = \textwidth]{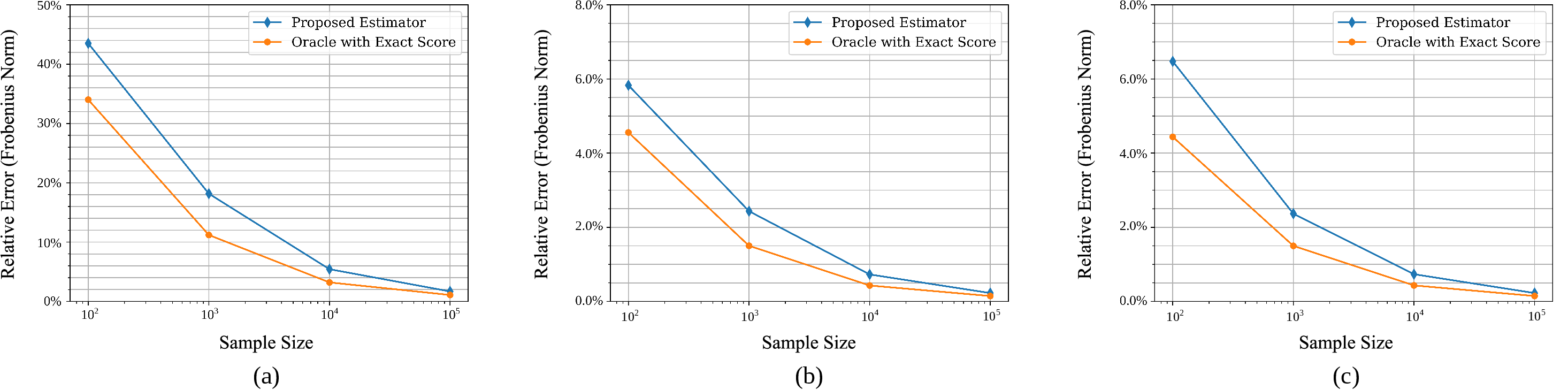}
    \caption{ Relative error in estimation of (a) the prior Fisher information $\J_P$, (b) the Bayesian information $\J_B$, and (c) the Bayesian CRB $\V_B$ as
a function of sample size. The
performance of an oracle that has access to the ground-truth prior score $\nabla_{\x} \log p(\x)$ is included as a reference at different sample sizes. When the dataset size is small, the error in $\J_P$ estimation is high, but the error decreases monotonically with $N$ and comparison with the oracle error shows most of the error is attributable to Monte-Carlo estimation of the score covariance matrix, not the score estimation. Further, the error in the Bayesian CRB estimation is under $5\%$ for all dataset sizes.}
    \label{fig:n_vs_error}
\end{figure*}
\begin{figure}
    \centering
    \includegraphics[width = .4\columnwidth]{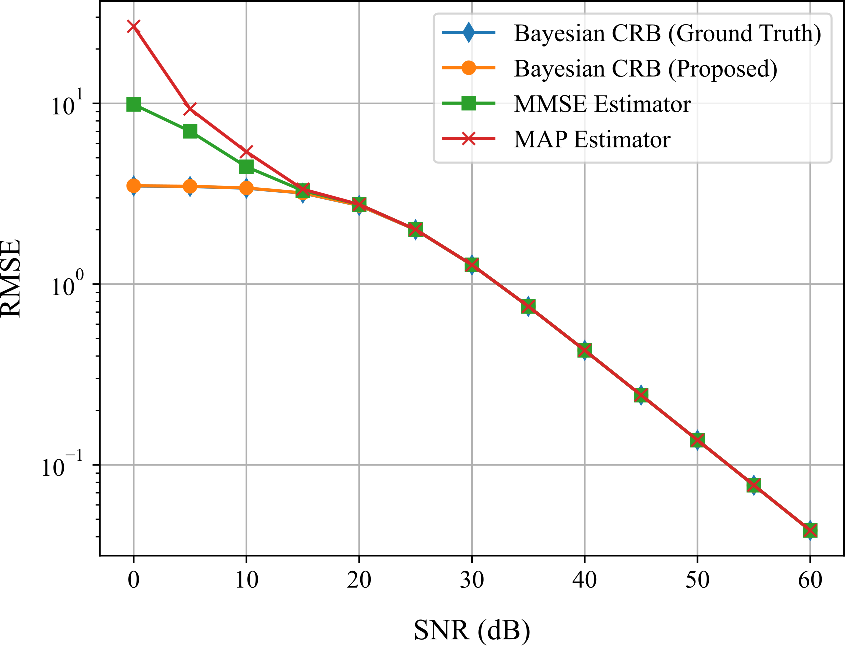}
    \caption{Root-mean-square error (RMSE) as a function of signal-to-noise ratio (SNR) for the ground-truth Bayesian CRB, the proposed estimator, and the minimum mean square error (MMSE) and maximum a posterior (MAP) estimators. Here the proposed estimator was implemented with $N = 10^4$ samples from the prior distribution, and for each SNR level $20{,}000$ samples from $p(\x, \y)$ were used to calculate the MAP and MMSE estimators' RMSEs. The Bayesian CRB estimator is visually identical to the ground truth Bayesian CRB and provides a tight lower bound on the MSE in the high SNR regime. }
    \label{fig:snr_vs_rmse}
\end{figure}

Note that this problem has a linear Gaussian data model, for which $\J_D$ can be computed analytically, i.e., $\J_D = (1/\tau^2)\mathbf{I}$. Here we focus on the proposed estimator for $\J_P$. Specifically, we examined the performance of the proposed approach as a function of the number of available prior samples $N$, with $N = 10^2, 10^3, 10^4$, or $10^5$.

For each $N$, the score-estimator was implemented using a vector-valued fully connected neural network with Softplus activations. Here the network had five hidden layers and width $1000$ for the $N = 10^2$ and $N=10^3$ cases, while five hidden layers were used for the $N = 10^4$ and $N=10^5$ cases, with a network width of  $50$ (the $N = 10^4$ case) or $200$ (the $N = 10^5$ case). Note that this model satisfies the assumptions in Section \ref{sec:convergeNN} regarding the network structure. During network training, the given dataset of $N$ samples was split into training and validation sets; here the validation set consisted of $5000$ samples ($N=10^5$ case), $1000$ samples ($N=10^4$ case), or $80\%$ of the total dataset ($N=10^2$ and $N=10^3$ cases). The networks were trained using the Adam optimizer \cite{kingma2015} with a learning rate of $10^{-4}$ (the $N = 10^4$ case) or $10^{-5}$ (all other cases); the batch size was $8000$ ($N = 10^5$ and $N=10^4$ cases) or set equal to the size of the training set ($N = 10^2$ and $N=10^3$ cases). Training was terminated when the score matching loss on this validation set had not improved in $200$ iterations; the network weights corresponding to the best validation loss were saved. 

For this problem, we have access to the ground truth score $\nabla_\x \log p(\x)$. Thus, we can calculate $\J_P$ using the ground truth scores, where the only source of estimation error is from Monte-Carlo sampling, and use this oracle estimator as a reference to assess the sources of error in the proposed estimator $\hat{\J}_P$. Together with $\J_D$, we can also provide references for the Bayesian information and Bayesian CRB estimators. Fig.~\ref{fig:n_vs_error} shows the relative errors in the estimation of $\J_P$, $\J_B$, and $\V_B$ at different sample sizes with signal-to-noise ratio (SNR) $= 10 \log_{10} \left ( \mathbb{E}_{\x} [ || \x ||_2^2 ] / \tau^2 \right ) = 30 \, \mathrm{dB}$. Here we treated the oracle with $10^{6}$ samples as the ground truth. As can be seen, the proposed method provides comparable performance with respect to the oracle. In particular, they both attain under $5\%$ relative error in the Bayesian information and Bayesian CRB estimation for all $N$. 

To examine the effectiveness of the proposed method as a lower bound on the MSE of estimators, we also implemented the minimum mean square error (MMSE) and maximum a posterior (MAP) estimators with the true prior distribution. Note that for the above denoising problem, it can be shown that the posterior distribution $p(\x|\y)$ is a Gaussian mixture, from which we can compute the MMSE estimator analytically. The MAP estimator was obtained by running gradient ascent on the posterior distribution. Fig.~\ref{fig:snr_vs_rmse} shows the root-mean-square error (RMSE) as a function of the SNR level for the ground-truth Bayesian CRB and the estimated Bayesian CRB, as well as the MAP and MMSE estimators. As can be seen, the proposed Bayesian CRB estimator is rather close to the ground-truth Bayesian CRB, which provides a tight lower bound on the MSE in the high-SNR regime.

\subsection{Phase Offset Estimation}

Dynamical phase offset estimation problem arises in digital receiver synchronization, which has applications in the design and analysis of communication systems. Here we consider the problem formulation described in \cite{bay2007analytic}. In this formulation, the received signal $\y = [y_1, \dots, y_D]^T$ is modeled by
$$
y_d = a_d e^{j x_d} + z_d, \quad d = 1, \dots, D,
$$
where $j$ is the imaginary unit, $x_d$ is the signal phase, $a_d$ is the transmitted symbol, which is modeled as a Rademacher random variable, and $z_d$ is zero-mean circular Gaussian noise with known variance $\tau_n^2$. The goal of this problem is to estimate $\x = [x_1, \dots, x_D]^T$ from the received data $\y$, with the symbols $a_d$ marginalized out in the likelihood definition $p(\y \mid \x)$. For the prior distribution, we adopt the prior considered in \cite{bay2007analytic} as the ground truth to enable quantification of the error in our estimator. This prior is given by a Wiener phase-offset evolution, i.e., 
$$
x_d  = x_{d-1} + w_d, \quad d = 2, \dots, D.
$$
the $w_d$ is i.i.d. Gaussian noise, i.e., $w_d \sim \mathcal{N}(0, \tau_w^2)$, and in this work we assume $x_0 \sim \mathcal{N}(0, 1)$ and set $D = 10$ and $\tau_w = .2$. See \cite{bay2007analytic} for more details of the problem setting.

\begin{figure}
    \centering
    \includegraphics[width = \textwidth]{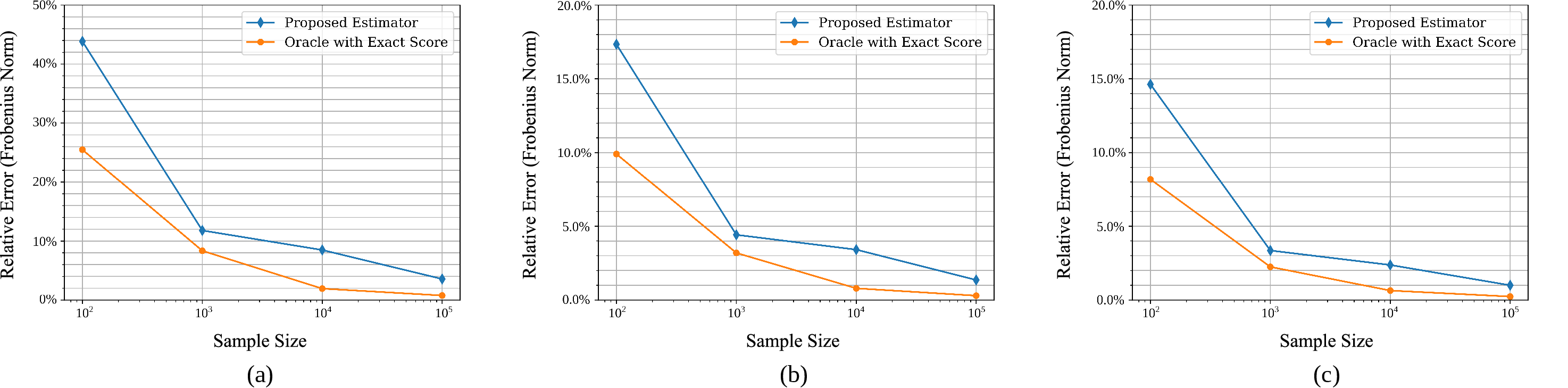}
    \caption{Relative error in the estimation of (a) the prior term $\J_P$, (b) the Bayesian information $\J_B$, and (c) the Bayesian CRB $\V_B$ as a function of the dataset size ($\tau_n^2 = .02$, SNR $\approx \mathrm 13 \; \mathrm{dB}$) for the phase offset Bayesian inverse problem. As expected, the error in the proposed approach is higher for small sizes, but comparison with the oracle error shows this mostly attributable to Monte-Carlo estimation of the second moment of the score, not the score estimation. Further, the error decreases monotonically with $N$, and the error in estimation of the Bayesian information and Bayesian CRB is under $5\%$ for $N \geq 10^3$.}
    \label{fig:rev_phasen}
\end{figure}

As in the signal denoising problem example, we first examine the performance of our estimator as a function of the dataset size $N$ with $N=10^2$, $10^3$, $10^4$, or $10^5$. For this problem, the score estimator in the proposed approach was implemented as a vector-valued fully-connected neural network with two hidden layers, Softplus activations, and network width of $1000$ for all $N$ (note that this is the same architecture as used in the signal denoising problem for $N = 10^2$ and $10^3$). The Adam optimizer \cite{kingma2015} was used with a learning rate of $10^{-5}$; the training and validation set sizes and the batch sizes for each $N$ were set as in the signal denoising example. The prior term estimate $\hat{J}_P$ was then formed from the score estimate using \eqref{eq:Jp_estimator}. To estimate $\J_D$, we take advantage of the special form $\J_D$ for this problem (see Eq. 14 in \cite{bay2007analytic}): 
$$
\J_D = J_D \mathbf{I}, \quad J_D \triangleq \mathbb{E}_{\y, \x} \left [ - \frac{\partial^2 \log p(y_d \mid x_d)}{\partial x_d^2} \right ].
$$
Specifically, we construct the following Monte-Carlo estimator of $\J_D$ using the given dataset $\x_1^N$:
\begin{equation}
\label{eq:jd_phase}
\hat{\J}_D = \hat{J}_D \mathbf{I}, \quad \hat{J}_D \triangleq \frac{1}{MN} \sum_{j=1}^{M} \sum_{i=1}^N \left [ - \frac{\partial^2 \log p(y_d^{ij} \mid x_d^{i})}{\partial x_d^2} \right ],
\end{equation}
where the $\y^{ij}$ are i.i.d. samples from the known likelihood function $p(\y \mid \x_i)$, the second derivative of the log-likelihood is given by Eq. 32 in \cite{bay2007analytic}, and here we set $M = 10$ for all $N$. 

Fig. \ref{fig:rev_phasen} shows the relative error of the proposed approach in estimation of $\J_P$, $\J_B$, and $\V_B$ as a function of the dataset size ($\tau_n = \sqrt{.2}$, SNR $ \approx 13 \; \mathrm{dB}$). As can be seen, the performance of the proposed approach for this problem is similar to its performance for the mixture of Gaussians denoising example, illustrating the robustness of the proposed approach to the prior and likelihood function of the given problem. In particular, the relative error decreases monotonically with $N$ and achieves a relative error in estimation of $\J_B$ and $\V_B$ of under $5 \%$ for $N \geq 10^3$. 

For this problem, we also examined the performance of the proposed approach for different choices of SNR.  Fig. \ref{fig:revsnr_phase} shows the relative error in estimation of the Bayesian information and Bayesian CRB for the proposed approach with SNRs in the range $[-10, 60] \; \mathrm{dB}$, where here the proposed approach was implemented with $N = 10^4$ samples using the setup described above.  As can be seen, the proposed approach provides a relative error in the Bayesian CRB estimation under $5\%$ at all SNR levels, with the highest error in the low SNR regime where the prior-informed term $\J_P$ dominates. This further demonstrates the robustness of the proposed approach to the given problem setting. \\

\begin{figure}
    \centering
    \includegraphics[width = .66\textwidth]{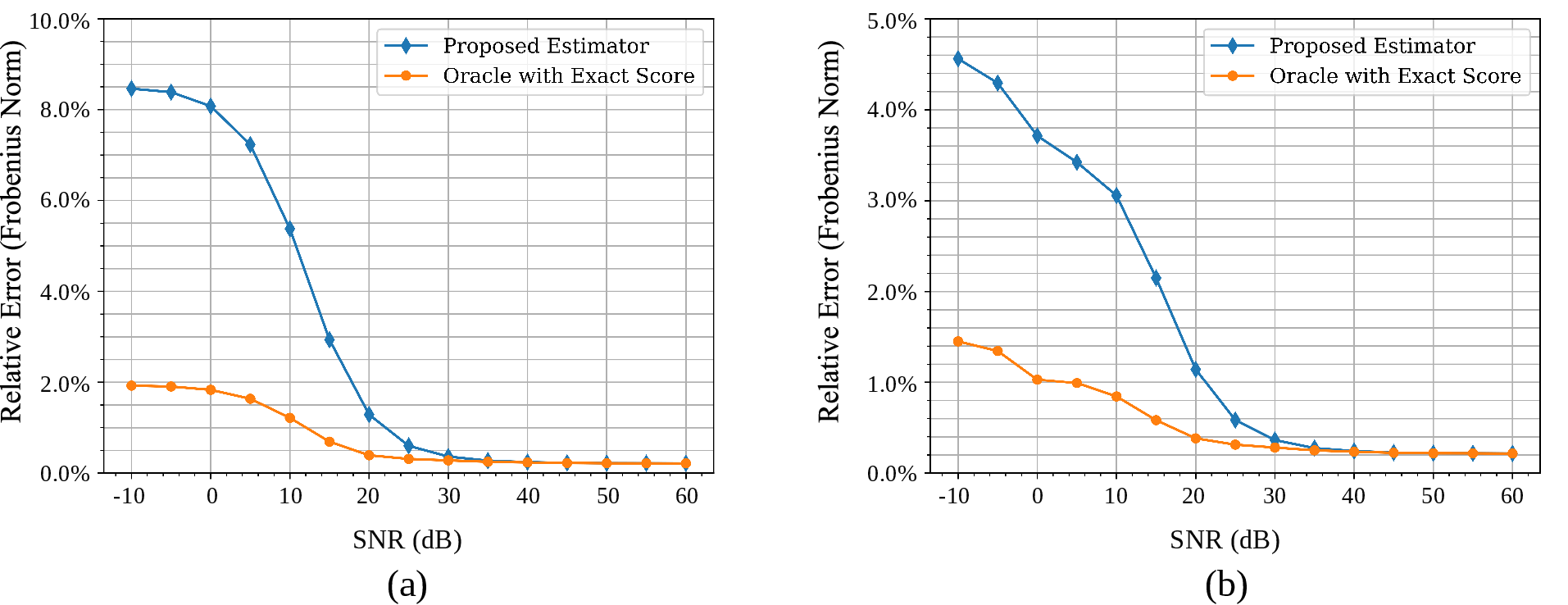}
    \caption{Relative error in the estimation of (a) the Bayesian information $\J_B$ and (b) the Bayesian CRB $\V_B$ as a function of the signal-to-noise ratio (SNR) for the phase offset estimation problem, where here the proposed approach was implemented with $N = 10^4$ samples. The performance of an oracle that has access to the ground truth score function $\nabla_{\x} \log p(\x)$ is included for reference. As can be seen, the proposed approach provides a relative error in the Bayesian CRB estimation of under $5\%$ at all SNR levels. Here note that the performance of the proposed estimator for this problem is similar to that of the denoising example, which again demonstrates the efficacy of the proposed approach. }
    \label{fig:revsnr_phase}
\end{figure}

\section{Conclusions and Future Directions}
\label{sec:conclusion}

This paper presented a new data-driven approach for the Bayesian CRB estimation. The proposed approach incorporates score matching, a statistical estimation technique that underpins a new class of state-of-the-art generative modeling techniques, to model the prior distribution. To characterize the proposed estimator, we considered two different modeling regimes: a classical parametric regime, and a neural network modeling regime, where the score model is a neural network. In both regimes, we have established non-asymptotic bounds on the errors of score matching and Bayesian CRB estimation. Our proofs draw upon results in empirical process theory, building off of both classical theory and recently developed techniques for characterizing neural networks. We evaluated the performance of the proposed estimator on two example problems: a signal denoising problem with a Gaussian mixture prior, and a dynamic phase offset estimation problem. The results demonstrate that the proposed estimator provides accurate and robust estimation performance for different prior distributions and likelihood functions.  

It is worth mentioning several interesting open problems that require further in-depth investigation. First, the proposed method assumes that the data model is given, along with a set of i.i.d. samples from the prior distribution. In future work, it would be useful to generalize the proposed approach to the estimation of the Bayesian CRB or the classical CRB in the setting where the data model is unknown but only a set of samples $\{ \x_i, \y_i \}_{i=1}^N$, $\x_i, \y_i \sim p(\x, \y)$, are available. In this setting, a key technical challenge would be estimating the score of the likelihood function, which, unlike the prior score, depends on both the unknown parameters $\x$ and the measurements $\y$. It is also of interest to investigate the extension of the proposed approach to other score-based Bayesian bounds, such as the Bayesian Bhattacharyya bound \cite{trees_2007_bayesian}.  Second, the numerical experiments in this work consider two relatively low-dimensional statistical estimation problems with known prior distributions. In future work, it would be interesting to apply the proposed approach to high-dimensional real-world problems, such as those in the context of imaging, where Cram{\'e}r-Rao type bounds are often used to guide the design of imaging systems and benchmark the performance of statistical estimators (see, e.g., \cite{roessl2009cramer, zhao_2014_model, zhao_2019_optimal, scope2022efficient,crafts2024bayesian}). Here key challenges include the lack of a ground truth prior when analyzing the error in our estimator and the high dimensionality of the problem, which may necessitate the incorporation of regularization or other techniques to yield accurate estimation of Cram{\'e}r-Rao type bounds. Finally, regarding the non-asymptotic estimator bounds, interesting future directions include the development of information-theoretic lower bounds that would provide insights into the optimality of our results, and the generalization of the neural network model based bounds in Section \ref{sec:convergeNN} to other advanced neural network architectures, such as transformer architectures \cite{attention_2017}.

\bibliographystyle{IEEEtran.bst}

\bibliography{ref.bib}

\begin{thebibliography}{10}
\providecommand{\url}[1]{#1}
\csname url@samestyle\endcsname
\providecommand{\newblock}{\relax}
\providecommand{\bibinfo}[2]{#2}
\providecommand{\BIBentrySTDinterwordspacing}{\spaceskip=0pt\relax}
\providecommand{\BIBentryALTinterwordstretchfactor}{4}
\providecommand{\BIBentryALTinterwordspacing}{\spaceskip=\fontdimen2\font plus
\BIBentryALTinterwordstretchfactor\fontdimen3\font minus \fontdimen4\font\relax}
\providecommand{\BIBforeignlanguage}[2]{{%
\expandafter\ifx\csname l@#1\endcsname\relax
\typeout{** WARNING: IEEEtran.bst: No hyphenation pattern has been}%
\typeout{** loaded for the language `#1'. Using the pattern for}%
\typeout{** the default language instead.}%
\else
\language=\csname l@#1\endcsname
\fi
#2}}
\providecommand{\BIBdecl}{\relax}
\BIBdecl

\bibitem{rao_1945_information}
C.~Rao, ``Information and the accuracy attainable in the estimation of statistical parameters,'' \emph{{Bull. Calcutta Math. Soc.}}, vol.~37, pp. 81--91, 1945.

\bibitem{cramer1946}
H.~Cram\'er, \emph{Mathematical Methods of Statistics}.\hskip 1em plus 0.5em minus 0.4em\relax Princeton: Princeton Univ. Press, 1946.

\bibitem{Stoica1989}
P.~Stoica and A.~Nehorai, ``{MUSIC}, maximum likelihood, and {C}ramer-{R}ao bound,'' \emph{IEEE Trans. Audio Speech Lang. Process.}, vol.~37, pp. 720--741, 1989.

\bibitem{zhao_2014_model}
B.~Zhao, F.~Lam, and Z.-P. Liang, ``Model-based {MR} parameter mapping with sparsity constraints: {P}arameter estimation and performance bounds,'' \emph{IEEE Trans. Med Imaging}, vol.~33, no.~9, pp. 1832--1844, 2014.

\bibitem{diskin2023learning}
T.~Diskin, Y.~C. Eldar, and A.~Wiesel, ``Learning to estimate without bias,'' \emph{IEEE Trans. Signal Process.}, vol.~71, pp. 2162--2171, 2023.

\bibitem{li_2007_range}
J.~Li, L.~Xu, P.~Stoica, K.~W. Forsythe, and D.~W. Bliss, ``Range compression and waveform optimization for {MIMO} radar: A {C}ram{\'e}r-{R}ao bound based study,'' \emph{IEEE Trans. Signal Process.}, vol.~56, no.~1, pp. 218--232, 2007.

\bibitem{zhao_2019_optimal}
B.~Zhao, J.~P. Haldar, C.~Liao, D.~Ma, Y.~Jiang, M.~A. Griswold, K.~Setsompop, and L.~L. Wald, ``Optimal experiment design for magnetic resonance fingerprinting: {C}ram\'er-{R}ao bound meets spin dynamics,'' \emph{IEEE Trans. Med Imaging}, vol.~38, no.~3, pp. 844--861, 2019.

\bibitem{crafts2022}
E.~Scope~Crafts, H.~Lu, H.~Ye, L.~L. Wald, and B.~Zhao, ``An efficient approach to optimal experimental design for magnetic resonance fingerprinting with {B}-splines,'' \emph{Magn. Reson. Med}, vol.~88, pp. 239--253, 2022.

\bibitem{bell_2013_detection}
H.~L. Van~Trees and K.~Bell, \emph{Detection, Estimation, and Modulation Theory, Part I}, 2nd~ed.\hskip 1em plus 0.5em minus 0.4em\relax New York: Wiley, 2013.

\bibitem{trees_2007_bayesian}
------, \emph{Bayesian Bounds for Parameter Estimation and Nonlinear Filtering/Tracking}.\hskip 1em plus 0.5em minus 0.4em\relax New York: Wiley-IEEE Press, 2007.

\bibitem{Robinson2004}
D.~Robinson and P.~Milanfar, ``Fundamental performance limits in image registration,'' \emph{IEEE Trans. Image Process.}, vol.~13, pp. 1185--1199, 2004.

\bibitem{Aguerrebere2016}
C.~Aguerrebere, M.~Delbracio, A.~Bartesaghi, and G.~Sapiro, ``Fundamental limits in multi-image alignment,'' \emph{IEEE Trans. Signal Process.}, vol.~64, pp. 5707--5722, 2016.

\bibitem{Tichavsky1998}
P.~Tichavsk\'y, C.~H. Muravchik, and A.~Nehorai, ``{Posterior Cram\'er-Rao bounds for discrete-time nonlinear filtering},'' \emph{IEEE Trans. Signal Process.}, vol.~46, pp. 1386--1396, 1998.

\bibitem{Oktel2005}
U.~Oktel and R.~L. Moses, ``A {B}ayesian approach to array geometry design,'' \emph{IEEE Trans. Signal Process.}, vol.~53, pp. 1919--1923, 2005.

\bibitem{prince2015medical}
J.~L. Prince and J.~M. Links, \emph{Medical Imaging Signals and Systems}, 2nd~ed.\hskip 1em plus 0.5em minus 0.4em\relax Upper Saddle River: Pearson, 2015.

\bibitem{Madhow_2014}
U.~Madhow, \emph{Introduction to Communication Systems}.\hskip 1em plus 0.5em minus 0.4em\relax Cambridge Univ. Press, 2014.

\bibitem{Bouman1993}
C.~Bouman and K.~Sauer, ``A generalized {G}aussian image model for edge-preserving {MAP} estimation,'' \emph{IEEE Trans. Image Process.}, vol.~2, pp. 296--310, 1993.

\bibitem{Babacan2010}
S.~D. Babacan, R.~Molina, and A.~K. Katsaggelos, ``Bayesian compressive sensing using {L}aplace priors,'' \emph{IEEE Trans. Image Process.}, vol.~19, pp. 53--63, 2010.

\bibitem{deng2009}
J.~Deng, W.~Dong, R.~Socher, L.-J. Li, K.~Li, and L.~Fei-Fei, ``{ImageNet}: {A} large-scale hierarchical image database,'' in \emph{Proc. IEEE Conf. Comput. Vis. Pattern Recognit.}, 2009, pp. 248--255.

\bibitem{duy_fisher_2023}
T.~T. Duy, N.~V. Ly, N.~L. Trung, and K.~Abed-Meraim, ``Fisher information estimation using neural networks,'' \emph{REV Journal on Electronics and Communications}, Jun. 2023.

\bibitem{har_nonparametric_2016}
O.~Har-Shemesh, R.~Quax, B.~Miñano, A.~G. Hoekstra, and P.~M.~A. Sloot, ``Nonparametric estimation of {Fisher} information from real data,'' \emph{Phys. Rev. E.}, vol.~93, no.~2, p. 023301, 2016.

\bibitem{kinney2014estimation}
J.~B. Kinney, ``Estimation of probability densities using scale-free field theories,'' \emph{Phys. Rev. E.}, vol.~90, no.~1, p. 011301, 2014.

\bibitem{habi_generative_2022}
H.~V. Habi, H.~Messer, and Y.~Bresler, ``A generative {Cramér}-{Rao} bound on frequency estimation with learned measurement distribution,'' in \emph{Proc. IEEE 12th Sensor Array Multichannel Signal Process. Workshop}, Jun. 2022, pp. 176--180.

\bibitem{habi_learning_2023}
------, ``Learning to bound: {A} generative {Cramér}-{Rao} bound,'' \emph{IEEE Trans. Signal Process.}, vol.~71, pp. 1216--1231, 2023.

\bibitem{berisha_empirical_2015}
V.~Berisha and A.~O. Hero, ``Empirical non-parametric estimation of the {F}isher information,'' \emph{IEEE Signal Process. Lett.}, vol.~22, no.~7, pp. 988--992, 2015.

\bibitem{hyvarinen_2005_estimation}
A.~Hyv{\"a}rinen, ``Estimation of non-normalized statistical models by score matching,'' \emph{J. Mach. Learn. Res.}, vol.~6, pp. 695--709, 2005.

\bibitem{dhariwal_2021_diffusion}
P.~Dhariwal and A.~Nichol, ``Diffusion models beat {GAN}s on image synthesis,'' in \emph{Adv. Neural Inf. Process. Syst.}, 2021, pp. 8780--8794.

\bibitem{wellner_1996_weak}
J.~Wellner and A.~van~der Vaart, \emph{Weak Convergence and Empirical Processes}.\hskip 1em plus 0.5em minus 0.4em\relax Springer Science \& Business Media, 1996.

\bibitem{bartlett_2005_local}
P.~L. Bartlett, O.~Bousquet, and S.~Mendelson, ``Local rademacher complexities,'' \emph{Ann. Stat.}, vol.~33, no.~4, pp. 1497 -- 1537, 2005.

\bibitem{bartlett2017spectrally}
P.~L. Bartlett, D.~J. Foster, and M.~J. Telgarsky, ``Spectrally-normalized margin bounds for neural networks,'' in \emph{Adv. Neural Inf. Process. Syst.}, 2017, pp. 6240--6249.

\bibitem{song_2020_sliced}
Y.~Song, S.~Garg, J.~Shi, and S.~Ermon, ``Sliced score matching: A scalable approach to density and score estimation,'' in \emph{Proc. Uncertainty Artif. Intell.}, 2020, pp. 574--584.

\bibitem{hyvarinen2007some}
A.~Hyv{\"a}rinen, ``Some extensions of score matching,'' \emph{Comput. Stat. Data Anal.}, vol.~51, no.~5, pp. 2499--2512, 2007.

\bibitem{kingma2010regularized}
D.~P. Kingma and Y.~LeCun, ``Regularized estimation of image statistics by score matching,'' in \emph{Adv. Neural Inf. Process. Syst.}, vol.~23, 2010.

\bibitem{song_2019_generative}
Y.~Song and S.~Ermon, ``Generative modeling by estimating gradients of the data distribution,'' in \emph{Adv. Neural Inf. Process. Syst.}, 2019, pp. 11\,895--11\,907.

\bibitem{vincent2011connection}
P.~Vincent, ``A connection between score matching and denoising autoencoders,'' \emph{Neural Comput.}, vol.~23, no.~7, pp. 1661--1674, 2011.

\bibitem{koehler2022statistical}
F.~Koehler, A.~Heckett, and A.~Risteski, ``Statistical efficiency of score matching: The view from isoperimetry,'' in \emph{Proc. Int. Conf. Learn. Represent.}, 2023.

\bibitem{Christian1999}
P.~R. Christian and C.~George, \emph{Monte Carlo Statistical Methods}, 2nd~ed.\hskip 1em plus 0.5em minus 0.4em\relax New York: Springer, 2004.

\bibitem{vershynin_2018_high}
R.~Vershynin, \emph{High-Dimensional Probability: An Introduction with Applications in Data Science}.\hskip 1em plus 0.5em minus 0.4em\relax Cambridge University Press, 2018, vol.~47.

\bibitem{flam2012mmse}
J.~T. Flam, S.~Chatterjee, K.~Kansanen, and T.~Ekman, ``On mmse estimation: A linear model under gaussian mixture statistics,'' \emph{IEEE Trans. Signal Process.}, vol.~60, no.~7, pp. 3840--3845, 2012.

\bibitem{stuart2010inverse}
A.~M. Stuart, ``Inverse problems: {A} {B}ayesian perspective,'' \emph{Acta Numer.}, vol.~19, pp. 451--559, 2010.

\bibitem{vershynin_2012_close}
R.~Vershynin, ``How close is the sample covariance matrix to the actual covariance matrix?'' \emph{J. Theor. Probab.}, vol.~25, no.~3, pp. 655--686, 2012.

\bibitem{crafts2023}
E.~S. Crafts and B.~Zhao, ``Bayesian {C}ram\'er-{R}ao bound estimation with score-based models,'' in \emph{Proc. IEEE Int. Conf. Acoust. Speech Signal Process.}, 2023, pp. 1--5.

\bibitem{kingma2015}
D.~P. Kingma and J.~Ba, ``Adam: A method for stochastic optimization,'' in \emph{Proc. Int. Conf. Learn. Represent.}, 2015.

\bibitem{bay2007analytic}
S.~Bay, C.~Herzet, J.-M. Brossier, J.-P. Barbot, and B.~Geller, ``Analytic and asymptotic analysis of {B}ayesian {C}ram{\'e}r--{R}ao bound for dynamical phase offset estimation,'' \emph{IEEE Trans. Signal Process.}, vol.~56, no.~1, pp. 61--70, 2007.

\bibitem{roessl2009cramer}
E.~Roessl and C.~Herrmann, ``{C}ram{\'e}r--{R}ao lower bound of basis image noise in multiple-energy x-ray imaging,'' \emph{{Phys. Med. Biol.}}, vol.~54, no.~5, p. 1307, 2009.

\bibitem{scope2022efficient}
E.~Scope~Crafts, H.~Lu, H.~Ye, L.~L. Wald, and B.~Zhao, ``An efficient approach to optimal experimental design for magnetic resonance fingerprinting with b-splines,'' \emph{{Magn. Reson. Med.}}, vol.~88, no.~1, pp. 239--253, 2022.

\bibitem{crafts2024bayesian}
E.~D.~S. Crafts, M.~A. Anastasio, and U.~Villa, ``Bayesian {C}ram{\'e}r-{R}ao bound optimization of the illumination pattern in quantitative photoacoustic computed tomography,'' in \emph{Medical Imaging 2024: Physics of Medical Imaging}, vol. 12925.\hskip 1em plus 0.5em minus 0.4em\relax SPIE, 2024, pp. 819--824.

\bibitem{attention_2017}
A.~Vaswani, N.~Shazeer, N.~Parmar, J.~Uszkoreit, L.~Jones, A.~N. Gomez, L.~Kaiser, and I.~Polosukhin, ``Attention is all you need,'' in \emph{Adv. Neural Inf. Process. Syst.}, vol.~30, 2017.

\bibitem{wainwright_2019_high}
M.~J. Wainwright, \emph{High-dimensional statistics: A non-asymptotic viewpoint}.\hskip 1em plus 0.5em minus 0.4em\relax Cambridge University Press, 2019.

\bibitem{dudley_1967_sizes}
R.~M. Dudley, ``The sizes of compact subsets of {H}ilbert space and continuity of {G}aussian processes,'' \emph{J. Funct. Anal.}, vol.~1, no.~3, pp. 290--330, 1967.

\end{thebibliography}

\begin{appendices}

\section{Proofs for Section \ref{sec:classicalconverge}}
\label{apppendix:classical}

This appendix collects various proofs omitted from Section \ref{sec:classicalconverge} in the main text. 

\begin{lemma}
\label{lemma:lipschitz}
Suppose $s(\x; \boldtheta)$ is sufficiently smooth (Assumption \ref{assumption:lipschitz1}). Then $\ell(\x; \boldtheta)$ is Lipschitz continuous with Lipschitz constant $L(\x)$ and $\mathbb{E}_{\x} \left [ L(\x)^2 \right ] < \infty$. 
\end{lemma}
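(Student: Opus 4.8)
The plan is to decompose $\ell(\x;\boldtheta) = \tr\!\left(\nabla_{\x} s(\x;\boldtheta)\right) + \tfrac12\|s(\x;\boldtheta)\|_2^2$ into its two pieces, bound the $\boldtheta$-increment of each separately using the two Lipschitz hypotheses in Assumption \ref{assumption:lipschitz1}, and recombine by the triangle inequality. The one elementary fact that does the work is that for any $D\times D$ matrix $\mathbf{A}$ we have $|\tr(\mathbf{A})| = |\langle \mathbf{I}, \mathbf{A}\rangle| \le \|\mathbf{I}\|_2\,\|\mathbf{A}\|_2 = \sqrt{D}\,\|\mathbf{A}\|_2$ by Cauchy--Schwarz, where $\|\cdot\|_2$ is the Frobenius norm.

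For the trace term, linearity of the trace gives $\left|\tr(\nabla_{\x} s(\x;\boldtheta_1)) - \tr(\nabla_{\x} s(\x;\boldtheta_2))\right| = \left|\tr\!\left(\nabla_{\x} s(\x;\boldtheta_1) - \nabla_{\x} s(\x;\boldtheta_2)\right)\right| \le \sqrt{D}\,\|\nabla_{\x} s(\x;\boldtheta_1) - \nabla_{\x} s(\x;\boldtheta_2)\|_2 \le \sqrt{D}\,L_1(\x)\,\|\boldtheta_1 - \boldtheta_2\|_2$, the last inequality being the first bound in Assumption \ref{assumption:lipschitz1}. For the squared-norm term, I would rewrite $\|s(\x;\boldtheta)\|_2^2 = \tr\!\left(s(\x;\boldtheta)s(\x;\boldtheta)^T\right)$, so that the same trace/Frobenius inequality and the second bound in Assumption \ref{assumption:lipschitz1} give $\tfrac12\left|\,\|s(\x;\boldtheta_1)\|_2^2 - \|s(\x;\boldtheta_2)\|_2^2\,\right| = \tfrac12\left|\tr\!\left(s(\x;\boldtheta_1)s(\x;\boldtheta_1)^T - s(\x;\boldtheta_2)s(\x;\boldtheta_2)^T\right)\right| \le \tfrac{\sqrt{D}}{2}\,L_2(\x)\,\|\boldtheta_1 - \boldtheta_2\|_2$. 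Adding the two estimates shows $\ell(\cdot;\boldtheta)$ is Lipschitz in $\boldtheta$ with constant $L(\x) \triangleq \sqrt{D}\left(L_1(\x) + \tfrac12 L_2(\x)\right)$.

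For the integrability claim I would apply $(a+b)^2 \le 2a^2 + 2b^2$ to get $L(\x)^2 \le 2D\left(L_1(\x)^2 + \tfrac14 L_2(\x)^2\right)$, hence $\mathbb{E}_{\x}\!\left[L(\x)^2\right] \le 2D\left(\mathbb{E}_{\x}[L_1(\x)^2] + \tfrac14\mathbb{E}_{\x}[L_2(\x)^2]\right) < \infty$ by the moment conditions in Assumption \ref{assumption:lipschitz1}. There is no genuine obstacle here; the only steps requiring a moment's care are expressing the squared norm as a trace of the outer product (so that the second Lipschitz hypothesis, which is stated for $s s^T$ rather than for $\|s\|_2^2$, can be invoked) and tracking the dimensional factor $\sqrt{D}$ introduced by bounding the trace by the Frobenius norm.
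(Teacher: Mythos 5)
Your proof is correct and follows essentially the same route as the paper: decompose $\ell$ into the trace and squared-norm terms, bound the trace of a matrix difference by $\sqrt{D}$ times its Frobenius norm, identify $\|s\|_2^2$ with $\tr(s s^T)$ so the second Lipschitz hypothesis applies, and conclude $L(\x)=\sqrt{D}\,(L_1(\x)+L_2(\x)/2)$. The only cosmetic differences are that you invoke Cauchy--Schwarz against the identity matrix where the paper sums diagonal entries, and you use $(a+b)^2\le 2a^2+2b^2$ where the paper uses Young's inequality on the cross term; both yield the same finiteness conclusion.
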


\begin{proof}
We have that 
\begin{align*}
| \ell(\x; \boldtheta_1) - \ell(\x; \boldtheta_2) | &= \left | \text{tr}\left(\nabla_{\x} s(\x; \boldtheta_1) \right ) + \frac{1}{2} s(\x; \boldtheta_1)^Ts(\x; \boldtheta_1)  - \left (\text{tr}\left(\nabla_{\x} s(\x; \boldtheta_2) \right ) + \frac{1}{2} s(\x; \boldtheta_2)^Ts(\x; \boldtheta_2) \right )  \right | \nonumber \\
&\leq \left | \text{tr}\left(\nabla_{\x} s(\x; \boldtheta_1) \right ) - \text{tr}\left(\nabla_{\x} s(\x; \boldtheta_2) \right ) \right| + \frac{1}{2} \left |   s(\x; \boldtheta_1)^Ts(\x; \boldtheta_1)  - s(\x; \boldtheta_2)^Ts(\x; \boldtheta_2) \right | \nonumber \\
&= \sum_{i=1}^D \left | \nabla_\x s(\x; \boldtheta_1)_{ii} - \nabla_\x s(\x; \boldtheta_2)_{ii} \right | + \frac{1}{2} \sum_{i=1}^D | s(\x; \boldtheta_1)_i^2 -  s(\x; \boldtheta_2)_i^2 | \nonumber \\
& \numIneq{(i)} \sqrt{D} \sqrt{ \sum_{i=1}^D  \left [ \nabla_\x s(\x; \boldtheta_1)_{ii} - \nabla_\x s(\x; \boldtheta_2)_{ii} \right]^2 } + \frac{\sqrt{D}}{2} \sqrt{ \sum_{i=1}^D \left [ s(\x; \boldtheta_1)_i^2 -  s(\x; \boldtheta_2)_i^2 \right ]^2} \nonumber \\
& \leq \sqrt{D} \| \nabla_\x s(\x; \boldtheta_1) - \nabla_\x s(\x; \boldtheta_2) \|_2 + \frac{\sqrt{D}}{2} \|  s(\x; \boldtheta_1) s(\x; \boldtheta_1)^T -  s(\x; \boldtheta_2) s(\x; \boldtheta_2)^T \|_2  \nonumber \\
& \numIneq{(ii)}  \left [ \sqrt{D} L_1(\x) + \frac{\sqrt{D}}{2} L_2(\x) \right ]\| \boldtheta_1 - \boldtheta_2 \|_2, 
\end{align*}
where $(i)$ is due to the equivalence of finite dimensional norms and $(ii)$ is by Assumption \ref{assumption:lipschitz1}. So $\ell(\x; \boldtheta)$ is Lipschitz with a Lipschitz constant bounded by $L(\x) \triangleq \sqrt{D} (L_1(\x) + L_2(\x)/2)$. Further, we have that 
\begin{align*}
\mathbb{E}_{\x} \left [ L(\x)^2 \right ] &= D \mathbb{E}_{\x} \left [ L_1^2(\x) + \frac{L_2(\x)^2}{4} + L_1(\x) L_2(\x) \right ] \nonumber \\
& \numIneq{(i)} D \mathbb{E}_{\x} \left [ \frac{3}{2}L_1^2(\x) + \frac{3 L_2(\x)^2}{4} \right ] \numStrictIneq{(ii)} \infty,
\end{align*}
where $(i)$ is by Young's inequality and $(ii)$ is due to Assumption \ref{assumption:lipschitz1}. 
\end{proof}

\begin{proof}[Proof of Theorem \ref{lemma:uniform_converge}]

By Jensen's inequality, we have that
\begin{align*}
    \mathbb{E}_{\x} \left [ \sup_{\boldtheta \in \boldsymbol{\Theta}} \left |\hat{J}(\boldtheta; \x_1^N) - J(\boldtheta) \right | \right ] & = \mathbb{E}_{\x} \left [\sup_{\boldtheta \in \boldsymbol{\Theta} } \left |\hat{J}(\boldtheta; \x_1^N) - \mathbb{E}_{\x} \left [\hat{J}(\boldtheta; \x_1^N) \right ] \right | \right ] \nonumber \\
    &= \mathbb{E}_{\x} \left [\sup_{\boldtheta \in \boldsymbol{\Theta} } \left |\hat{J}(\boldtheta; \x_1^N) - \mathbb{E}_{\x'} \left [\hat{J}(\boldtheta; {\x'}_1^N) \right ] \right | \right ] \nonumber \\
    &\leq \mathbb{E}_{\x, \x'} \left [\sup_{\boldtheta \in \boldsymbol{\Theta} }\left |\hat{J}(\boldtheta; \x_1^N) -   \hat{J}(\boldtheta; {\x'}_1^N)  \right | \right ],
\end{align*}
where ${\x'}_1^N$ is an independent copy of $\x_1^N$. Now let $\{ \epsilon_i \}_{i=1}^N$ be a set of independent Rademacher (symmetric Bernoulli) random variables. Since  $\ell(\x_i; \boldtheta) - \ell({\x'}_i; \boldtheta)$ is symmetric around zero, we have that 
\begin{align*}
    {\mathbb{E}_{\x, \x'}} \left [\sup_{\boldtheta \in \boldsymbol{\Theta} }\left |\hat{J}(\boldtheta; \x_1^N) -   \hat{J}(\boldtheta; {\x'}_1^N)  \right | \right ]  &= \mathbb{E}_{\x, \x'} \left [\sup_{\boldtheta \in \boldsymbol{\Theta} }  \left | \frac{1}{N} \sum_{i=1}^N \ell(\x_i; \boldtheta) - \ell({\x'}_i; \boldtheta) \right |\right ] \nonumber \\
    &= \mathbb{E}_{\x, \x', \epsilon} \left [ \sup_{\boldtheta \in \boldsymbol{\Theta}} \left |\frac{1}{N} \sum_{i=1}^N \epsilon_i \left [ \ell(\x_i; \boldtheta) - \ell({\x'}_i; \boldtheta) \right ] \right | \right ].
\end{align*}
We then fix a $\boldtheta' \in \boldsymbol{\Theta}$ to obtain
\begin{align}
\label{eq:lemma2_triangle_splitting}
\mathbb{E}_{\x, \x', \epsilon} &\left [ \sup_{\boldtheta \in \boldsymbol{\Theta}} \left |\frac{1}{N} \sum_{i=1}^N \epsilon_i \left [ \ell(\x_i; \boldtheta) - \ell({\x'}_i; \boldtheta) \right ] \right | \right ] \nonumber \\
&= \mathbb{E}_{\x, \x', \epsilon} \left [ \sup_{\boldtheta \in \boldsymbol{\Theta}} \left |\frac{1}{N} \sum_{i=1}^N \epsilon_i \left [ \ell(\x_i; \boldtheta) - \ell(\x_i; \boldtheta') + \ell(\x_i; \boldtheta') - \ell({\x'}_i; \boldtheta) \right ] \right | \right ] \nonumber \\
&\leq \mathbb{E}_{\x, \epsilon} \left [ \sup_{\boldtheta \in \boldsymbol{\Theta} } \left | \frac{1}{N} \sum_{i=1}^N \epsilon_i \left [  \ell(\x_i; \boldtheta) - \ell(\x_i; \boldtheta') \right ] \right | \right ] + \mathbb{E}_{\x, \x' \epsilon} \left [ \sup_{\boldtheta \in \boldsymbol{\Theta} } \left | \frac{1}{N} \sum_{i=1}^N \epsilon_i \left [ \ell(\x_i; \boldtheta') - \ell({\x'}_i; \boldtheta) \right ] \right | \right ] \nonumber \\
&= \mathbb{E}_{\x, \epsilon} \left [ \sup_{\boldtheta \in \boldsymbol{\Theta} } \left | \frac{1}{N} \sum_{i=1}^N \epsilon_i \left [  \ell(\x_i; \boldtheta) - \ell(\x_i; \boldtheta') \right ] \right | \right ] \nonumber \\
&\quad \quad + \mathbb{E}_{\x, \x' \epsilon} \left [ \sup_{\boldtheta \in \boldsymbol{\Theta} } \left | \frac{1}{N} \sum_{i=1}^N \epsilon_i \left [ \ell(\x_i; \boldtheta') - \ell({\x'}_i; \boldtheta') + \ell({\x'}_i; \boldtheta') - \ell({\x'}_i; \boldtheta) \right ] \right | \right ] \nonumber \\
 &\leq \mathbb{E}_{\x, \epsilon} \left [ \sup_{\boldtheta \in \boldsymbol{\Theta} } \left | \frac{1}{N} \sum_{i=1}^N \epsilon_i \left [  \ell(\x_i; \boldtheta) - \ell(\x_i; \boldtheta') \right ] \right | \right ] + \mathbb{E}_{\x, \x', \epsilon} \left [ \left | \frac{1}{N} \sum_{i=1}^N \epsilon_i \left [ \ell(\x_i; \boldtheta') - \ell({\x'}_i; \boldtheta') \right ] \right | \right ] \nonumber \\
 & \quad \quad +  \mathbb{E}_{\x', \epsilon } \left [\sup_{\boldtheta \in \boldsymbol{\Theta} } \left | \frac{1}{N} \sum_{i=1}^N \epsilon_i \left [\ell(\x'_i; \boldtheta') - \ell(\x'_i; \boldtheta) \right ] \right | \right ] \nonumber \\
    &= 2 \mathbb{E}_{\x, \epsilon} \left [ \sup_{\boldtheta \in \boldsymbol{\Theta} } \left | \frac{1}{N} \sum_{i=1}^N \epsilon_i \left [  \ell(\x_i; \boldtheta) - \ell(\x_i; \boldtheta') \right ] \right | \right ] +  \mathbb{E}_{\x, \x', \epsilon } \left [ \left | \frac{1}{N} \sum_{i=1}^N \epsilon_i \left [\ell(\x'_i; \boldtheta') - \ell(\x_i; \boldtheta') \right ] \right | \right ] 
\end{align}
through repeated application of the triangle inequality. 

The above expression has two terms. The second term does not involve a supremum and can be straightforwardly bounded by the law of large numbers. Specifically, we have that 
\begin{align}
\label{eq:lemma2_uniform_law_bound}
    \mathbb{E}_{\x, \x', \epsilon }  \left [ \left | \frac{1}{N} \sum_{i=1}^N \epsilon_i \left [\ell({\x'}_i; \boldtheta') - \ell(\x_i; \boldtheta') \right ] \right | \right ] &= \mathbb{E}_{\x, \x'} \left [ \left | \frac{1}{N} \sum_{i=1}^N \left [\ell({\x'}_i; \boldtheta') - \ell(\x_i; \boldtheta') \right ] \right | \right ] \nonumber \\
    &= \mathbb{E}_{\x, \x'} \left [ \left | \frac{1}{N} \sum_{i=1}^N \left [\ell({\x'}_i; \boldtheta') - J(\boldtheta') + J(\boldtheta') -  \ell(\x_i; \boldtheta') \right ] \right | \right ] \nonumber \\
    &\leq \mathbb{E}_{\x} \left [ \left | \hat{J}(\boldtheta', \x_1^N) - J(\boldtheta') \right | \right ] + \mathbb{E}_{\x'} \left [ \left | \hat{J}(\boldtheta', {\x'}_1^N) - J(\boldtheta') \right | \right ] \nonumber \\
    &= 2\mathbb{E}_{\x} \left [ \left | \hat{J}(\boldtheta', \x_1^N) - J(\boldtheta') \right | \right ] = O\left ( \frac{1}{\sqrt{N}} \right ).
\end{align}

To bound the first term, note that for any fixed $\x_1^N$, the random variable $\frac{1}{N} \sum_{i=1}^N \epsilon_i \ell(\x_i; \boldtheta)$ is a sub-Gaussian process (see, e.g., Definition 5.16 in \cite{wainwright_2019_high}) with respect to $\boldtheta$, i.e., for any $\boldtheta_1, \boldtheta_2 \in \boldsymbol{\Theta}$, $\x_1^N$, and $\lambda \in \mathbb{R}$, we have that 
\begin{align*}
    \mathbb{E}_{\epsilon} \left [ e^{\lambda \frac{1}{N} \sum_{i=1}^N \epsilon_i \left [\ell(\x_i; \boldtheta_1) - \ell(\x_i; \boldtheta_2) \right ]}\right ] & \numEq{(i)} \prod_{i=1}^N \mathbb{E}_{\epsilon} \left [ e^{\lambda \frac{1}{N} \epsilon_i \left [\ell(\x_i; \boldtheta_1) - \ell(\x_i; \boldtheta_2) \right ] }\right ] \nonumber \\
    &\numIneq{(ii)} \prod_{i=1}^N \mathrm{exp} \left ( \frac{\lambda^2}{2N^2} \left [ \ell(\x_i; \boldtheta_1) - \ell(\x_i; \boldtheta_2)\right ]^2 \right ) \nonumber \\
    &\numIneq{(iii)}  \prod_{i=1}^N \mathrm{exp} \left ( \frac{\lambda^2}{2N^2} L^2(\x_i) \| \boldtheta_1 - \boldtheta_2 \|_2^2 \right ) \nonumber \\
    &= \mathrm{exp} \left (\sum_{i=1}^N \frac{\lambda^2}{2N^2} L^2(\x_i) \| \boldtheta_1 - \boldtheta_2 \|_2^2  \right ). 
\end{align*}
Here $(i)$ is by independence of the $\epsilon_i$, $(ii)$ is because $\mathbb{E}_{\epsilon}\left [e^{\beta \epsilon} \right ] \leq e^{\beta^2/2}$ for all $\beta \in \mathbb{R}$, and $(iii)$ is by Lemma \ref{lemma:lipschitz}. This shows that $\frac{1}{N} \sum_{i=1}^N \epsilon_i \ell(\x_i; \boldtheta)$ is a sub-Gaussian process with metric 
$$
d(\boldtheta_1, \boldtheta_2) \triangleq \frac{1}{\sqrt{N}} \sqrt{ \frac{1}{N} \sum_{i=1}^N L^2(\x_i) } \| \boldtheta_1 - \boldtheta_2 \|_2. 
$$
Now, since $\boldsymbol{\Theta}$ is compact, the diameter of $\boldsymbol{\Theta}$ with respect to the Euclidean norm is finite, and we denote it as $\mathrm{diam}(\boldsymbol{\Theta})$. Using Dudley's entropy integral \cite{dudley_1967_sizes}, we can conclude that 
\begin{equation}
\label{eq:lemma2_dudley}
\mathbb{E}_{\x, \epsilon} \left [ \sup_{\boldtheta \in \boldsymbol{\Theta} } \left | \frac{1}{N} \sum_{i=1}^N \epsilon_i \left [  \ell(\x_i; \boldtheta) - \ell(\x_i; \boldtheta') \right ] \right | \right ] \leq O(1) \mathbb{E}_{\x} \left [\int_0^{ \frac{1}{\sqrt{N}} \sqrt{\frac{1}{N}\sum_{i=1}^N L^2(\x_i) } \mathrm{diam}(\boldsymbol{\Theta})} \sqrt{\log N(\boldsymbol{\Theta}, d, \epsilon)}  \; d \epsilon \right ].    
\end{equation}
Here $\log N(\boldsymbol{\Theta}, d, \epsilon)$ is the metric entropy, i.e., the log of the $\epsilon$-covering number, of $\boldsymbol{\Theta}$ with respect to the metric $d(\boldtheta_1, \boldtheta_2) = \frac{1}{\sqrt{N}} \sqrt{ \frac{1}{N} \sum_{i=1}^N L^2(\x_i) } \| \boldtheta_1 - \boldtheta_2 \|_2$. \\

It is known that the $\epsilon$-covering number of $\boldsymbol{\Theta}$ with the canonical Euclidean metric satisfies 
\begin{equation*}
    N(\boldsymbol{\Theta}, \| \cdot \|_2, \epsilon) \leq \left ( 1 + \frac{\mathrm{diam}(\boldsymbol{\Theta})}{\epsilon} \right )^P.
\end{equation*}
We can therefore bound $N(\boldsymbol{\Theta}, d, \epsilon)$ by 
\begin{equation*}
    N(\boldsymbol{\Theta}, d, \epsilon) \leq \left ( 1 + \sqrt{\frac{1}{N} \sum_{i=1}^N L^2(\x_i)}\frac{\mathrm{diam}(\boldsymbol{\Theta})}{\sqrt{N} \epsilon} \right )^P. 
\end{equation*}
Using this bound, we can bound the metric entropy integral as follows: 
\begin{align}
    \label{eq:lemma2_metricBound}
   &\int_0^{ \frac{1}{\sqrt{N}} \sqrt{\frac{1}{N}\sum_{i=1}^N L^2(\x_i) } \mathrm{diam}(\boldsymbol{\Theta})} \sqrt{\log N(\boldsymbol{\Theta}, d, \epsilon)}  \; d \epsilon \nonumber \\
   &\quad \quad \quad  \leq  \int_0^{ \frac{1}{\sqrt{N}} \sqrt{\frac{1}{N}\sum_{i=1}^N L^2(\x_i) } \mathrm{diam}(\boldsymbol{\Theta})} \sqrt{P \log \left (1 + \sqrt{\frac{1}{N} \sum_{i=1}^N L^2(\x_i)}\frac{\mathrm{diam}(\boldsymbol{\Theta})}{\sqrt{N} \epsilon}   \right )} d \epsilon \nonumber \\
   & \quad \quad \quad \numIneq{(i)} \int_0^{ \frac{1}{\sqrt{N}} \sqrt{\frac{1}{N}\sum_{i=1}^N L^2(\x_i) } \mathrm{diam}(\boldsymbol{\Theta})} \sqrt{\sqrt{\frac{1}{N}\sum_{i=1}^N L^2(\x_i)} \frac{P \mathrm{diam}(\boldsymbol{\Theta})}{\sqrt{N} \epsilon}} \; d \epsilon \nonumber \\
    & \quad \quad \quad = 2 \sqrt{\frac{1}{N}\sum_{i=1}^N L^2(\x_i)} \sqrt{\frac{P}{N}} \mathrm{diam}(\boldsymbol{\Theta}),
\end{align} 
where here $(i)$ holds because $\log(1 + x) \leq x$ for all $x \geq 0$. Finally, combining \eqref{eq:lemma2_triangle_splitting}, \eqref{eq:lemma2_uniform_law_bound}, \eqref{eq:lemma2_dudley} and \eqref{eq:lemma2_metricBound} gives 
\begin{align*}
    \mathbb{E}_{\x} \left [ \sup_{\boldtheta \in \boldsymbol{\Theta}} \left |\hat{J}(\boldtheta; \x_1^N) - J(\boldtheta) \right | \right ] &\leq 4 O(1) \mathbb{E}_{\x} \left [ \sqrt{\frac{1}{N}\sum_{i=1}^N L^2(\x_i)} \sqrt{\frac{P}{N}} \mathrm{diam}(\boldsymbol{\Theta}) \right ] + O\left (\frac{1}{\sqrt{N}} \right ) \nonumber \\
    & \numIneq{(i)} O(1) \sqrt{\frac{P}{N}} \mathrm{diam}(\boldsymbol{\Theta}) \sqrt{\mathbb{E}_{\x} \left [ \frac{1}{N}\sum_{i=1}^N L^2(\x_i) \right ]} + O \left (\frac{1}{\sqrt{N}} \right )  \nonumber \\
    & \numEq{(ii)} O \left ( \frac{1 + \mathrm{diam} \left ( \boldsymbol{\Theta} \right ) \sqrt{P}}{\sqrt{N}} \right ),
\end{align*}
where $(i)$ is due to Jensen's inequality and $(ii)$ holds because Lemma \ref{lemma:lipschitz} guarantees that the expectation is finite. Setting $C_S \triangleq O(1 + \mathrm{diam} \left ( \boldsymbol{\Theta} \right ) \sqrt{P})$ completes the proof. 
\end{proof}

\begin{proof}[Proof of Corollary \ref{lemma:obj_diff_bound}]
By Jensen's inequality, we have that
\begin{align*}
\label{eq:gap_initialBound} 
\mathbb{E}_{\x} \left [\sup_{\|\boldtheta - \boldtheta^*\|_2 \leq \delta} \left | \Delta_N(\boldtheta) \right |  \right ] &= \mathbb{E}_{\x}\left [\sup_{\|\boldtheta - \boldtheta^*\|_2 \leq \delta} \left | (\hat{J}(\boldtheta; \x_1^N) - J(\boldtheta)) - (\hat{J}(\boldtheta^*; \x_1^N) - J(\boldtheta^*)) \right |  \right ] \nonumber \\
&= \mathbb{E}_{\x}\left [\sup_{\|\boldtheta - \boldtheta^*\|_2 \leq \delta} \left | (\hat{J}(\boldtheta; \x_1^N) - \hat{J}(\boldtheta^*; \x_1^N)) - \mathbb{E}_{\x'} \left[ \hat{J}(\boldtheta; {\x'}_1^N) - \hat{J}(\boldtheta^*; {\x'}_1^N) \right ] \right |  \right ] \nonumber \\
&\leq \mathbb{E}_{\x, \x'} \left [\sup_{\|\boldtheta - \boldtheta^*\|_2 \leq \delta} \left | (\hat{J}(\boldtheta; \x_1^N) - \hat{J}(\boldtheta; {\x'}_1^N)) - (\hat{J}(\boldtheta^*; \x_1^N) - \hat{J}(\boldtheta^*; {\x'}_1^N)) \right| \right ]. 
\end{align*}
Now let $\{ \epsilon_i \}_{i=1}^N$ be a set of independent Rademacher random variables. We have that 
\begin{align*}
  \mathbb{E}_{\x, \x'}& \left [\sup_{\|\boldtheta - \boldtheta^*\|_2 \leq \delta} \left | (\hat{J}(\boldtheta; \x_1^N) - \hat{J}(\boldtheta; {\x'}_1^N)) - (\hat{J}(\boldtheta^*; \x_1^N) - \hat{J}(\boldtheta^*; {\x'}_1^N)) \right| \right ]  \nonumber \\
  &\hspace{2cm} = \mathbb{E}_{\x, \x'} \left [\sup_{\|\boldtheta - \boldtheta^*\|_2 \leq \delta} \left | \frac{1}{N} \sum_{i=1}^N (\ell(\x_i; \boldtheta) - \ell({\x'}_i; \boldtheta)) - (\ell(\x_i; \boldtheta^*) - \ell({\x'}_i; \boldtheta^*)) \right|  \right ] \nonumber \\
    &\hspace{2cm} \numEq{(i)} \mathbb{E}_{\x, \x', \epsilon} \left [\sup_{\|\boldtheta - \boldtheta^*\|_2 \leq \delta} \left | \frac{1}{N} \sum_{i=1}^N \epsilon_i (\ell(\x_i; \boldtheta) - \ell({\x'}_i; \boldtheta)) - \epsilon_i(\ell(\x_i; \boldtheta^*) - \ell({\x'}_i; \boldtheta^*)) \right|  \right ] \nonumber \\
    &\hspace{2cm} \leq \mathbb{E}_{\x, \epsilon} \left [\sup_{\|\boldtheta - \boldtheta^*\|_2 \leq \delta} \left | \frac{1}{N} \sum_{i=1}^N \epsilon_i (\ell(\x_i; \boldtheta) - \ell(\x_i; \boldtheta^*)) \right | \right ]  \nonumber \\
    &\hspace{3cm} + \mathbb{E}_{\x', \epsilon} \left [\sup_{\|\boldtheta - \boldtheta^*\|_2 \leq \delta} \left | \frac{1}{N} \sum_{i=1}^N \epsilon_i (\ell({\x'}_i; \boldtheta) - \ell({\x'}_i; \boldtheta^*)) \right | \right ]
 \nonumber \\
 &\hspace{2cm} =  2 \mathbb{E}_{\x, \epsilon} \left [\sup_{\|\boldtheta - \boldtheta^*\|_2 \leq \delta} \left | \frac{1}{N} \sum_{i=1}^N \epsilon_i (\ell(\x_i; \boldtheta) - \ell(\x_i; \boldtheta^*)) \right | \right ],
 \end{align*}
where $(i)$ holds because $\ell(\x_i; \boldtheta) - \ell({\x'}_i; \boldtheta)$ is symmetric around zero. 

The rest of the proof is similar to that of Theorem \ref{eq:lemma_2} and is therefore omitted. 
 
\end{proof}

\begin{proof}[Proof of Theorem \ref{theorem:finite_sample_bound}]
The proof is based on a partition of the parameter space into the following collection of ``shells'':
\begin{equation*}
S_{N, j} \triangleq \{ \boldtheta \in \boldsymbol{\Theta} \, : \, 2^j < \sqrt{N} \| \boldtheta - \boldtheta^* \|_2 \leq 2^{j+1} \}.
\end{equation*}
Specifically, letting $T \triangleq 8 C_{\boldtheta} \sqrt{P} / \epsilon \lambda$, we have that 
\begin{align}
    \label{eq:theorem2_break}
    \mathbb{P}\left[\sqrt{N} \| \hat{\boldtheta}_N - \boldtheta^* \|_2 > T \right ] & \leq \mathbb{P}\left [\sqrt{N} \| \hat{\boldtheta}_N - \boldtheta^* \|_2 >  T, \| \hat{\boldtheta}_N - \boldtheta^* \|_2 \leq \eta \right ] + \mathbb{P} \left [\| \hat{\boldtheta}_N - \boldtheta^* \|_2 > \eta \right ] \nonumber \\
    &\leq \mathbb{P}[\exists j \geq \log_2 \left ( T
    \right) \text{ such that } 2^j \leq \sqrt{N} \eta \text{ and } \hat{\boldtheta}_N \in S_{N, j} ] + \mathbb{P} \left [\| \hat{\boldtheta}_N - \boldtheta^* \|_2 > \eta \right ] \nonumber \\
    &\leq \sum_{j \geq \log_2 \left (T \right ), 2^j \leq \sqrt{N} \eta} \mathbb{P}[\hat{\boldtheta}_N \in S_{N, j}] + \mathbb{P} \left [\| \hat{\boldtheta}_N - \boldtheta^* \|_2 > \eta \right ]
\end{align}
where $\eta$ is defined by Assumption \ref{assumption:locally_quadratic}. By Lemma \ref{lemma_2}, we have that for all $N \geq 16 C_S^2/ \epsilon^2 \lambda^2 \eta^4$, it holds that 
\begin{equation}
\label{eq:consistency_statement}
\mathbb{P}\left [ \|\hat{\boldtheta}_N - \boldtheta^* \|_2 > \eta \right ] \leq \frac{\epsilon}{2},
\end{equation}
which bounds the second term in \eqref{eq:theorem2_break}.

To bound the first term, assume $\hat{\boldtheta}_N \in S_{N, j}$. Then there exists $\boldtheta \in S_{N, j}$ such that $\hat{J}(\boldtheta; \x_1^N) \leq \hat{J}(\boldtheta^*; \x_1^N)$. By Assumption \ref{assumption:locally_quadratic}, we have that 
\begin{align*}
    \Delta_N(\boldtheta) &= (\hat{J}(\boldtheta; \x_1^N) -\hat{J}(\boldtheta^*; \x_1^N)) + (J(\boldtheta^*) - J(\boldtheta)) \nonumber \\
    &\leq J(\boldtheta^*) - J(\boldtheta) \leq - \lambda \| \boldtheta - \boldtheta^* \|_2^2,
\end{align*}
which implies that 
\begin{equation*}
|\Delta_N(\boldtheta)| \geq \lambda \| \boldtheta -\boldtheta^*\|_2^2 > \lambda \frac{4^j}{N}. 
\end{equation*}
This allows us to rewrite $\mathbb{P}[\hat{\boldtheta}_N \in S_{N, j}]$ in terms of $\Delta_N(\boldtheta)$; we have that 
\begin{align*}
\mathbb{P}[\hat{\boldtheta}_N \in S_{N, j}] &\leq \mathbb{P} \left ( \exists \boldtheta \in S_{N, j} \text{ such that } |\Delta_N(\boldtheta)| > \lambda \frac{4^j}{N} \right ) \nonumber \\
&\leq  \mathbb{P}\left (\text{sup}_{\boldtheta \in S_{N, j}} |\Delta_N(\boldtheta)| > \lambda \frac{4^j}{N} \right ) \nonumber \\
&\numIneq{(i)} \frac{\mathbb{E}_{\x} \left [\text{sup}_{\boldtheta \in S_{N, j}} |\Delta_N(\boldtheta)| \right]}{\frac{\lambda 4^j}{N}}, 
\end{align*}
where $(i)$ is due to Markov's inequality. Since $\boldtheta \in S_{N, j}$ implies $\| \boldtheta - \boldtheta^*\|_2 \leq 2^{j+1} / \sqrt{N}$, by Corollary \ref{lemma:obj_diff_bound} this supremum can be bounded, i.e., we have that
\begin{equation*}
\mathbb{P}[\hat{\boldtheta}_N \in S_{N, j}]   \leq \frac{\mathbb{E}_{\x} \left [\text{sup}_{\boldtheta \in S_{N, j}} |\Delta_N(\boldtheta)| \right]}{\lambda \frac{4^j}{N}}
\leq \frac{N}{\lambda 4^j} C_{\boldtheta} \sqrt{P} \frac{ 2^{j+1}}{\sqrt{N}} \frac{1}{\sqrt{N}}
\leq \frac{C_{\boldtheta} \sqrt{P}}{\lambda 2^{j - 1}}. 
\end{equation*}
Now note that for any $\epsilon > 0$, we have that 
\begin{equation}
\sum_{j \geq \log_2 \left ( T \right ), 2^j \leq \sqrt{N} \eta} \mathbb{P}[\hat{\boldtheta}_N \in S_{N, j}] \leq \sum_{j \geq \log_2 \left ( T \right) } \frac{C_{\boldtheta} \sqrt{P}}{\lambda 2^{j - 1}} = \frac{2 C_{\boldtheta} \sqrt{P} }{\lambda } \sum_{j \geq \log_2 \left ( T \right )} \frac{1}{2^{j}} = \frac{2C_{\boldtheta} \sqrt{P}}{\lambda } \frac{2}{T} = \frac{\epsilon}{2}.
\label{eq:theorem2_lessbound}
\end{equation}
Plugging in \eqref{eq:consistency_statement} and \eqref{eq:theorem2_lessbound} into \eqref{eq:theorem2_break} gives
\begin{equation*}
\mathbb{P}\left[\sqrt{N} \| \hat{\boldtheta}_N - \boldtheta^* \|_2 > T \right ] \leq \sum_{j \geq \log_2 \left (T \right), 2^j \leq \sqrt{N} \eta} \mathbb{P}[\hat{\boldtheta}_N \in S_{N, j}] + \mathbb{P} \left [\| \hat{\boldtheta}_N - \boldtheta^* \|_2 > \eta \right ] \leq \frac{\epsilon}{2} + \frac{\epsilon}{2} = \epsilon
\end{equation*}
for all $N \geq N'$, as desired. 

\end{proof}

\begin{proof}[Proof of Theorem \ref{theorem:bayesianInfo}]
First, note that the $\J_B$ estimation error can be bounded as follows:
\begin{align*}
\|\hat{\J}_B(\x_1^N) - \J_B\|_\sigma &= \left \lVert \frac{1}{N} \sum_{i=1}^N s(\x_i; \hat{\boldtheta}_N) s(\x_i; \hat{\boldtheta}_N)^T + \hat{\J}_F(\x_i) - (\J_P + \J_D) \right \rVert_\sigma \nonumber \\
& \leq \left \lVert \frac{1}{N} \sum_{i=1}^N s(\x_i; \hat{\boldtheta}_N) s(\x_i; \hat{\boldtheta}_N )^T - \J_P \right \rVert_\sigma + \left \lVert \frac{1}{N} \sum_{i=1}^N \hat{\J}_F(\x_i) - \J_D \right \rVert_\sigma \nonumber \\
& \leq \left \lVert \frac{1}{N} \sum_{i=1}^N s(\x_i; \hat{\boldtheta}_N ) s(\x_i; \hat{\boldtheta}_N)^T - \frac{1}{N} \sum_{i=1}^N s(\x_i; \boldtheta^*) s(\x_i; \boldtheta^*)^T \right \rVert_\sigma \nonumber \\
& \quad + \left \lVert \frac{1}{N} \sum_{i=1}^N s(\x_i; \boldtheta^*) s(\x_i; \boldtheta^*)^T - \frac{1}{N} \sum_{i=1}^N \nabla_{\x} \log p(\x_i) \nabla_{\x} \log p(\x_i)^T \right \rVert_\sigma \nonumber \\
& \quad  + \left \lVert \frac{1}{N} \sum_{i=1}^N \nabla_{\x} \log p(\x_i) \nabla_{\x} \log p(\x_i)^T - \J_P \right \rVert_\sigma + \left \lVert \frac{1}{N} \sum_{i=1}^N \hat{\J}_F(\x_i) - \J_D \right \rVert_\sigma.
\end{align*} 
We now bound the four terms in the above expression. For the first term, by Theorem \ref{theorem:finite_sample_bound} we have that 
\begin{equation*}
\|\hat{\boldtheta}_N - \boldtheta^*\|_2 \leq \frac{8 C_{\boldtheta} \sqrt{P}}{\epsilon \lambda \sqrt{N}}
\end{equation*}
with probability at least $1 - \epsilon$ for all $N \geq N'$. Using this fact and Assumption \ref{assumption:lipschitz1}, it holds that
\begin{align}
\label{eq:information_sBound}
\left \lVert \frac{1}{N}\sum_{i=1}^N \left(s(\x_i; \hat{\boldtheta}_N )s(\x_i; \hat{\boldtheta}_N)^T - s(\x_i; \boldtheta^* )s(\x_i; \boldtheta^*)^T \right)\right \rVert_\sigma &\leq \left \lVert \frac{1}{N}\sum_{i=1}^N \left(s(\x_i; \hat{\boldtheta}_N )s(\x_i; \hat{\boldtheta}_N)^T - s(\x_i; \boldtheta^* )s(\x_i; \boldtheta^*)^T \right) \right \rVert_2 \nonumber \\
&\leq \frac{1}{N} \sum_{i=1}^N \left \lVert s(\x_i; \hat{\boldtheta}_N )s(\x_i; \hat{\boldtheta}_N)^T - s(\x_i; \boldtheta^* )s(\x_i; \boldtheta^*)^T \right \rVert_2 \nonumber \\
&\leq \left [ \frac{1}{N} \sum_{i=1}^N L_2(\x_i) \right] \frac{8C_{\boldtheta} \sqrt{P}}{\epsilon \lambda \sqrt{N}}
\end{align}
for all $N \geq N'$ with probability at least $1 - \epsilon$. Note that $\mu_L$ is well defined since $\mathbb{E}_{\x} \left [ L_2(\x) \right] \leq \sqrt{\mathbb{E}_{\x} \left [ L_2(\x)^2 \right]}$ by Jensen's inequality and $\mathbb{E}_{\x} \left [ L_2(\x)^2 \right]$ is finite by Assumption \ref{assumption:lipschitz1}. Further, $\sigma_L = \sqrt{\mathbb{E}_{\x} \left [ (L_2(\x) - \mu_L)^2 \right ]}$ is well defined since $\mathbb{E}_{\x} \left [ (L_2(\x) - \mu_L)^2 \right ] = \mathbb{E}_{\x} \left [L_2(\x)^2 \right] + \mu_L^2 - 2 \mu_L \mathbb{E}_{\x} \left [L_2(\x) \right] < \infty$. So by Chebyshev's inequality we have that 
\begin{equation}
\label{eq:information_Lbound}
\left | \frac{1}{N} \sum_{i=1}^N L_2(\x_i) - \mu_L \right | \leq \frac{\sigma_L}{\sqrt{\epsilon N}}.
 \end{equation}
with probability $1 - \epsilon$. Taking the union bound of \eqref{eq:information_sBound} and \eqref{eq:information_Lbound} and adjusting the confidence level, we have that for all $\epsilon > 0$ and all $N \geq N'$, it holds that 
\begin{equation}
\label{eq:score_error_bound}
 \left \lVert \frac{1}{N}\sum_{i=1}^N s(\x_i; \hat{\boldtheta}_N )s(\x_i; \hat{\boldtheta}_N)^T - s(\x_i; \boldtheta^* )s(\x_i; \boldtheta^*)^T \right \rVert_\sigma \leq \frac{16 C_{\boldtheta} \sqrt{P}}{\epsilon \lambda \sqrt{N}} \left [ \mu_L + \frac{\sqrt{2} \sigma_L}{\sqrt{\epsilon N}} \right].
\end{equation} 
with probability at least $1 - \epsilon$. 

For the second term, by Lemma \ref{lemma:expected_outer_bound} we have that 
\begin{equation*}
  \mathbb{E}_{\x} \left [ \| s(\x; \boldtheta^*)s(\x; \boldtheta^*)^T - \nabla_{\x} \log p(\x) \nabla_{\x} \log p(\x)^T \|_\sigma  \right ] \leq 2 L(\boldtheta^*) + 2 \mu_P \sqrt{2 L(\boldtheta^*) },
\end{equation*}
so by Markov's inequality with probability $1 - \epsilon$ it holds that
\begin{equation}
\label{eq:outer_product_markov}
  \left \lVert \frac{1}{N} \sum_{i=1}^N s(\x_i; \boldtheta^*) s(\x_i; \boldtheta^*)^T - \frac{1}{N} \sum_{i=1}^N \nabla_{\x} \log p(\x_i) \nabla_{\x} \log p(\x_i)^T \right \rVert_\sigma \leq \frac{1}{\epsilon} \left ( 2L(\boldtheta^*) + 2 \mu_P \sqrt{2 L(\boldtheta^*)} \right ).  
\end{equation}
For the third term, note that since $\nabla_{\x} \log p(\x)$ is sub-Gaussian with norm $C_P$ by Assumption \ref{assumption:sub_gaussian}, by Lemma \ref{lemma:sub_gaussian} we have that for all $N$, with probability at least $1 - \epsilon$,
\begin{equation}
\label{eq:jp_bound}
    \left \lVert \frac{1}{N} \sum_{i=1}^N \nabla_{\x} \log p(\x_i) \nabla_{\x} \log p(\x_i)^T - \J_P \right \rVert_\sigma  \leq C_{\boldsymbol{\Sigma}} C_P^2 \mathrm{m} \left (\sqrt{\frac{D - \log(\epsilon)}{N}} \right ).  
\end{equation}
For the fourth term, since $\nabla_{\x} \log p(\y_{ij} \mid \x_{i})$ with $\y_{ij} \sim p(\y \mid \x_i)$ is sub-Gaussian with norm $C_D$ by Assumption \ref{assumption:sub_gaussian}, we have that 
\begin{equation}
\label{eq:bayesianInfo_Jd_M}
\left \lVert \frac{1}{M} \sum_{j=1}^M \nabla_{\x} \log p(\y_{ij} \mid \x_{i}) \right \rVert_{\Psi^2} \leq \frac{1}{M} \sum_{j=1}^M \left \lVert  \nabla_{\x} \log p(\y_{ij} \mid \x_{i}) \right \rVert_{\Psi^2} = C_D, 
\end{equation}
so  $\frac{1}{M} \sum_{j=1}^M \nabla_{\x} \log p(\y_{ij} \mid \x_{i})$ is also sub-Gaussian with norm bounded by $C_D$. So if \eqref{eq:Jd_estimator2} is used to estimate $\J_D$, by Lemma \ref{lemma:sub_gaussian} the fourth term can therefore be bounded as follows with probability at least $1 - \epsilon$:
\begin{align}
\label{eq:jd_bound}
    \left \lVert \frac{1}{N} \sum_{i=1}^N \hat{\J}_F(\x_i) - \J_D \right \rVert_\sigma  &=  \left \lVert \frac{1}{N} \sum_{i=1}^N \left ( \frac{1}{M}\sum_{j=1}^M \nabla_\x \log p(\y_{ij} \mid \x_i) \right) \left (  \frac{1}{M}\sum_{j=1}^M \nabla_\x \log p(\y_{ij} \mid \x_i) \right )^T - \J_D \right \rVert_\sigma \nonumber \\
    &\leq C_{\boldsymbol{\Sigma}} C_D^2 \mathrm{m} \left (\sqrt{\frac{D - \log(\epsilon)}{N}} \right ).
\end{align}
Taking $M \to \infty$ in \eqref{eq:bayesianInfo_Jd_M} and employing the same argument shows that the above bound also holds if \eqref{eq:Jd_estimator1} is used to estimate $\J_D$. 

Finally, taking the union bound of \eqref{eq:score_error_bound}, \eqref{eq:outer_product_markov}, \eqref{eq:jp_bound}, and \eqref{eq:jd_bound}, adjusting the confidence level, and introducing the universal constant $C_1$ completes the proof.    
\end{proof}
\section{Proofs for Section \ref{sec:convergeNN}}
\label{appendix:nn}

This appendix collects various proofs omitted from Section \ref{sec:convergeNN} in the main text. 

\begin{proof}[Proof of Lemma \ref{lemma:nn_model_bound}]
First note that we have that for any $\boldtheta \in \boldsymbol{\Theta}$ and $\x \in \mathcal{X}$, we have that 
\begin{equation}
\label{eq:gx_bound}
\sup_{\boldtheta \in \boldsymbol{\Theta}} \left|\ell(\x; \boldtheta)  \right | \leq \sup_{\boldtheta \in \boldsymbol{\Theta}} \frac{1}{2} \| s(\x; \boldtheta) \|_2^2 + \sup_{\boldtheta \in \boldsymbol{\Theta}} \left | \text{tr}\left(\nabla_{\x} s(\x; \boldtheta) \right ) \right |. 
\end{equation}
By Assumptions \ref{assumption:model}, \ref{assumption:bounded_support}, and \ref{assumption:bounded_weights}, the first term in the above expression satisfies
\begin{equation}
\label{eq:twoNorm_bound}
\| s(\x; \boldtheta) \|_2 \leq \left (\prod_{i=1}^L \rho_i c_i \right ) T.  
\end{equation}
For the second term, first note that the derivative of $s(\x; \boldtheta)$ can be written as
\begin{equation}
\label{eq:NNjacobian_rewritten}
\nabla_{\x} s(\x; \boldtheta) = \nabla_{\x} \sigma_L(\x) \bigg \rvert_{s_L(\x; \boldtheta)} \W_L \nabla_{\x} \sigma_{L-1}(\x; \boldtheta) \bigg \rvert_{s_{L-1}(\x; \boldtheta)} \W_{L-1} \cdots  \nabla_{\x} \sigma_1(\x) \bigg \rvert_{s_1(\x; \boldtheta)} \W_1, 
\end{equation}
where $s_l(\x; \boldtheta) \triangleq \W_l \sigma_{l-1} ( \cdots \sigma_1 (\W_1 \x))$. So we have that 
$$
\left | \text{tr}\left(\nabla_{\x} s(\x; \boldtheta) \right ) \right | \leq \left \lVert \nabla_{\x} \sigma_{1}(\x) \bigg \rvert_{s_1(\x; \boldtheta)} \W_{1} \right \rVert_2 \left \lVert  \prod_{i=1}^{L-1}  \nabla_{\x} \sigma_{l}(\x) \bigg \rvert_{s_l(\x; \boldtheta)} \W_{l} \right \rVert_{2} \leq \prod_{i=1}^L  \left \lVert \nabla_{\x} \sigma_{l}(\x) \bigg \rvert_{s_l(\x; \boldtheta)} \W_{l} \right \rVert_{2},
$$
where here we used the inequality $\left | \text{tr} \left ( \mathbf{A} \mathbf{B} \right) \right | \leq || \mathbf{A} ||_2 || \mathbf{B} ||_2$, which holds for any two matrices $\mathbf{A}$ and $\mathbf{B}$, and the submultiplicative property of the Frobenius norm. We now use the inequalities  $\| \mathbf{A}  \mathbf{B}  \|_{2} \leq \| \mathbf{A} \|_{2} \| \mathbf{B} \|_{\sigma}$ and $\| \mathbf{A} \|_{2} \leq \| \mathbf{A} \|_{2, 1}$ along with Assumptions \ref{assumption:model} and \ref{assumption:bounded_weights} to obtain 
\begin{equation}
\label{eq:bound_trace_term}
\left | \text{tr}\left(\nabla_{\x} s(\x; \boldtheta) \right ) \right | \leq \prod_{i=1}^L  \left \lVert \nabla_{\x} \sigma_{l}(\x) \bigg \rvert_{s_l(\x; \boldtheta)} \right \rVert_{\sigma} \left \lVert \W_{l} \right \rVert_{2} \leq \prod_{i=1}^L  \left \lVert \nabla_{\x} \sigma_{l}(\x) \bigg \rvert_{s_l(\x; \boldtheta)} \right \rVert_{\sigma} \left \lVert \W_{l} \right \rVert_{2, 1}  \leq \prod_{i=1}^L b_i f_i.
\end{equation}
Plugging in the bounds in \eqref{eq:twoNorm_bound} and \eqref{eq:bound_trace_term} into \eqref{eq:gx_bound} gives the desired result. 
\end{proof}

\begin{proof}[Proof of Lemma \ref{lemma:product_cover}]
     Consider the set $\mathcal{C}$ consisting of all of the elements of the form $\mathbf{C} = \mathbf{C}^L  \mathbf{C}^{L-1} \cdots \mathbf{C}^1$, where each $\mathbf{C}^l$ is an element of the cover of $\mathcal{Y}_l$. Note that the cardinality of $\mathcal{C}$ is at most $\prod_{l=1}^L v_l$, so all that needs to be shown to complete the proof is that $\mathcal{C}$ is an $\epsilon$-cover of $\mathcal{Y}$. To that end, 
     note that by the definition of $\mathcal{Y}$, any $\mathbf{Y} \in \mathcal{Y}$ can be written as $\mathbf{Y} = \mathbf{Y}^L \mathbf{Y}^{L-1} \cdots \mathbf{Y}^1$. For each $\mathbf{Y}^l$, let $\mathbf{C}^l$ be the element of the corresponding cover, and let $\mathbf{C} \in \mathcal{C}$ be given by $\mathbf{C} =  \mathbf{C}^L \mathbf{C}^{L-1} \cdots \mathbf{C}^1$. Using a telescoping sum, we have that 
     $$
     \mathbf{Y} - \mathbf{C}  = \sum_{l=1}^L \mathbf{Y}^1 \cdots \mathbf{Y}^{l-1} (\mathbf{Y}^l - \mathbf{C}^l) \mathbf{C}^{l+1} \cdots \mathbf{C}^L,
     $$
     so 
     \begin{equation}
     \label{eq:tensor_product_proof}
     \| \mathbf{Y} - \mathbf{C} \|_2 \leq \sum_{l=1}^L  \| \mathbf{Y}^1 \cdots \mathbf{Y}^{l-1} (\mathbf{Y}^l - \mathbf{C}^l) \mathbf{C}^{l+1} \cdots \mathbf{C}^L \|_2. 
     \end{equation}
     Next, observe that 
     \begin{align*}
     \| \mathbf{Y}^1 \mathbf{Y}^{l-1} (\mathbf{Y}^l - \mathbf{C}^l) \mathbf{C}^{l+1} \cdots \mathbf{C}^L \|_2 &= \left ( \sum_{i=1}^N \| \mathbf{Y}^1_i \cdots \mathbf{Y}^{l-1}_i (\mathbf{Y}^l_i - \mathbf{C}^l_i) \mathbf{C}^{l+1}_i \cdots \mathbf{C}^L_i \|_2^2 \right)^{1/2}  \\
     &\leq \left ( \sum_{i=1}^N \| \mathbf{Y}^1_i \|_2^2 \cdots \| \mathbf{Y}^{l-1}_i \|_2^2 \| \mathbf{Y}^l_i - \mathbf{C}^l_i \|_2^2 \| \mathbf{C}^{l+1}_i \|_2^2 \cdots \|\mathbf{C}^L_i \|_2^2 \right)^{1/2} \\
     &\leq \left (\sum_{i=1}^N \prod_{k \neq l} b_k^2   \| \mathbf{Y}^l_i - \mathbf{C}^l_i \|_2^2  \right)^{1/2} = \prod_{k \neq l} b_k \left ( \sum_{i=1}^N  \| \mathbf{Y}^l_i - \mathbf{C}^l_i \|_2^2  \right)^{1/2}  \\
     &= \prod_{k \neq l} b_k \|\mathbf{Y}^l - \mathbf{C}^l \|_2 \leq \epsilon_l \prod_{k \neq l} b_k.
     \end{align*}
     Plugging in the above result into Eq. \eqref{eq:tensor_product_proof} completes the proof. 
\end{proof}

\begin{proof}[Proof of Theorem \ref{theorem:bayesianInfo2}]
The proof is similar to that of Theorem \ref{theorem:bayesianInfo}. Specifically, as in Theorem \ref{theorem:bayesianInfo}, we first decompose the $\J_B$ estimation error:
\begin{align*}
\label{eq:JB_triangleNN}
\|\hat{\J}_B(\x_1^N) - \J_B\|_\sigma &= \left \lVert \frac{1}{N} \sum_{i=1}^N s(\x_i; \hat{\boldtheta}_N) s(\x_i; \hat{\boldtheta}_N)^T + \hat{\J}_F(\x_i) - (\J_P + \J_D) \right \rVert_\sigma \nonumber \\
& \leq \left \lVert \frac{1}{N} \sum_{i=1}^N s(\x_i; \hat{\boldtheta}_N) s(\x_i; \hat{\boldtheta}_N )^T - \J_P \right \rVert_\sigma + \left \lVert \frac{1}{N} \sum_{i=1}^N \hat{\J}_F(\x_i) - \J_D \right \rVert_\sigma \nonumber \\
& \leq \left \lVert \frac{1}{N} \sum_{i=1}^N s(\x_i; \hat{\boldtheta}_N) s(\x_i; \hat{\boldtheta}_N)^T - \frac{1}{N} \sum_{i=1}^N \nabla_{\x} \log p(\x_i) \nabla_{\x} \log p(\x_i)^T \right \rVert_\sigma \nonumber \\
& \quad  + \left \lVert \frac{1}{N} \sum_{i=1}^N \nabla_{\x} \log p(\x_i) \nabla_{\x} \log p(\x_i)^T - \J_P \right \rVert_\sigma + \left \lVert \frac{1}{N} \sum_{i=1}^N \hat{\J}_F(\x_i) - \J_D \right \rVert_\sigma.
\end{align*} 
We now bound the three terms in the above expression. For the first term, by Lemma \ref{corollary:expected_outer_bound} we have that 
\begin{equation*}
  \mathbb{E}_{\x} \left [ || s(\x; \hat{\boldtheta}_N)s(\x; \hat{\boldtheta}_N)^T - \nabla_{\x} \log p(\x) \nabla_{\x} \log p(\x)^T ||_\sigma  \right ] \leq 2 L(\hat{\boldtheta}_N) + 2 \mu_P \sqrt{2 L(\hat{\boldtheta}_N) },
\end{equation*}
so by Markov's inequality with probability $1 - \epsilon$ it holds that
\begin{equation}
\label{eq:outer_product_markov_NN}
  \left \lVert \frac{1}{N} \sum_{i=1}^N s(\x_i; \boldtheta^*) s(\x_i; \boldtheta^*)^T - \frac{1}{N} \sum_{i=1}^N \nabla_{\x} \log p(\x_i) \nabla_{\x} \log p(\x_i)^T \right \rVert_\sigma \leq \frac{1}{\epsilon} \left ( 2L(\hat{\boldtheta}_N) + 2 \mu_P \sqrt{2 L(\hat{\boldtheta}_N)} \right ).  
\end{equation}
Further, by Theorem \ref{theorem:mainNNscoreBound},
\begin{equation}
\label{eq:MainBoundwithCoveringNumber2}
    L(\hat{\boldtheta}_N) \leq L(\boldtheta^*) + \sqrt{\frac{8 B^2 \log(2/\epsilon)}{N}} + \frac{64 B \log(2/\epsilon)}{3N} + \frac{12 \sqrt{R}}{N} \left (1 +  \log(B N / 3 \sqrt{R} ) \right )
\end{equation}
with probability at least $1 - \epsilon$. Taking the union bound of \eqref{eq:outer_product_markov_NN} and \eqref{eq:MainBoundwithCoveringNumber2}, we have that 
\begin{align}
\label{eq:outer_product_markov_NN2}
      &\left \lVert \frac{1}{N} \sum_{i=1}^N s(\x_i; \boldtheta^*) s(\x_i; \boldtheta^*)^T - \frac{1}{N} \sum_{i=1}^N \nabla_{\x} \log p(\x_i) \nabla_{\x} \log p(\x_i)^T \right \rVert_\sigma \leq \nonumber \\
      & \hspace{.8cm} \frac{2}{\epsilon} \left ( L(\boldtheta^*) + \sqrt{\frac{8 B^2 \log(2/\epsilon)}{N}} + \frac{64 B \log(2/\epsilon)}{3N} + \frac{12 \sqrt{R}}{N} \left (1 +  \log(2 B N / 3 \sqrt{R} ) \right )\right ) \nonumber \\
      & \hspace{.8cm} + \frac{2 \sqrt{2} \mu_P}{\epsilon} \left ( \sqrt{L(\boldtheta^*)} + \left ( \frac{8 B^2 \log(2/\epsilon)}{N} \right)^{1/4} + \sqrt{\frac{64 B \log(2/\epsilon)}{3N}} + \sqrt{\frac{12 \sqrt{R}}{N} \left (1 +  \log(2 B N / 3 \sqrt{R} ) \right )} \right) 
\end{align}
with probability at least $1 - 2 \epsilon$.

To bound the second and third terms, we use the same arguments as used in proof of Theorem \ref{theorem:BayesianCRB}. Specifically, for the second term, note that since $\nabla_{\x} \log p(\x)$ is sub-Gaussian with norm $C_P$, by Lemma \ref{lemma:sub_gaussian} we have that for all $N$, with probability at least $1 - \epsilon$,
\begin{equation}
\label{eq:jp_bound_NN}
    \left \lVert \frac{1}{N} \sum_{i=1}^N \nabla_{\x} \log p(\x_i) \nabla_{\x} \log p(\x_i)^T - \J_P \right \rVert_\sigma  \leq C_{\boldsymbol{\Sigma}} C_P^2 \mathrm{m} \left (\sqrt{\frac{D - \log(\epsilon)}{N}} \right ).  
\end{equation}
For the third term, since $\nabla_{\x} \log p(\y_{ij} \mid \x_{i})$ with $\y_{ij} \sim p(\y \mid \x_i)$ is sub-Gaussian with norm $C_D$ by Assumption \ref{assumption:sub_gaussian}, we have that 
\begin{equation}
\label{eq:bayesianInfo_Jd_M2}
\left \lVert \frac{1}{M} \sum_{j=1}^M \nabla_{\x} \log p(\y_{ij} \mid \x_{i}) \right \rVert_{\Psi^2} \leq \frac{1}{M} \sum_{j=1}^M \left \lVert  \nabla_{\x} \log p(\y_{ij} \mid \x_{i}) \right \rVert_{\Psi^2} = C_D, 
\end{equation}
so  $\frac{1}{M} \sum_{j=1}^M \nabla_{\x} \log p(\y_{ij} \mid \x_{i})$ is also sub-Gaussian with norm bounded by $C_D$. So if \eqref{eq:Jd_estimator2} is used to estimate $\J_D$, by Lemma \ref{lemma:sub_gaussian} the fourth term can therefore be bounded as follows with probability at least $1 - \epsilon$:
\begin{align}
\label{eq:jd_bound_NN}
    \left \lVert \frac{1}{N} \sum_{i=1}^N \hat{\J}_F(\x_i) - \J_D \right \rVert_\sigma  &=  \left \lVert \frac{1}{N} \sum_{i=1}^N \left ( \frac{1}{M}\sum_{j=1}^M \nabla_\x \log p(\y_{ij} \mid \x_i) \right) \left (  \frac{1}{M}\sum_{j=1}^M \nabla_\x \log p(\y_{ij} \mid \x_i) \right )^T - \J_D \right \rVert_\sigma \nonumber \\
    &\leq C_{\boldsymbol{\Sigma}} C_D^2 \mathrm{m} \left (\sqrt{\frac{D - \log(\epsilon)}{N}} \right ).
\end{align}
Taking $M \to \infty$ in \eqref{eq:bayesianInfo_Jd_M} and employing the same argument shows that the above bound also holds if \eqref{eq:Jd_estimator1} is used to estimate $\J_D$. 

Finally, taking the universal bound of \eqref{eq:outer_product_markov_NN2}, \eqref{eq:jp_bound_NN}, and \eqref{eq:jd_bound_NN}, adjusting the confidence level, and introducing the universal constant completes the proof. 
\end{proof}

\end{appendices}

\end{document}